\def\url@leostyle{%
 \@ifundefined{selectfont}{\def\UrlFont{\sf}}{\def\UrlFont{\scriptsize\ttfamily}}} \makeatother\urlstyle{leo}
\pgfplotsset{compat=1.16}
\newtheorem{theorem}{Theorem}
\newtheorem{proposition}[theorem]{Proposition}
\newtheorem{lemma}[theorem]{Lemma}
\newtheorem{corollary}[theorem]{Corollary}
\newtheorem{assumption}[theorem]{Assumption}
\theoremstyle{definition}
\newtheorem{definition}[theorem]{Definition}
\theoremstyle{remark}
\newtheorem{remark}[theorem]{Remark}
\numberwithin{equation}{section}
\numberwithin{theorem}{section}
\definecolor{Red}{rgb}{1.0,0,0.0}
\definecolor{Blue}{rgb}{0,0.0,1.0}
\definecolor{Green}{rgb}{0.2,0.5,0.2}
\definecolor{Purple}{HTML}{9467bd}
\def\cE{\mathcal{E}}
\def\cF{\mathcal{F}}
\def\cG{\mathcal{G}}
\def\cJ{\mathcal{J}}
\def\cL{\mathcal{L}}
\def\cM{\mathcal{M}}
\def\cP{\mathcal{P}}
\def\cU{\mathcal{U}}
\def\cW{\mathcal{W}}
\def\bA{\mathbb{A}}
\def\bE{\mathbb{E}}
\def\bL{\mathbb{L}}
\def\bN{\mathbb{N}}
\def\bP{\mathbb{P}}
\def\bQ{\mathbb{Q}}
\def\bR{\mathbb{R}}
\def\bS{\mathbb{S}}
\def\bT{\mathbb{T}}
\def\bX{\mathbb{X}}
\def\fR{\mathfrak{R}}
\newcommand{\wt}{\widetilde}
\newcommand{\1}{\mathbbm{1}}            % preferable way of writing indicator function
\newcommand{\set}[1]{\{#1\}}            % set: {xyz} to be used for inline formulas
\DeclareMathOperator{\dif}{d \!}        % used for differential, same as in commath.sty
\DeclareMathOperator*{\argmin}{arg\,min} % argmin
\DeclareMathOperator*{\argmax}{arg\,max} % argmax
\DeclareMathOperator{\avar}{\mathrm{AV}@\mathrm{R}}         % \AV@R average Value-at-risk
\DeclareMathOperator{\lip}{\mathrm{Lip}}
\DeclareMathOperator{\supp}{supp}
\DeclareMathOperator{\range}{range}
\title{Eliciting Risk Aversion with Inverse Reinforcement Learning via Interactive Questioning}
\author{
    Ziteng Cheng 
    \thanks{Ordered alphabetically.}
	\thanks{Financial Technology Thrust, The Hong Kong University of Science and Technology (Guangzhou), China. \textbf{Email:} zitengcheng@hkust-gz.edu.cn. }
 \and
    Anthony Coache    
    \footnotemark[1]
	\thanks{Department of Mathematics, Imperial College London, United Kindom. \textbf{Email:} a.coache@imperial.ac.uk. }
 \and
     Sebastian Jaimungal      
    \footnotemark[1]
	\thanks{Department of Statistical Sciences, University of Toronto, Canada. \textbf{Email:} sebastian.jaimungal@utoronto.ca. } 
}
\date{
%First Circulated: June 3, 2020\\
%This Version: November 09, 2021
\today
}
\begin{document}

\maketitle

%\begin{abstract}\noindent
%Abstract goes here Abstract goes here
%Abstract goes here Abstract goes here Abstract goes here Abstract goes here
%\end{abstract}

%\textcolor{blue}{\rule{16cm}{1mm}}

%\noindent \rule{460pt}{.4pt}

\smallskip

{\footnotesize
\begin{tabular}{l@{} p{350pt}}
 \hline \\[-.2em]
 \textsc{Abstract}: \ & We investigate a framework for robo-advisors to estimate non-expert clients' risk aversion using adaptive binary-choice questionnaires. We model risk aversion using cost functions and spectral risk measures in a static setting. We prove the finite-sample identifiability and, for properly designed questions, obtain a convergence rate of $\sqrt{N}$ up to a logarithmic factor, where $N$ is the number of questions. We introduce the notion of distinguishing power and demonstrate, through simulated experiments, that designing questions by maximizing distinguishing power achieves satisfactory accuracy in learning risk aversion with fewer than 50 questions. We also provide a preliminary investigation of an infinite-horizon setting with an additional discount factor for dynamic risk aversion, establishing qualitative identifiability in this case.

 \\[0.5em]
\textsc{Keywords:} \ & robo-advising, inverse reinforcement learning, active learning, design of experiments, spectral risk measures \\[0.5em]
% \textsc{MSC2010:} \ & 60G51, 60J25, 60J65 \\[1em]
\hline
\end{tabular}
}

\section{Introduction}\label{sec:Intro}

Robo-advisors aim to provide automated, customized wealth management services to a wide range of clients \citep[cf.][]{D'Acunto2021Robo, Capponi2022Personalized}. Providing tailored advice is contingent upon a clear understanding of the client's risk preferences, which is inherently subjective and varies by individual. However, accurately eliciting these preferences is challenging, as people often find it difficult to articulate their risk tolerance. This challenge is compounded by the robo-advisor's need for a quantitative performance criterion, which in turn requires a quantitative description of the client's risk preference.

Inverse reinforcement learning (IRL) offers a promising solution to this challenge. This field of study is dedicated to inferring agent's underlying objectives from their observed behavior. Specifically, IRL aims to estimate an agent's utility function by observing their assumed optimal policy within a known environment \citep{Ng2000Algorithms, Ng2004Apprenticeship, Ratliff2006Maximum, Ramachandran2007Bayesian, Ziebart2008Maximum}. As an illustration, suppose we have previously collected trajectories of cars on freeway entry ramps. Based on these trajectories, a typical risk-neutral IRL method would aim to estimate the utility functions of the drivers. However, this risk-neutrality assumption, often made for simplicity, may not accurately reflect real-world values. Indeed, meteorological events and road conditions may affect the safety and speed performances of drivers. A comprehensive model of human values should include components that describe risk aversion toward uncertainty in addition to utility functions.

As noted by \cite{Ng2000Algorithms}, a major issue in IRL is identifiability. This challenge becomes even more pronounced in a robo-advisor context, where modeling human risk preferences typically requires more than just a utility function, as evidenced by the Allais paradox \citep{Allais1953Le}. A further difficulty in applying IRL to a robo-advisor context stems from the client's role. The client, who corresponds to the agent in a standard IRL framework, is typically a non-expert with limited experience in wealth management. This situation presents two key issues: (i) there is little to no historical data on the client's past financial ``trajectory''; and (ii) when confronted with a sophisticated scenario comparable to real-world wealth management, the client is unlikely to determine the exact optimal policy that aligns with their own risk preferences.

To tackle these issues, we draw inspiration from an adaptive framework that allows the robo-advisor (i.e., the learner) to alter the environment in which the client (i.e., the agent) acts. The adaptive framework presents a series of evolving environments, with each alteration informed by previous interactions with the client. This framework has been widely used in psychology-related fields \citep{Wainer2000Computerized, Toubia2004Polyhedral, Cavagnaro2013Optimal, Fujita2023Adaptive} and is increasingly being adopted from an IRL perspective \citep{Lopes2009Active, Buening2022Interactive, Howes2023Towards}.

This study considers a risk preference modeled by a cost function of the state and a spectral risk measure (SRM). Popularized by \cite{Acerbi2002Spectral}, SRMs constitute a particularly useful class of convex risk measures. We acknowledge the multitude of existing models for human risk profile and the current lack of a consensus optimal framework; a list of relevant models may be found in, e.g., \cite{He2022Wisdom}. We refer to Remark~\ref{rem:SRMRationale} for our rationale of working with SRMs in this paper.

In our context, the alternative environments are presented as binary-choice questions based on monetary scenarios. We opt to employ simplistic binary-choice questions for several reasons. First, as we demonstrate in this paper, such simplistic questions are sufficient for eliciting risk profiles in a non-parametric manner, thereby offering new insights into the reach of the adaptive framework. Second, clients may find overly sophisticated questions overwhelming and difficult to handle, potentially reducing answer accuracy. Third, this simplicity helps mitigate the cost of designing an adaptive questionnaire.

Our study is conducted primarily in a static setting, with a preliminary investigation extended to a dynamic setting. We contribute to both theoretical and numerical aspects, as summarized below.
\begin{itemize}%[label=--]
\item Metric between risk aversion: To facilitate risk aversion modeling, we introduce a distance metric for SRMs in \eqref{eq:DefUtilde}. As SRMs can be characterized by a probability measure on $[0,1]$, under a mild condition, Proposition~\ref{prop:Utilde} establishes a two-sided bound relative to the Wasserstein-1 distance. This result reveals the totally boundedness of the risk aversion space under our metric, providing the necessary theoretical foundation for numerical implementation.
\item Identifiability: Theorem~\ref{thm:Separation} shows that for any two distinct risk aversions, there exists a binary question that elicits different choices. This finding leads to Theorem~\ref{thm:SuffFinite}, which provides guarantees that the adaptive framework can resolve the identifiability issue with $\varepsilon$-accuracy. As a consequence, we prove in Corollary~\ref{cor:RandomDesignConstitent} that designing questions uniformly at random leads to an asymptotically consistent estimation of the client's risk aversion.
\item Quantitative Rates: Beyond qualitative results, we define the distinguishing power of a binary question between two risk aversions in \eqref{eq:DefPsi} and \eqref{eq:DefXi}, and establish a corresponding lower bound and rate in Theorem~\ref{thm:QuantDisEnv} and Corollary~\ref{cor:LBXi}, respectively. Furthermore, under additional Lipschitz-continuity regularity, Theorem~\ref{thm:ConvRate} demonstrates that a properly designed sequence of questions achieves a convergence rate of square-root order, up to a logarithmic factor.
\item Numerical Methods: We devise IRL algorithms to design questions by maximizing the expected distinguishing power. The efficacy of the proposed methods is supported by two simulated experiments in Section~\ref{sec:Experiments}:
\begin{itemize}%[label=$\rhd$]
\item The first experiment assumes a finite set of candidate risk aversions. A comparison with uniformly random sampled questions highlights the necessity of purposeful design for improved efficiency.
\item  The second experiment uses a proxy space that closely resembles the space of risk aversions (see Proposition~\ref{prop:PlambdaJ}). This method, where uncertainty in the client's risk aversion is represented by particles, has the ``true'' risk aversion formulated from a finer proxy space, while inference is conducted on a coarser subspace to reflect potential model misspecification. The results show that fewer than 50 questions are sufficient for satisfactory accuracy.
\end{itemize}
\item Qualitative results in more general settings: We conduct preliminary investigations in more general settings. In Section~\ref{subsec:MultipleChoice}, we show that if the cost function is known, a single multiple-choice question can distinguish between any finite set of candidate risk aversion profiles, provided the state space is sufficiently rich. In Section~\ref{subsec:InfHor}, we investigate a dynamic setting by additionally introducing a discount factor into the client's risk profile. For this case, we establish the existence of an analogous distinguishing binary-choice question.
\end{itemize}

The remainder of the paper is structured as follows. We first introduce some notations and the detailed setup for the static setting in Section~\ref{sec:Pre}. Section~\ref{sec:AuxQuantities} provides important quantities for the theoretical results within this work, including the optimality gap and distinguishing power. We then proceed to investigate identifiability in Section~\ref{sec:Iden} by establishing the existence of distinguishing questions. We derive under mild assumptions the explicit lower bound on the optimal distinguishing power in Section~\ref{sec:LowerBound}, and analyze under stronger assumptions the convergence rate of interactive questionnaires in Section~\ref{sec:ConvRate}. Section~\ref{sec:Experiments} is dedicated to the discussion of IRL algorithms and the empirical evaluation of the proposed design methods. Finally, we provide some discussions on potential further works in Section~\ref{sec:Extensions}, and conclude the paper in Section~\ref{sec:Conclusion}. The literature review and proofs are deferred to Appendices~\ref{sec:RelatedWorks} and~\ref{sec:Proofs}, respectively.

\section{Preliminaries}\label{sec:Pre}

\paragraph{Main goal.} In this paper, our primary objective is to investigate the IRL problem for an agent's risk aversion within an iterative experimental setting. This setting enables us to exercise complete control over the environment in which the agent operates. By observing the agent's policy, which is presumed to be optimal given their risk aversion, we aim to deduce the agent's risk aversion. In the remaining of this paper, following the convention in robo-advising, we refer to the agent as the client. Next, we give some notations for risk assessment to analyze this learning problem in the one-period case.

\paragraph{Notations.} $\delta_z$ is the Dirac measure at $z$. For a finite set $\bS$, we use $|\bS|$ to denote the cardinality. For a real-valued random variable $Z$ , we let $F_Z$ be the CDF of $Z$, and define $F^{-1}_Z(u) := \inf\{r\in\bR:F_Z(r)\ge u\}$. We let $\cP([0,1])$ be the set of Borel probability measures on $[0,1]$.

\paragraph{Spectral Risk Measures (SRMs).}  For any $\mu\in\cP([0,1])$ and real-valued integrable random variable $Z$, we define the following 
\begin{align}\label{eq:Defrhomu}
\rho_{\mu}(Z) := \int_0^1\avar_{1-\alpha}(Z)\;\mu(\dif\alpha),
\end{align}
where, for $\kappa\in(0,1]$,%\footnote{$\avar_0(Y):=\inf\set{r\in\bR:\bP(Y>r)}=0$, i.e., $\avar_0$ is the essential supremum}
\begin{align}\label{eq:DefAVaR}
\avar_\kappa(Z) := \inf_{r\in\bR}\left\{r+\frac{1}{\kappa}\bE((Z-r)_+)\right\},
\end{align}
i.e.,  $\avar$ is the average value at risk also called conditional value-at-risk \citep[cf.][]{Rockafellar2000Optimization}. In particular, $\rho_{\delta_0}$ is the expectation, i.e., $\rho_{\delta_0}(Z) = \avar_1(Z) = \mathbb{E}(Z)$. This class of risk measures is convenient to characterize trade-offs between different risk-aware objectives, for instance with a convex combination $\mu = \tau \,\delta_{0} + (1-\tau)\, \delta_{\kappa}$ (with $\tau\in(0,1)$, $\kappa\in(0,1]$) the agent emphasizes risk while still valuing loss.

\begin{remark}\label{rem:SRMRationale}
Our rationale for selecting SRMs, despite their potential limitations, is as follows:
\begin{itemize}
\item As a coherent risk measure, SRMs encourage risk diversification \citep[cf.][]{Artzner1999Coherent}, making them suitable for downstream tasks like portfolio management. In fact, SRMs constitute a key subclass of law-invariant convex risk measures \citep{Kusuoka2001Law, Jouini2006Law}, which provides a solid basis before venturing into more general risk measure formulations. An additional feature of SRMs is the discouragement of stochastic behavior \citep{Delage2019Dicesion}, which could be desirable for limiting operational risk.
\item SRMs can be characterized by a probability measure on $[0,1]$. The formulation of SRMs is inherently non-parametric,\footnote{Informally speaking, the richness of the ``parameter space'' is akin to that of an infinite-dimensional space.} yet remains tractable, providing an effective starting point for exploring non-parametric setups.
% \item In the light of \cite{Kusuoka2001Law, Jouini2006Law}, SRMs constitute a key subclass of law-invariant convex risk measures. This makes them a suitable foundation before venturing into more general risk measure formulations.
\end{itemize}
\end{remark}

Let 
\begin{align}\label{eq:Defsigma}
\sigma_\mu(\alpha):=\int_{[0,\alpha]} \frac1{1-r}\, \mu(\dif r), \quad \alpha\in[0,1).
\end{align}
The result below regards the properties of $\rho_\mu$, the proof of which can be found in, for example, \citep{Acerbi2002Spectral}, \citep[][Section 6.3.4]{Shapiro2021book}.
\begin{lemma}\label{lem:Spectral}
For $\mu\in\cP([0,1])$ satisfying $\mu(1)=0$, we let $\sigma_\mu$ be defined in \eqref{eq:Defsigma}. Then, $\sigma_\mu$ is nonnegative, nondecreasing, right continuous, and  $\int_0^1\sigma_\mu(\alpha)\dif \alpha =1$. Moreover, $\sigma_\mu$ characterizes $\mu$ and $\rho_\mu$ in the following way
\begin{align}
\mu([0,x]) &= (1-x)\sigma_{\mu}(x) + \int_0^{x} \sigma_\mu(\alpha)\dif\alpha,\qquad r\in[0,1)\label{eq:Idenmusigma}\\
\rho_\mu(Z) &= \int_0^1\sigma_\mu(\alpha)F^{-1}_Z(\alpha)\dif\alpha.
\label{eq:DefSRMOG}
\end{align}
\end{lemma}

It follows from Lemma~\ref{lem:Spectral} that $\sigma_\mu$ is a density function on $[0,1]$. We thereby introduce a corresponding probability measure on $[0,1]$, denoted by  $\tilde\mu$, with density $\sigma_\mu$. In particular, we have
\begin{align}\label{eq:Defmutilde}
\tilde\mu([0,x]) = \int_0^x \sigma_{\mu}(\alpha) \dif\alpha = \int_0^x \int_{[0,\alpha]}\frac{1}{1-r}\mu(\dif r), \quad x\in[0,1].
\end{align} 
By \eqref{eq:Idenmusigma}, there is a one-to-one correspondence between $\mu$ and $\tilde\mu$. For future reference, we introduce the following subsets of $\cP([0,1])$,
\begin{gather}
%\cP_\infty([0,1]):=\left\{\mu\in\cP([0,1]): \sigma_{\mu}(1) < \infty\right\},\\
\cP_\lambda([0,1]):=\left\{\mu\in\cP([0,1]): \sigma_{\mu}(1)< \lambda\right\}, \quad \lambda\in(1,\infty],\\
\cP_{\lambda,\lambda'}([0,1]):=\left\{\mu\in\cP_\lambda([0,1]): \sigma_{\mu} \text{ is $\lambda'$-Lipschitz continuous} \right\}, \quad \lambda'\in(0,\infty).
\end{gather}
In particular, $\cP_\infty([0,1])$ consists of all $\mu$ with bounded $\sigma_\mu$.

\paragraph{Risk aversion modeling.} We consider a finite state space $\bX=\set{x_0,x_1,x_2}$.
% In a robo-advisory context, these states may represent monetary losses of $\$0$, $\$500$, and $\$10,000$, respectively. 

\begin{remark}
While a continuum of gains and losses would more accurately reflect reality, we show in subsequent sections that this discrete framework suffices for estimating the underlying risk aversion.  Nevertheless, we acknowledge that certain research questions in continuous settings cannot be adequately addressed through discrete approximations. We will explore these extensions in future work. 
\end{remark}

We model the client's risk aversion by $(C_0,\mu_0)$, where $C_0:\bX\to[0,1]$ is a cost function of the state and $\mu_0\in\cP([0,1])$ characterizes the risk measure as listed in~\eqref{eq:Defrhomu}. In view of~\eqref{eq:DefSRMOG}, for an $\bX$-valued random variable, the client evaluates according to
\begin{align}\label{eq:IdenClient}
\rho_{\mu_0}\big(C_0(X)\big) = \int_0^1 \sigma_{\mu_0}(\alpha) F^{-1}_{C_0(X)}(\alpha)\dif\alpha.
\end{align}

In our monetary context, we assume the preference order is strict and known in advance. Following the translation-invariance and positive-homogeneity properties of SRMs \citep{Acerbi2002Spectral}, we impose a harmless normalization condition, resulting in:
\begin{align}\label{eq:Setupc0}
C_0(x_0) = 0 < C_0(x_1)=:c_0 < C_0(x_2)=1.
\end{align}
Under this setting, the only unknown component in $C_0$ is $c_0$. In what follows, given any $c\in[0,1]$, we automatically associate it with $C(x_1)=c$. We will write $(c,\mu)\in(0,1)\times\cP([0,1])$ instead of $(C,\mu)\in[0,1]^\bX\times\cP([0,1])$.

\paragraph{Interactive questioning.} 
We consider binary-choice problems with action space $\bA=\set{\text{A},\text{B}}$, where each element represents a choice label. We discuss in Section~\ref{sec:Extensions} how adding multiple choices may (or may not) improve the learning process.
Such questions can be represented by $G=(G^a)_{a\in\bA}\in\cP(\bX)^{\bA}$, where $G^a\in\cP(\bX)$ is a simplex on $\bX$. $G$ can be viewed as a matrix of size ${|\bX|\times|\bA|}$ where each column, corresponding to a $G^a$ for some $a\in\bA$, specifies the probability distribution over outcomes in $\bX$ under choice $a$. The majority of our analysis utilizes the following form of $G$,
\begin{align}\label{eq:DefGpq}
G_{p,q} = \bordermatrix{~ & \text{A} & \text{B} \cr x_0 & 1-p & 1-q \cr x_1 & p & 0 \cr x_2 & 0 & q\cr},\quad p,q\in[0,1].
\end{align}
In a robo-advisory context, a typical binary-choice question takes the following form:
\begin{center}
\fbox{\begin{minipage}{35em}
Let $\bX$ represent monetary losses of $x_0=\$0$, $x_1=\$500$, and $x_2=\$10,000$, respectively. Which of the following option would you choose? 
\begin{itemize}
\item[A.] $50\%$ chance of no loss ($x_0$) and $50\%$ chance of losing $\$500$ dollars ($x_1$)
\item[B.] $98\%$ chance of no loss ($x_0$) and $2\%$ chance of losing $\$10,000$ dollars ($x_2$)
\end{itemize}
This type of question corresponds to $G_{p,q}$ as in~\eqref{eq:DefGpq} with $p=0.5$ and $q=0.02$.
\end{minipage}}
\end{center}

Multiple rounds of questions will be conducted. At round $n$, we design a question $G_n=(G_n^a)_{a\in\bA}\in\cP(\bX)^\bA$. Provided $G_n$, the client in turn demonstrates his or her preferred options, that is the client provides the learner with
\begin{align}\label{eq:Defastar}
a_n^* \in \argmin_{a\in\bA} \rho_{\mu_0}\big(C_0(X_{G^a_n})\big), \quad \text{where } X_{G^a_n}\sim G^a_{n}.
\end{align}
Notation-wise, if $G=G_{p,q}$ for some $p,q\in[0,1]$, we set $X_{p,q}^{a} = X_{G^a_{p,q}}$, and accordingly, 
\begin{align}\label{eq:Exprrho}
\rho_{\mu}\big(C(X^{\text{A}}_{p,q})\big) = c\int^1_{1-p} \sigma_\mu(\alpha)\dif\alpha \quad\text{and}\quad \rho_{\mu}\big(C(X^{\text{B}}_{p,q})\big) = \int^1_{1-q} \sigma_\mu(\alpha)\dif\alpha.
\end{align}
Upon observing $a_n^*$, together with the observation from previous rounds, we design the next question $G_{n+1}$. This process repeats itself for a given number of rounds, and our goal is to determine $(C_0,\mu_0)$ based on previous interaction $\set{(G_n,a^*_n)}_{n=1}^N$, where $N\in\bN$ is the number of total rounds.

\begin{remark}
Condition~\eqref{eq:Defastar} implicitly assumes the client has precise preferences. However, this may not hold in practice, as clients may exhibit uncertainty when presented with options that lack clear distinction. While the investigation of uncertain/stochastic behaviors is deferred to avoid excessive technicality, some of the subsequent results may still be relevant, see, for example, Proposition~\ref{prop:g}.
\end{remark}

\section{Auxiliary quantities}\label{sec:AuxQuantities}

In this section, to aid our subsequent analysis, we introduce several auxiliary quantities. In Section~\ref{subsec:Distance}, as our problem involves the non-parametric estimation of $\mu_0\in\cP([0,1])$, we discuss a distance suitable for this context. In Section~\ref{subsec:DistPow}, we propose the core concepts to quantify which questions are more appropriate for quickly determining the client's risk aversion. Finally, in Section~\ref{subsec:IndiffLine}, we describe another key concept, namely the indifference curve for questions of the form~\eqref{eq:DefGpq}.

\subsection{Distance for \texorpdfstring{$\mu$}{mu}}\label{subsec:Distance}
Our goal involves model-agnostic estimation of $\mu_0$. The Wasserstein-$1$ distance, denoted by $\cW$, defined below in terms of the Kantorovich-Rubinstein duality \citep{kantorovich1958space}, appears to be suitable, 
\begin{align}\label{eq:DefW}
\cW(\mu,\mu') &:= \sup_{\|f\|_{\lip}\le 1} \left|\int_{0}^1 f(\alpha) \big(\mu(\dif\alpha)-\mu'(\dif\alpha)\big)\right|,
\end{align}
where $\|f\|_{\lip}:= \sup_{x,x'\in[0,1]}\frac{|f(x)-f(x')|}{|x-x'|}$ is the Lipschitz constant of $f$. In particular, it is well known that $\cP([0,1])$ with $\cW$ forms a compact (and thus totally bounded) metric space, enabling finite discretization with controllable approximation error.\footnote{To the best of our knowledge, the quantitative trade-off between discretization granularity and accuracy remains an open question.} We believe such discretization, while rarely implemented explicitly, serves as a crucial bridge between model-agnostic estimation under $\cW$ and modern computational methods that rely on discrete calculations.

In view of~\eqref{eq:IdenClient} and \eqref{eq:Setupc0}, we introduce below an alternative distance,  
\begin{align}\label{eq:DefUtilde}
\wt\cU(\mu,\mu') := \sup_{f\in\cF_{\uparrow}}\left| \int_{[0,1]} f(\alpha)\big(\tilde\mu(\dif \alpha)-\tilde\mu'(\dif \alpha)\big) \right|,\quad \tilde\mu,\tilde\mu'\in\cP([0,1]),
\end{align}
where $\tilde\mu$ is introduced in~\eqref{eq:Defmutilde} and $\cF_{\uparrow}$ is defined as the set of non-decreasing and right continuous functions $f:[0,1]\to[0,1]$.
Unlike $\cW$, the metric $\wt\cU$ directly quantifies how estimation errors propagate to the downstream task of evaluating risk on behalf of the client.

Proposition~\ref{prop:Utilde} below provides an equivalent form of $\wt\cU$ and bounds its relationship to $\cW$. The proof is deferred to Section~\ref{subsec:Pf:prop:Utilde}
\begin{proposition}\label{prop:Utilde}
For any $\mu,\mu'\in\cP([0,1])$, let $\tilde\mu,\tilde\mu'$ be as introduced in \eqref{eq:Defmutilde}, we have
\begin{align}\label{eq:IdenUtilde}
\wt{\cU}(\mu,\mu') = \sup_{p\in[0,1]}|\tilde\mu([1-p,1])-\tilde\mu'([1-p,1])| = \sup_{p\in[0,1]}\left|\int_{1-p}^{1}\big(\sigma_{\mu}(\kappa)-\sigma_{\mu'}(\kappa)\big)\dif\kappa\right|.
\end{align}
If we further assume $\mu,\mu'\in\cP_\lambda([0,1])$ for some $\lambda\ge 1$,\footnote{$\lambda\ge 1$ is needed as $\sigma_\mu$ is a density function on $[0,1]$, as argued by Lemma~\ref{lem:Spectral}.} then
\begin{align}\label{eq:BBWassUtilde}
\frac1{8}\wt\cU(\mu,\mu')^2 \le \cW(\mu,\mu') \le 2\sqrt{\lambda\wt{\cU}(\mu,\mu')} + 2\wt{\cU}(\mu,\mu').
\end{align}
\end{proposition}
In the light of the discussion above, $\wt\cU$ will serve as our primary metric on $\cP_\lambda([0,1])$ for all subsequent analysis.

\subsection{Optimality gap and distinguishing power}\label{subsec:DistPow}
We first introduce \textit{the optimality gap} of an action $a\in\bA$ under binary-choice question $G\in\cP(\bX)^\bA$ and risk aversion $(c,\mu)\in(0,1)\times\cP([0,1])$ as
\begin{align}\label{eq:DefOptGap}
\Phi\big(a,G,(c,\mu)\big) := \rho_{\mu}\big(C_\ell(X_{G^a})\big)-\min_{k\in\bA}\rho_{\mu}\big(C\left(X_{G^k}\right)\big).
\end{align}
The optimality gap $\Phi$ evaluates whether $(c,\mu)$ is consistent with the observed historical behavior, where the robo-advisor would typically substitute $a$ and $G$ based on history interactions.

An efficient estimation procedure hinges on the careful design of questions (or environments in more general IRL settings). This fundamentally depends on identifying a question that induces distinct client choices for any different $(c,\mu),(c',\mu')\in(0,1)\times\cP([0,1])$.\footnote{We say $(c,\mu),(c',\mu')\in(0,1)\times\cP([0,1])$ are different if one of the following is true: (i) $|c-c'|>0$; (ii) $\wt\cU(\mu,\mu')>0$.} For quantification purposes, we define below the \textit{distinguishing power} of a binary-choice question $G\in\cP(\bX)^\bA$ toward risk aversions $(c,\mu)$ and $(c',\mu')$:
\begin{align}\label{eq:DefPsi}
\Psi\big(G,(c,\mu),(c',\mu')\big) &:= \sqrt{\bigg(  \Big(\rho_{\mu}\big(C(X_{G^{\text{A}}})\big) - \rho_{\mu}\big(C(X_{G^{\text{B}}})\big)\Big)\Big(\rho_{\mu'}\big(C'(X_{G^{\text{B}}})\big) - \rho_{\mu'}\big(C'(X_{G^{\text{A}}})\big)\Big) \bigg)_{\!+}}.
\end{align}
It is straightforward to verify that
\begin{align}\label{eq:IdenPsi}
\Psi\big(G,(c,\mu),(c',\mu')\big) = \sqrt{\Phi\big({a^*}',G,(c,\mu)\big) \Phi\big(a^*,G,(c',\mu')\big)},
\end{align}
where 
\begin{align}
a^* \in \argmin_{a\in\bA}\rho_{\mu}\big(C(X_{G^{a}})\big) \quad\text{and}\quad {a^*}' \in \argmin_{a\in\bA}\rho_{\mu'}\big(C'(X_{G^{a}})\big).
\end{align}
Clearly, $\Psi>0$ if and only if $G$ elicits distinct client choices under $(c,\mu)$ and $(c',\mu')$. Additionally, achieving a large $\Psi$ requires obvious optimality under both $(c,\mu)$ and $(c',\mu')$.

Aside from the distinguishing power $\Psi$ defined in the absolute sense, we also define a relative version by dividing the risk values involved
\begin{align}\label{eq:DefXi}
\Xi\big(G,(c,\mu),(c',\mu')\big) := \frac{2\Psi(G,(c,\mu),(c',\mu')\big)}{ \sqrt{ \Big(\rho_{\mu}\big(C(X_{G^{\text{A}}})\big) + \rho_{\mu}\big(C(X_{G^{\text{B}}})\big)\Big)\Big(\rho_{\mu'}\big(C'(X_{G^{\text{B}}})\big) + \rho_{\mu'}\big(C'(X_{G^{\text{A}}})\big)\Big) } },
\end{align}
where the factor of $2$ in the numerator serves to normalize the expression, corresponding to the ratios of differences to sums. If the denominator is $0$, we set $\Xi:=0$.

The lemma below reveals the uniform continuity of the optimality gap $\Phi$ and distinguishing power $\Psi$, where, without loss of generality, we equip $\cP(\bX)^{\bA}$ with element-wise $1$-norm. We refer to Section~\ref{subsec:Pf:Lem:Cont} for the proof.
\begin{lemma}\label{lem:Cont}
Let $\lambda\ge 1$. Then, $\Phi$ and $\Psi$ are uniformly continuous on $\bA\times\cP(\bX)^\bA\times[0,1]\times\cP_\lambda([0,1])$ and $\cP(\bX)^\bA\times \big([0,1]\times\cP_\lambda([0,1])\big)^2$, respectively. In addition, $\Xi$ is continuous on the interior of $\cP(\bX)^\bA\times \big([0,1]\times\cP_\lambda([0,1])\big)^2$.
\end{lemma}

\subsection{Indifference curve}\label{subsec:IndiffLine}
Lastly, we introduce a key concept that underpins our main results. The \textit{indifference curve}, defined below, capture the critical trade-off for risk averse clients $(c,\mu)\in[0,1]\times\cP([0,1])$ when facing $G_{p,q}$,
\begin{align}\label{eq:Defg}
g_{c,\mu}(p):=\inf\left\{q\in[0,1]: \int_{1-q}^1\sigma_\mu(\kappa)\dif\kappa \ge c \int_{1-p}^1\sigma_\mu(\kappa)\dif\kappa\right\}, \quad p\in[0,1].
\end{align}
While our definition of the indifference curve is tailored to our setting, the underlying idea is fundamentally data-centric and model-agnostic. By collecting responses to binary choices governed by $G_{p,q}$, we can approximately identify the preference region for option A and thus infer the indifference curve. The concept of indifference curve is a cornerstone of many preceding works; we refer to~\cite{Cavagnaro2013Optimal} and the references therein. For the properties of indifference curves in our context, see Section~\ref{subsec:Pf:IndCurve}.

\section{Identifiability}\label{sec:Iden}

The interactive questioning scheme provides us with the means to identify the risk aversion of the agent. Identifiability of the agent's risk aversion can be achieved by establishing the existence of a distinguishing question for any two distinct risk aversions. Specifically, a distinguishing question leads to different client choices corresponding to the respective risk aversions. To rigorously establish the said existence, we make the following technical assumptions. 
\begin{assumption}\label{assump:Basic}
$(c_0,\mu_0)\in(0,1)\times\cP_\infty([0,1])$.
\end{assumption}

The boundedness imposed on $\sigma_{\mu_0}(1)$ plays a crucial technical role in establishing the subsequent results on the existence of a distinguishing environment (see the proof of Theorem~\ref{thm:Separation}). We recognize, however, the significance of exploring identifiability without this condition. The related results will be pursued in the future work.

Our first result, Theorem~\ref{thm:Separation}, regards the existence of a distinguishing question in the one-period case. For illustration, we provide an example of such environment in Figure~\ref{fig:DistinguishingGame} where the blue region, related to the indifference curve defined in~\eqref{eq:Defg}, is where questions make the optimal actions under $\rho_\mu$ and $\rho_{\mu'}$ distinct. In this example, we assume the two risk aversions share the same cost $c$ but have different Dirac $\mu$'s. The construction of a distinguishing question for proving Theorem~\ref{thm:Separation} is less straightforward, and we refer to Section~\ref{subsec:ProofSeparation} for the details. 
\begin{theorem}\label{thm:Separation}
For any $(c',\mu')\in(0,1)\times\cP_\infty([0,1])$ that is different from $(c_0,\mu_0)$, there exists $(p,q)\in[0,1]^2$ such that 
\begin{align*}
\argmin_{a\in\bA} \rho_{\mu_{0}}\left(C_{0}(X^{a}_{p,q})\right) \cap \argmin_{a\in\bA} \rho_{\mu'}\left(C'(X^{a}_{G_{p,q}})\right) = \emptyset.
\end{align*}
\end{theorem}

\begin{remark}\label{rmk:GenSep}
By modifying the proof of Theorem~\ref{thm:Separation}, it can be shown that, in a general case where $|\bX|\ge 3$, the existence of distinguishing $G$ still holds as long as $|C_0(\bX)|\ge 3$ and $\mu\in\cP_\infty([0,1])$. The details are available in Section~\ref{subsec:Pf:rmk:GenSep}.
\end{remark}

\begin{remark}
Following Remark~\ref{rmk:GenSep}, we note that to identify $\mu_0$, it is necessary to have $|C_0(\bX)|\ge 3$. Otherwise, the client would always select the option minimizing $\bP\big(C(X_{G^a})=1\big)$, regardless of $\mu_0$ or $\mu'$, making it impossible to construct a distinguishing $G$.
\end{remark}

\begin{figure}[htbp]
\centering
\begin{tikzpicture}[scale=1.0]
  
\begin{axis}[
    ticklabel style={font=\scriptsize},
    axis lines = middle,
    axis y line*=left,
    xmin=-0.0, xmax=0.5,
    xtick={0,0.25,0.3}, extra x ticks={0}, xticklabels={$0$,$c\kappa$,$c\kappa'$},
    ymin=0.0, ymax=0.9,
    ytick={0,0.5,0.6}, yticklabels={$0$,$\kappa$,$\kappa'$},
    xlabel=$q$, ylabel=$p$     
]
 
 \node (g00) at (0,0){};
 \node (g01) at (0.25,0){};
 \node (g02) at (0.3,0){};
 \node (g10) at (0,0.5){};
 \node (g11) at (0.25,0.5){};
 \node (g20) at (0,0.6){};
 \node (g21) at (0.3,0.6){};

 \shadedraw[left color=red!50,right color=red!50, draw=red, fill opacity=0.3] (g00.center) -- (g11.center) -- (0.25,0.9) -- (0,0.9);
 \shadedraw[left color=blue!50,right color=blue!50, draw=blue, fill opacity=0.3] (g11.center) -- (g21.center) -- (0.3,0.9) -- (0.25,0.9);
 \shadedraw[left color=black!50,right color=black!50, draw=black, fill opacity=0.3] (g00.center) -- (0.5,0.0) -- (0.5,0.9) -- (0.3,0.9) -- (g21.center);
 
 \draw[very thick] (g00.center) -- (g21.center);
 \draw[very thick] (g21.center) -- (0.3,0.9);
 \draw[very thick] (g11.center) -- (0.25,0.9);
 \draw[dashed] (g01.center) -- (0.25,0.9);
 \draw[dashed] (g02.center) -- (g21.center);
 \draw[dashed] (g10.center) -- (g11.center);
 \draw[dashed] (g20.center) -- (g21.center);

\node[anchor=east,red,text width=2.5cm] (source1) at (0.2,0.7) {always pick B regardless};
\node (destination1) at (0.125, 0.4){};
\draw[red,->](source1)--(destination1);

\node[anchor=west,blue,text width=2.0cm] (source2) at (0.34,0.75) {A under $\mu$; B under $\mu'$};
\node (destination2) at (0.275, 0.70){};
\draw[blue,->](source2)--(destination2);

\node[anchor=south,black,text width=2.5cm] (source3) at (0.42,0.3) {always pick A regardless};
\node (destination3) at (0.2, 0.3){};
\draw[black,->](source3)--(destination3);

\end{axis}
\end{tikzpicture}
\caption{Illustration of a separating environment.}
\label{fig:DistinguishingGame}
\medskip
\small
Suppose $C_0(x_1)=c_0\in(0,1)$ is known. We consider $0<\kappa<\kappa'<1$ and set $\mu_0=\delta_{1-\kappa}$, $\mu'=\delta_{1-\kappa'}$, i.e., $\rho_\mu=\avar_{\kappa}$ and $\rho_{\mu'}=\avar_{\kappa'}$. Let $G_{p,q}$ be as defined in \eqref{eq:DefGpq}. Values of $p$ and $q$ affect the optimal actions under $\rho_{\mu_0}$ and $\rho_{\mu'}$. The blue region is where $(p,q)$ make the the optimal actions under $\rho_\mu$ and $\rho_{\mu'}$ distinct.
\end{figure}

Beyond the pairwise identifiability result presented in Theorem~\ref{thm:Separation}, we further prove that the risk aversion parameter can be estimated with arbitrary precision using only finitely many interaction rounds.  To establish this, we strengthen Assumption~\ref{assump:Basic} by additionally imposing an upper bound on $\sigma_{\mu_0}(1)$.
\begin{assumption}\label{assump:sigmaBound}
There is $\lambda\ge 1$ such that $(c_0,\mu_0)\in(0,1)\times\cP_\lambda([0,1])$.
\end{assumption}

We endow $[0,1]\times\cP([0,1])$ with the sum metric
\begin{align}\label{eq:Defd}
d\big((c,\mu),(c',\mu')\big) := |c-c'| + \wt\cU(\mu,\mu').
\end{align}
The next result shows that for any desired accuracy level $\varepsilon>0$, one can construct a set of $(p,q)$ pairs, which depends only on $\lambda$ but not on the specific $(c_0,\mu_0)$, such that exhaustively querying this grid yields an estimate of $(c_0,\mu_0)$ within $\varepsilon$ accuracy. This follows from Theorem~\ref{thm:Separation}, along with the compactness of $[0,1]\times\cP_\lambda([0,1])$ and the uniform continuity of $\Psi$ under $d$. The proof is deferred to Section~\ref{subsec:Pf:thm:SuffFinite}.
\begin{theorem}\label{thm:SuffFinite}
Suppose Assumption~\ref{assump:sigmaBound} holds. For any $\lambda\ge 1$ and $\varepsilon>0$, we define 
\begin{align*}
B_{\lambda,\varepsilon}:= \left\{ (c,\mu)\in[0,1]\times\cP_\lambda(\bX) : d\big((c,\mu),(c_0,\mu_0)\big) < \varepsilon \right\}.
\end{align*}  
Then, there exist $N\in\bN$ and $p_1,q_1,\dots,p_N,q_N\in[0,1]$, independent of $(c_0,\mu_0)$, such that
\begin{align*}
\forall (c,\mu)\in B^{\mathsf{c}}_{\lambda,\varepsilon}, \quad \exists \, n\in\{1,\dots,N\} \text{ such that } a^*_n\notin \argmin_{a\in\bA} \rho_{\mu}\big(C(X^{a}_{p_n,q_n})\big).
\end{align*}
\end{theorem}

We will discuss in Remark~\ref{rmk:Nrate} the growth rate of the number of rounds $N$ in Theorem~\ref{thm:SuffFinite}. We end this section by establishing, as a corollary of Theorem~\ref{thm:SuffFinite}, that designing questions uniformly at random leads to an asymptotically consistent estimation. The proof is deferred to Section~\ref{subsec:Pf:cor:RandomDesignConstitent}. 
\begin{corollary}\label{cor:RandomDesignConstitent}
Suppose Assumption~\ref{assump:sigmaBound} holds. Let $(\Gamma_n)_{n\in\bN}$ be an IID sequence drawn from the uniform distribution on $\cP(\bX)^{\bA}$. Let $B_{\lambda,\varepsilon}$ as defined in Theorem~\ref{thm:SuffFinite}, and $(\alpha^*_n)_{n\in\bN}$ be a sequence of $\bA$-valued random variable satisfying\footnote{The existence of such random variable can be established by using the standard machinery of measurable selection \cite[cf.][Section 18]{Aliprantis2006book}. The same applies to $\cE_N$ defined below.}
\begin{align}\label{eq:Defalpha}
\alpha^*_n\in\argmin_{a\in\bA} \rho_{\mu_0}\big(C_0(X^{a}_{\Gamma_n})\big).
\end{align}
Consider
\begin{align}
\cE_N:=\sup\left\{\varepsilon\ge 0: \exists (c,\mu)\in B_{\lambda,\varepsilon} \text{ such that }\alpha^*_n\in\argmin_{a\in\bA}\rho_\mu\big(C(X^a_{\Gamma_n})\big),\, n=1,\dots,N\right\}.
\end{align}
Then, with probability $1$, $\cE_N$ converges to $0$ as $N\to\infty$.
\end{corollary}
\begin{remark}
While sampling randomly from $\cP(\bX)^\bA$ yields an asymptotically consistent estimator, its convergence speed is empirically slow. We illustrate this observation in our numerical experiments presented in Section~\ref{ssec:Experiments1}. Roughly speaking, one reason for this is that uniformly random designed questions often end up lacking distinguishing power for the remaining candidate risk aversions. 
\end{remark}

\section{A lower bound of distinguishing power}\label{sec:LowerBound}

Although Section~\ref{sec:Iden} establishes fundamental identifiability results when multiple questions are presented, it does not quantify how effectively these questions can discriminate between different risk aversions. To address this, we now derive a lower bound on the optimal distinguishing power given two different risk aversions. The proof of the theorem below is constructive and involves only $G_{p,q}$. We refer to Section~\ref{subsec:Pf:thm:QuantDisEnv} for details.
\begin{theorem}\label{thm:QuantDisEnv}
Let $(c,\mu),(c',\mu')\in[0,1]\times\cP_\lambda([0,1])$. The following statements are true:
\begin{itemize}
\item[(a)] Suppose $c>c'$, where $\varepsilon:=c-c'$, and let $\eta=\left\lceil \frac{\ln\left(\lambda\frac{(1+\varepsilon)(1-c')}{(1-\varepsilon)(1-c)^2}\right)}{\ln c - \ln c'} \right\rceil + 1$. Then, there exist $p,q\in(0,1)$ such that
\begin{align}
\Phi\big(G_{p,q}, (c,\mu), (c',\mu')\big) > (cc')^{\frac{\eta}{2}}\frac{c-c'}{\eta}
\end{align}
and
\begin{align}
\Xi\big(G_{p,q}, (c,\mu), (c',\mu')\big) > \left(\frac{c'}{c}\right)^{\frac{\eta}{2}}\frac{c-c'}{2\eta}.
\end{align}

\item[(b)] Suppose $c=c'\in(0,1)$ and consider $\tilde\mu\neq\tilde\mu'$, as defined in \eqref{eq:Defmutilde}, satisfying $\wt{\cU}(\tilde\mu,\tilde\mu') =: \varepsilon\in(0,1)$. Then, there exist $p,q\in(0,1)$ such that:
\begin{itemize}
\item[(i)] For $\varepsilon\ge 2(1-c)$,
\begin{align}
\Psi\big(G_{p,q},(c,\mu),(c',\mu')\big) >\frac{c\varepsilon^2 \ln c}{5 \ln c + 5\left(\ln\varepsilon-\ln\left(1+\frac\varepsilon4\right)\right)}
\end{align}
and 
\begin{align}
\Xi\big(G_{p,q},(c,\mu),(c',\mu')\big) >\frac{2c\varepsilon \ln c}{5 \ln c + 5\left(\ln\varepsilon-\ln\left(1+\frac\varepsilon4\right)\right)};
\end{align}
\item[(ii)] For $\varepsilon< 2(1-c)$,
\begin{align}\label{eq:LBPhiSameCDiffmu}
\Psi\big(G_{p,q},(c,\mu),(c',\mu')\big) > \frac{1}{2(1-c)}\frac{\varepsilon^3 c^{1+\frac1\varepsilon(1-c+\varepsilon)(\ln\lambda-\ln\varepsilon)}}{4\varepsilon\left(1+\frac{\ln\varepsilon-\ln2(1-c)}{\ln c}\right) + 8(1-c+\varepsilon)\ln\frac{\lambda}{\varepsilon}}
\end{align}
and
\begin{align}\label{eq:LBXiSameCDiffmu}
\Xi\big(G_{p,q},(c,\mu),(c',\mu')\big) > \frac{1}{2(1-c)}\frac{\varepsilon^2 }{4\varepsilon\left(1+\frac{\ln\varepsilon-\ln2(1-c)}{\ln c}\right) + 8(1-c+\varepsilon)\ln\frac{\lambda}{\varepsilon}}.
\end{align}
\end{itemize}
\end{itemize}
\end{theorem}

Theorem~\ref{thm:QuantDisEnv} establishes lower bounds for the distinguishing power. For notational simplicity, let $\Phi^*$ and $\Xi^*$ denote the lower bounds as listed in Theorem~\ref{thm:QuantDisEnv} for the absolute and relative distinguishing power, respectively. A straightforward analysis reveals that $\Phi^*$ exhibits exponential decay, whereas $\Xi^*$ decays at a polynomial rate, as can be verified through additional calculations below.
\begin{corollary}\label{cor:LBXi}
Consider the two cases of Theorem~\ref{thm:QuantDisEnv}:
\begin{itemize}
\item[(a)] When $\varepsilon=c-c'$ is sufficiently small, we have $\eta \sim \frac{1}{\varepsilon}\ln\left(\frac{\lambda}{1-c}\right)$, and thus
\begin{align}
\Xi^* \sim \left(1-\frac{\varepsilon}{c}\right)^{\frac{\eta}{2}}\frac{\varepsilon}{2\eta} \sim \left(\frac{1-c}{\lambda}\right)^{\frac1c} \frac{\varepsilon^2}{\ln\left(\frac{\lambda}{1-c}\right)}; %\frac{\varepsilon^{2+\frac1c}}{(2\lambda)^{\frac1c}\ln\big(\frac{2\lambda}{\varepsilon(1-c)}\big)}
\end{align}
\item[(b)] When $\varepsilon=\wt\cU(\mu,\mu')$ is sufficiently small, we have
\begin{align}
\Xi^* \sim \frac{\varepsilon^2}{(1-c)\ln\frac{\lambda}{\varepsilon}}.
\end{align} 
\end{itemize}
\end{corollary}

\begin{remark}\label{rmk:Nrate}
By incorporating the lower bound on $\Phi$ from Theorem~\ref{thm:QuantDisEnv} into the proof of Theorem~\ref{thm:SuffFinite}, we observe that $N$ in Theorem~\ref{thm:SuffFinite} grows at a rate of $e^{\frac{K}{\varepsilon}}$ for some $K>0$, modulo polynomial factors. However, this rate does not necessarily reflect the convergence if adaptive question design is used. The corresponding convergence analysis is addressed in the following section under stronger regularity assumptions.
\end{remark}

\section{Convergence rate with design}\label{sec:ConvRate}

In this section, we study the convergence rate of interactive questioning under stronger regularity assumption than Assumption~\ref{assump:sigmaBound}, where we further assume that $\sigma_{\mu_0}$ is Lipschitz continuous.
\begin{assumption}\label{assump:sigmaLip}
$(c_0,\mu_0)\in(0,1)\times\cP_{\lambda,\lambda'}([0,1])$.
\end{assumption}
\begin{remark}
As will become apparent subsequently, the proofs of the results in this section, particularly those in Section~\ref{subsec:Pf:prop:g}, remain valid, with moderate modifications, even under the weaker assumption of local Lipschitz continuity near $\alpha=1$. The stronger global Lipschitz condition is imposed here solely for clarity of exposition. We further conjecture that the convergence rate analysis may be extended to H\"older-type continuity, though such extensions are left for future work.
\end{remark}

Theorem~\ref{thm:ConvRate} establishes an upper bound on the convergence rate of iterative questioning. Its proof is given immediately below. The key steps involve estimating the indifference curve $g_{c_0,\mu_0}$ and then analyzing the inverse problem of recovering $(c_0,\mu_0)$ from this estimate. A perturbation analysis for this inverse problem is presented in Proposition~\ref{prop:g}, after Theorem~\ref{thm:ConvRate}.
\begin{theorem}\label{thm:ConvRate}
Suppose Assumption~\ref{assump:sigmaLip} holds. Then, there exists a designing strategy, independent of the specific choice of $(c_0,\mu_0)$, such that after $J\lceil\log_2 J\rceil$ rounds of interactions, any $(\hat c,\hat\mu)\in (0,1)\times\cP_{\lambda,\lambda'}([0,1])$ producing the same choices as in the interactions must satisfy 
\begin{align}
|c_0-\hat c| \sim O(J^{-\frac12}) \quad\text{and}\quad \wt\cU(\mu_0,\hat\mu)\sim O(J^{-\frac12})
\end{align}
\end{theorem}

\begin{proof}
Consider the uniform grid $p_j=\frac{j}{J},\,j=0,1,\dots,J$. Note that $g_{c_0,\mu_0}(0)=0$ by Lemma~\ref{lem:g} (c). For each $p_j$, it takes $\lceil\log_2 J\rceil$ bisections to estimate $g_{c_0,\mu_0}(p_j)$ to an accuracy of $J^{-1}$. In total, this requires $J\lceil\log_2 J\rceil$ rounds of interactions. Since the choices produced by $(\hat c,\hat\mu)$ are assumed to match the observed interactions, the corresponding indifference curve $g_{\hat c,\hat\mu}$ must fall within the region constrained by these interactions, which leads to
\begin{align}
\left|g_{c_0,\mu_0}(p_j) - g_{\hat c,\hat\mu}(p_j)\right| \le \frac1J, \quad j=1,\dots,J.
\end{align}
Additionally, for any $p\in[0,1]$, we let $\bar p$ be the rounding to the closest grid point $p_j$. Note that $g_{\hat c,\hat\mu}$ is $1$-Lipschitz continuous due to Lemma~\ref{lem:g} (f). By combining the above, we have
\begin{align}
\left\| g_{c_0,\mu_0} - g_{\hat c,\hat\mu} \right\|_\infty &\le \sup_{p\in[0,1]}\left\{ \left| g_{c_0,\mu_0}(p) - g_{c_0,\mu_0}(\bar p) \right| + \left| g_{c_0,\mu_0}(\bar p) - g_{\hat c,\hat\mu}(\bar p) \right| + \left| g_{\hat c,\hat \mu}(\bar p) - g_{\hat c,\hat\mu}(p) \right| \right\} \\
&= \frac1J + \sup_{p\in[0,1]}\left| g_{c_0,\mu_0}(\bar p) - g_{\hat c,\hat\mu}(\bar p) \right| = \frac1J + \max_{j=1,\dots,N} \left| g_{c_0,\mu_0}(p_j) - g_{\hat c,\hat\mu}(p_j) \right| = \frac{2}{J}.
\end{align} 
Invoking Proposition~\ref{prop:g} completes the proof.
\end{proof}

The proof of Proposition~\ref{prop:g} is deferred to Section~\ref{subsec:Pf:prop:g}.
\begin{proposition}\label{prop:g}
Consider $(c,\mu), (c',\mu') \in (0,1)\times\cP_{\lambda,\lambda'}([0,1])$ with $c\ge c'$. Let us denote $\varepsilon:=\left\| g_{c,\mu} - g_{c',\mu'} \right\|_\infty$. Then,
\begin{align}\label{eq:UBAbsDiffc}
\left| c-c' \right| \le \left(1 + \frac{\lambda'}{\lambda}\right)\sqrt{\varepsilon}
\end{align}
and
\begin{align}\label{eq:UBAbsDiffmu}
\wt\cU(\mu,\mu') \le \fR_{c,\mu}(\varepsilon):=\frac52\lambda'c^{\lfloor \frac{\ln\varepsilon}{2\ln c} \rfloor} \!+\! \left(\lambda+\left(\lambda \!+\!\lambda'c^{\lfloor\frac{\ln\varepsilon}{2\ln c}\rfloor}\right)^2\! \right)\!\! \left( \frac{\sqrt{\varepsilon}}{(1-c)c} \!+\! \frac{\lambda'\varepsilon \ln \varepsilon}{2\sigma_\mu\big(1-g_{c,\mu}(1)\big)(1-c) \ln c} \right).
\end{align}
\end{proposition}
\begin{remark}
Note that $\fR_{c,\mu}(\varepsilon) \sim O(\sqrt{\varepsilon})$.
\end{remark}
\begin{remark}
In this remark, we derive an upper bound for $\sigma_{\mu}(1-g_{c,\mu}(1))^{-1}$. Note that by Lemma~\ref{lem:g} (a), we have $\int_{1-g_{c,\mu}(1)}^1\sigma_\mu=c\int_0^1\sigma_\mu=c$. With the monotonicity of $\sigma_\mu$ (i.e., Lemma~\ref{lem:Spectral}) and the upper bound on $\sigma_{\mu_0}(1)$ (i.e., Assumption~\ref{assump:sigmaBound}), we obtain
$$
1-c = 1-\int^1_{1-g_{c,\mu}(1)}\sigma_\mu(\kappa)\dif\kappa =  \int_0^{1-g_{c,\mu}(1)} \sigma_\mu(\kappa)\dif\kappa \le \sigma_\mu(1-g_{c,\mu}(1)) (1-c\lambda^{-1}).
$$
In view of~\eqref{eq:UBAbsDiffdg}, we have
\begin{align}
\sigma_{\mu}(1-g_{c,\mu}(1))^{-1} \le \frac{\lambda-c}{\lambda(1-c)} = 1+ \frac{c}{1-c}\left(1-\frac1\lambda\right).
\end{align}
Whether $\sigma_{\mu}(1-g_{c,\mu}(1))^{-1}$ is closed to the theoretical upper bound in practice requires empirical investigation.
\end{remark}
\begin{remark}
Following the proof of~\eqref{eq:UBAbsDiffc} in Section~\ref{subsec:Pf:prop:g}, it takes $O(-\ln\varepsilon)$ rounds to estimate $c_0$ with an accuracy of $\varepsilon$, without needing the knowledge of $\mu_0$. Leveraging this result, if we are dealing with $\bX=[0,1]$ and an $L$-Lipschitz $C_0:\bX\to[0,1]$, then to obtain an estimate of $C_0$ with accuracy of $\varepsilon$ under $\|\cdot\|_\infty$ by estimating $C_0$ on a grid of $x\in\bX$ with bisections, it requires $O(-L\varepsilon^{-1}\ln\varepsilon)$ rounds.
\end{remark}

\section{Simulated experiments}\label{sec:Experiments}

We propose designing questions by maximizing the distinguishing power as defined in \eqref{eq:DefPsi}. For validation, we conduct two simulated experiments under different settings. In Section~\ref{ssec:Experiments1}, we evaluate the performance of our question selection mechanisms against uniformly random designed questions for a finite set of candidate client's risk aversion. In Section~\ref{ssec:Experiments2}, we employ a more realistic setting where the client's risk aversion is synthesized from a rich parametric space, while estimation is performed on a coarser subspace. As shown in Proposition~\ref{prop:PlambdaJ}, the parametric space then closely resembles $(0,1)\times\cP_\lambda([0,1])$. Executable code for both experimental setup is available at \url{https://github.com/acoache/irl-for-risk-aversion}.

\subsection{Experiment I}\label{ssec:Experiments1}

In this section, we propose a method that designs the question for a next round based on outcomes of previous interactions for a finite set of candidate risk aversions, denoted by $$\set{(c_0,\mu_0),(c_1,\mu_1),\dots,(c_L,\mu_L)}.$$ This method hinges on a measure to convert the questions and actions of the client from previous rounds into a probability on $\set{0,1,\dots,L}$, reflecting the learner's confidence on the candidate risk aversions. Let $(G_n)_{n=1}^N$ be the questions and $(a^*_n)_{n=1}^N$ be the optimal action generated according to~\eqref{eq:Defastar} up until a certain round $N$. We introduce below the probability on the candidate risk aversions:
\begin{align}\label{eq:DefQ}
\bQ_N(\set{\ell}) \propto \exp\left(-k\sum_{n=1}^N\Phi(a^*_n;G_n,(c_\ell,\mu_\ell))\right), \quad \ell = 0,1,\ldots,L,
\end{align}
where $k>0$ is a learning parameter.

Utilizing the measure $\bQ_N$ possesses some advantages for discrete distributions. On one hand, it integrates the intuitive learning procedure that eliminates candidate risk aversions individually (occasionally achieving collateral elimination), aligning closely with Theorem~\ref{thm:Separation}. On the other hand, it facilitates the development of a scheme that enables the bulk elimination of candidate risk aversions. Finally, although not explicitly addressed in this paper, we believe that an approach like the one we propose here could potentially provide some degree of robustness against errors arising from the agent selecting a suboptimal action.

One approach to determine the next question is to randomly sample the next question $G_{N+1}$ from the uniform distribution on $\cP(\bX)^{\bA}$. In that case, using Corollary~\ref{cor:RandomDesignConstitent}, we have with probability $1$ that $\lim_{N\to\infty}\bQ_N(\{0\})=1$ and $\lim_{N\to\infty}\bQ_N(\{\ell\})=0$ for all $\ell \neq 0$. This naive approach, however, does not use history interactions into consideration and hence converges quite slowly. A more natural choice to design $G_{N+1}$ is to distinguish the two risk aversions in which the learner has the most confidence, namely
\begin{align}\label{eq:DesignGOnetoOne}
G_{N+1} \in \argmax_{G\in\cP(\bX)^\bA} \Psi\Big(G,(c_{i^*_N},\mu_{i^*_N}),(c_{j^*_N},\mu_{j^*_N})\Big),
\end{align}
where $(i^*_N,j^*_N)$ is the pair of entries with the largest and second largest probabilities assigned by $\bQ_N$.\footnote{In the case of a tie for the largest probability, we arbitrarily select two of the tied risk aversions. In the case of a tie for the second largest probability, we take the largest value and arbitrarily select one of the second largest probabilities.}
In view of Theorem~\ref{thm:Separation}, for any $i\neq j$, there exists a distinguishing $G$, and thus for such $G$, we have $\Psi(G,(c_i,\mu_i),(c_j,\mu_j))>0$.
We note that with a sufficiently large learning parameter $k$ in~\eqref{eq:DefQ}, the design based on~\eqref{eq:DesignGOnetoOne} resembles the elimination procedure that we randomly pick two risk aversions and then find a separating question in the line of Theorem~\ref{thm:Separation}. 

However, when ${\bQ_N}$ is evenly spread over $\set{0,1,\dots,L}$, the client's true risk aversion $\{0\}$ may not belong to $\left\{i^*_N,j^*_N\right\}$. Consequently, optimizing~\eqref{eq:DesignGOnetoOne} may not yield a question with a strong distinguishing capability. Bearing this in mind, we put forward an alternative criterion for designing $G_{N+1}$,
\begin{align}\label{eq:DesignGBatch}
G_{N+1} \in \argmax_{G\in\cP(\bX)^\bA} \bE\Big[ \Psi(G,(c_\eta,\mu_\eta),(c_\zeta,\mu_\zeta)) \Big],
\end{align}
where $\eta\sim\bQ_{N}$ and $\zeta\sim\bQ_{N}|\zeta\neq\eta$. One can interpret the design rule in~\eqref{eq:DesignGOnetoOne} has a greedy approach, while~\eqref{eq:DesignGBatch} allows the learner to explore other risk aversions. This method may also help eliminate a batch of candidate risk aversions. It remains unknown to us whether there exists an question that fully distinguishes one set of risk aversions from its complement.

In this set of experiments, we validate the theoretical result derived in Corollary~\ref{cor:RandomDesignConstitent} and our proposed question design approaches.
To this end, we investigate the convergence behaviors of the learning algorithms when selecting $G\in\cP(\bX)^\bA$ (i) fully at random; (ii) according to~\eqref{eq:DesignGOnetoOne}, i.e. by choosing a question that maximizes the distinguishing power between the largest probabilities assigned by $\bQ_{N}$; and (iii) according to~\eqref{eq:DesignGBatch}, i.e. by choosing a question that maximizes the expected distinguishing power under $\bQ_{N}$.

We next describe the setup to benchmark all different environment design strategies. This setting is performed for 25 runs, which allows us to describe the convergence behaviors of the different approaches. As mentioned in Section~\ref{sec:Pre}, we consider $|\bX| = 3$, $|\bA| = 2$, and cost functions satisfying~\eqref{eq:Setupc0}. For each run, we fix a set of 500 transition probabilities $G\in\cP(\bX)^\bA$ representing the questions which the learner, or robo-advisor, can choose from at every round. For simplicity, here, we consider risk measures of the form~\eqref{eq:Defrhomu} with $\mu = \tau\; \delta_{0} + (1-\tau)\; \delta_{\kappa}$, where the values of $\tau$ and $\kappa$ are specified below. This characterizes a trade-off between risk-averse behaviors from the $\avar$ at level $\kappa$, and risk-seeking behaviors from the (risk-neutral) expectation. The finite set of risk aversions is thus composed of tuples $(C_{\ell}, \kappa_{\ell}, \tau_{\ell})$ for $\ell \in \{0,1,\ldots,L\}$.\footnote{The code notebook may be easily extended to other risk measures of the form~\eqref{eq:Defrhomu}, e.g. linear combinations of $\avar$s at different thresholds, and larger state and action spaces. In practice, we notice a decrease in convergence speed when using a larger state space, but no significant difference with more actions.}

In our first experiments, this set contains 36 tuples by taking the Cartesian product of three distinct cost functions, four different $\kappa$'s and three different $\tau$'s, more specifically
\begin{equation*}
    \kappa \in \{0.2, 0.3, 0.4, 0.5\}; \quad \tau \in \{0.25, 0.5, 0.75\}; \quad \mbox{and} \quad C(x_1) \in \{ 0.3, 0.5, 0.7\}.
\end{equation*}
The client's risk aversion is given by $\kappa_0 = 0.3$, $\tau_0 = 0.25$, and $c_0=0.5$, more precisely $C_0(\bX) = [0.0, 0.5, 1.0]$.
Figure~\ref{fig:AllNrounds-OnePeriod-Set1a} shows that all $\bQ_N(\{0\})$ converge to the Dirac measure for the true risk aversion irrespective of the question design approach. Here, we set the learning parameter of $\bQ_N$, as defined in~\eqref{eq:DefQ}, to $k=4$. Uniformly sampling environments on $\cP(\bX)^\bA$ also converges to the Dirac measure, which confirms Corollary~\ref{cor:RandomDesignConstitent}, but it takes up to 5000 rounds. In addition, we observe that both the question design approaches according to~\eqref{eq:DesignGOnetoOne} and~\eqref{eq:DesignGBatch} converge quickly to the true risk aversion, with a slight advantage for the method maximizing the expected distinguishing power under $\bQ_N$. The slowest convergence is obtained when using a uniformly sampled questions, which showcases the importance for the learner of carefully choosing questions to efficiently and quickly discover the client's risk aversion.

The learner may wish to quickly attain strong confidence on the client's risk aversion without waiting for the algorithm to fully converge. For instance, there is a limited number of questions a robo-advisor may ask to a potential client, and the learner cannot expect an individual to answer hundreds of questions. In practice, this can be achieved by tuning the learning rate $k$ of the measure $\bQ_N$. For a fixed number of rounds, increasing the learning rate leads to faster convergence to the true risk aversion, as illustrated in Figure~\ref{fig:AllLrs-OnePeriod-Set1a}. Still, the learner must carefully choose the learning rate to trade-off between convergence and optimality -- small learning rates take many rounds to converge, but too large of a learning rate may indicate a non-optimal risk aversion for suboptimal actions.

\begin{figure}[htbp]
\centering

\begin{subfigure}[t]{0.55\textwidth}
    \centering
    \includegraphics[width=0.98\textwidth]{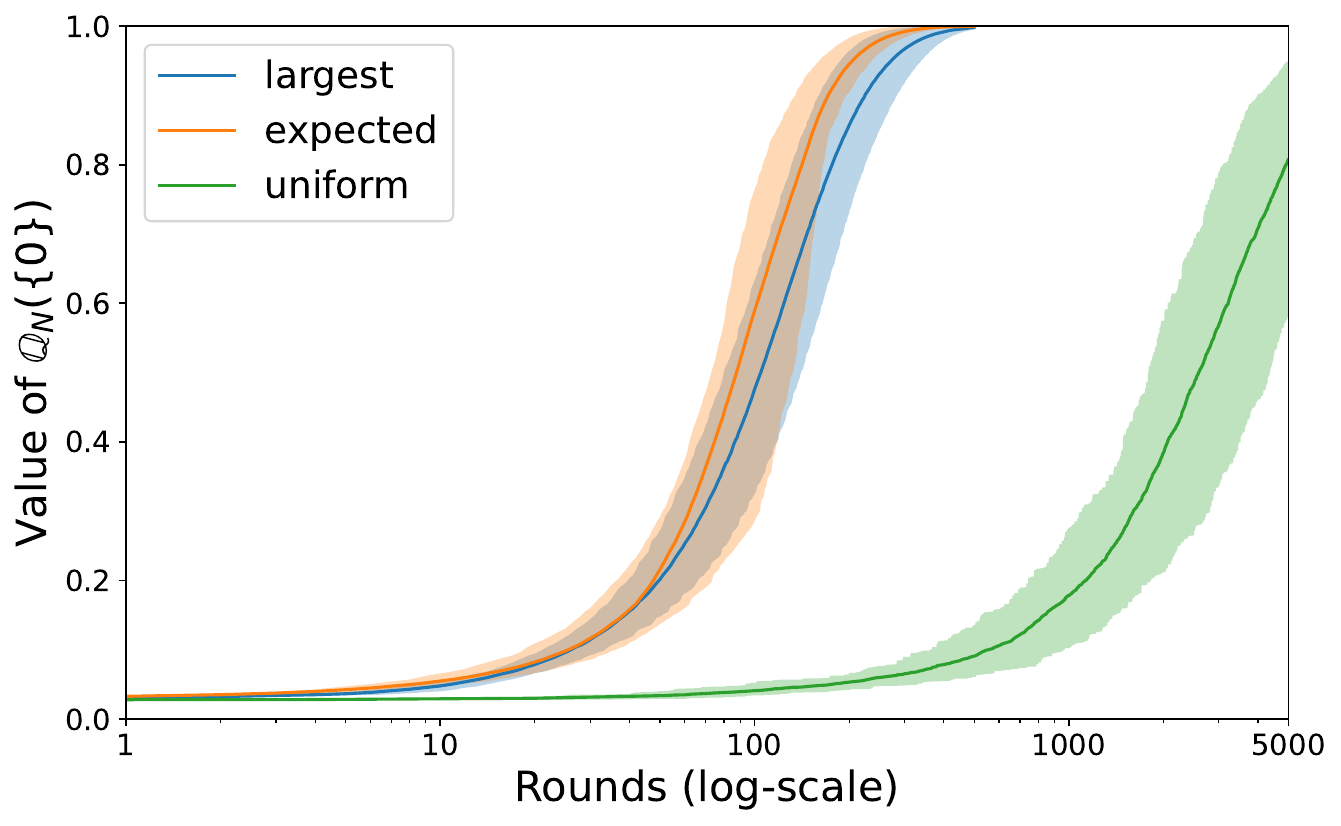}
    \caption{Fixed learning rate $k=4$.}
    \label{fig:AllNrounds-OnePeriod-Set1a}
\end{subfigure}
\hfill
\begin{subfigure}[t]{0.55\textwidth}
    \centering
    \includegraphics[width=0.98\textwidth]{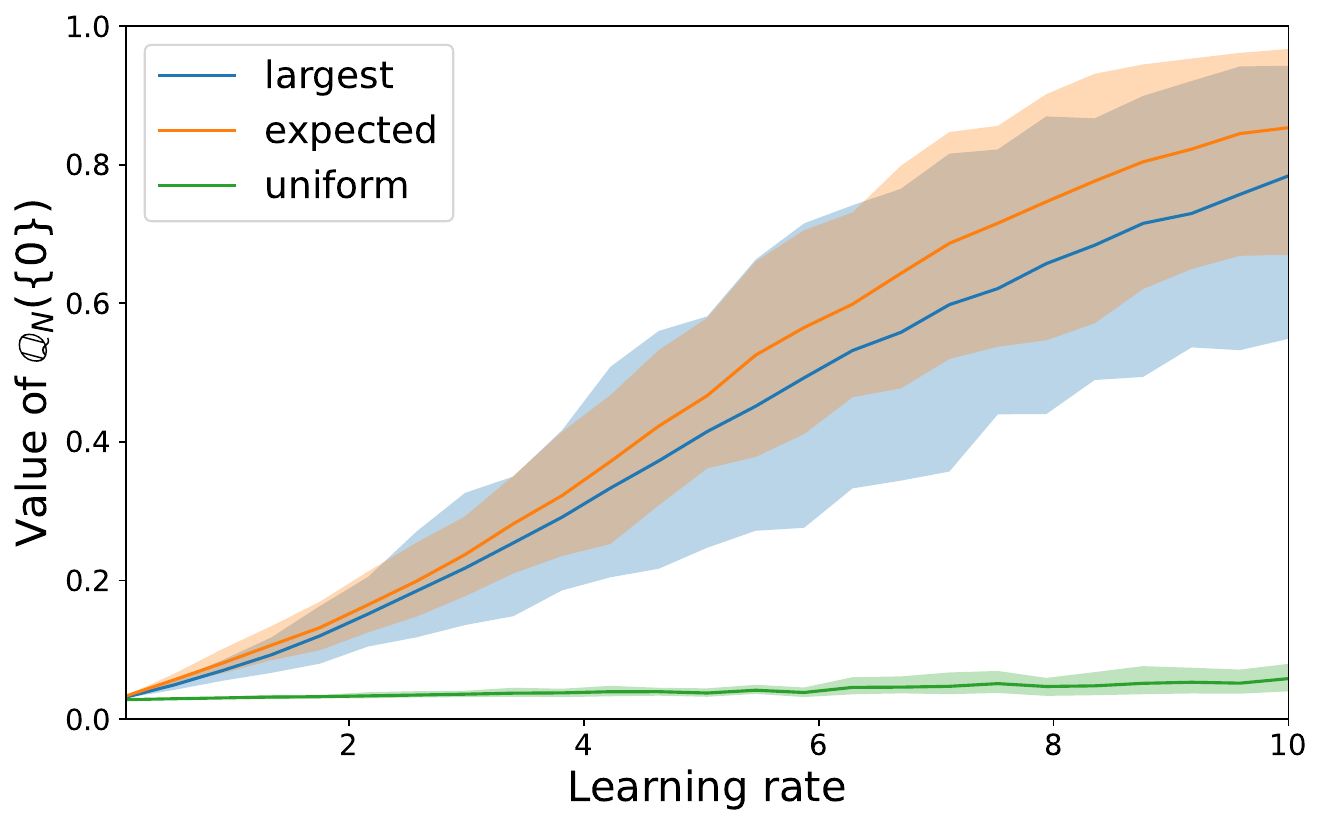}
    \caption{Fixed number of rounds $N=50$.}
    \label{fig:AllLrs-OnePeriod-Set1a}
\end{subfigure}

\caption{Convergence of $\bQ_N$ in the one-period setting.}

\medskip
\small

Measure values for the client's true risk-aversion at each round of the learning algorithm when selecting the next question fully at random (``uniform''), according to \eqref{eq:DesignGOnetoOne} (``largest''), or according to \eqref{eq:DesignGBatch} (``expected''), while varying the number of rounds (top) and the learning rate $k$ (bottom). Expectation, 10\% and 90\% quantiles are estimated over 25 runs.
\end{figure}

We now take a closer look at the choice of $G_n$ during the learning algorithm for the different methods. We display the evolution of $G_n$ at every round $n$ for a specific run in Figures~\ref{fig:Games-OnePeriod-Set1a-OnetoOne} and~\ref{fig:Games-OnePeriod-Set1a-Batch}, where each point corresponds to one of the many questions the learner may choose from. It is interesting to note the exploration-exploitation pattern \citep[see e.g.,][]{sutton2018reinforcement} with the environment design approaches that are not uniform. It appears that the learner explores different questions at the beginning of the learning phase, and then focuses on a small subset of the available questions to refine its estimation of the client's risk aversion. There is some variability seen in Figures~\ref{fig:Games-OnePeriod-Set1a-OnetoOne} and~\ref{fig:Games-OnePeriod-Set1a-Batch}. Indeed, the algorithm using the design rule~\eqref{eq:DesignGOnetoOne} constantly alternates between two environments, because the second largest value of $\bQ_N$ changes at each round, as illustrated in Figure~\ref{fig:Gibbs-OnePeriod-Set1a-OnetoOne}. As well, for the approach using the design rule \eqref{eq:DesignGBatch}, the exploitation pattern reappears once the measure $\bQ_N$ attributes most of the weight on a single value. In Figures~\ref{fig:Gibbs-OnePeriod-Set1a-OnetoOne} and~\ref{fig:Gibbs-OnePeriod-Set1a-Batch}, we observe the values $\bQ_N$ for all risk aversion candidates in for a specific run, which shows that $\bQ_N(\ell)$ converges to zero for all $\ell \in \{1,\ldots,L\}$. Figure~\ref{fig:OnePeriod-Set1b} shows similar behaviors when using $k=10$. With larger learning rates, the learner correctly identifies the client's true risk aversion with 90\% confidence in approximately 30 questions.

\begin{figure}[htbp]
\centering

\begin{subfigure}[t]{0.48\linewidth}
    \centering
    \includegraphics[width=0.90\textwidth]{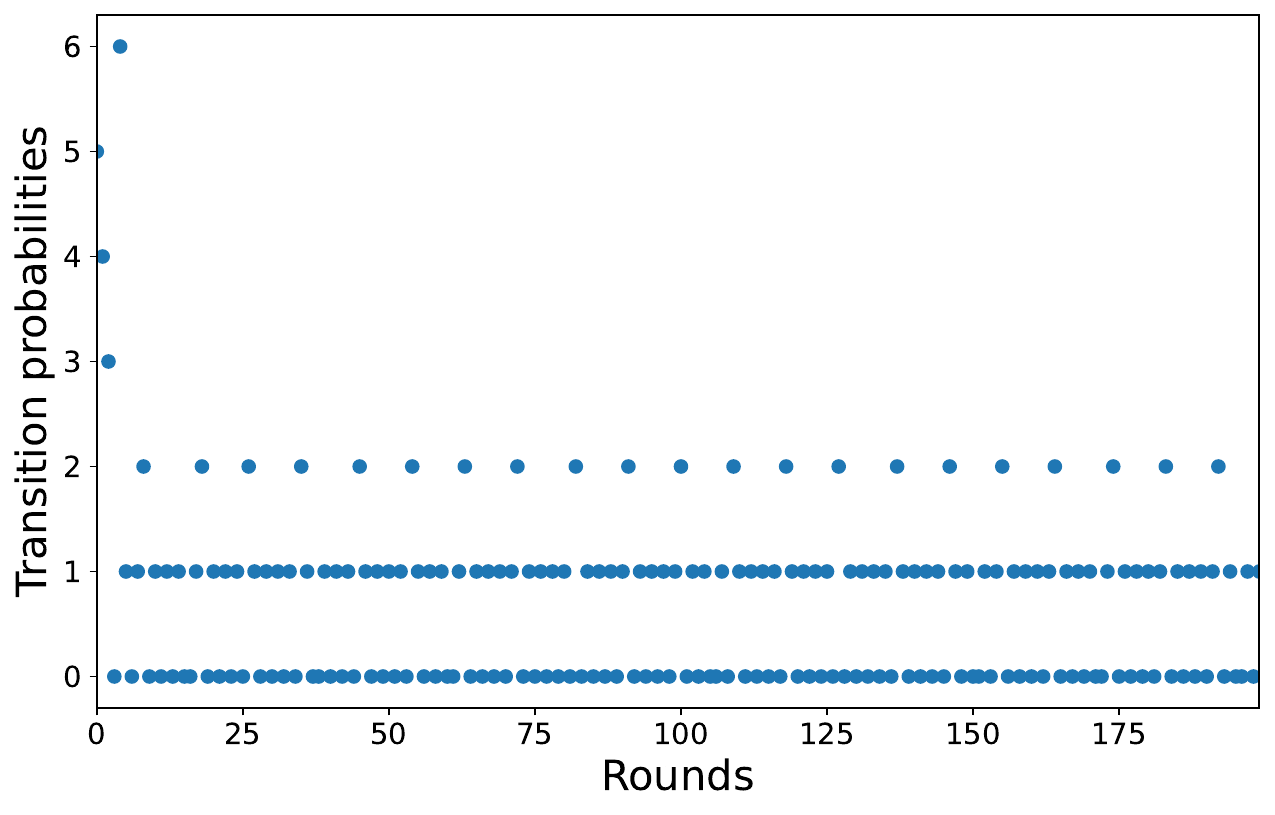}
    \caption{According to \eqref{eq:DesignGOnetoOne}.}
    \label{fig:Games-OnePeriod-Set1a-OnetoOne}
\end{subfigure}
\hfill
\begin{subfigure}[t]{0.48\linewidth}
    \centering
    \includegraphics[width=0.90\textwidth]{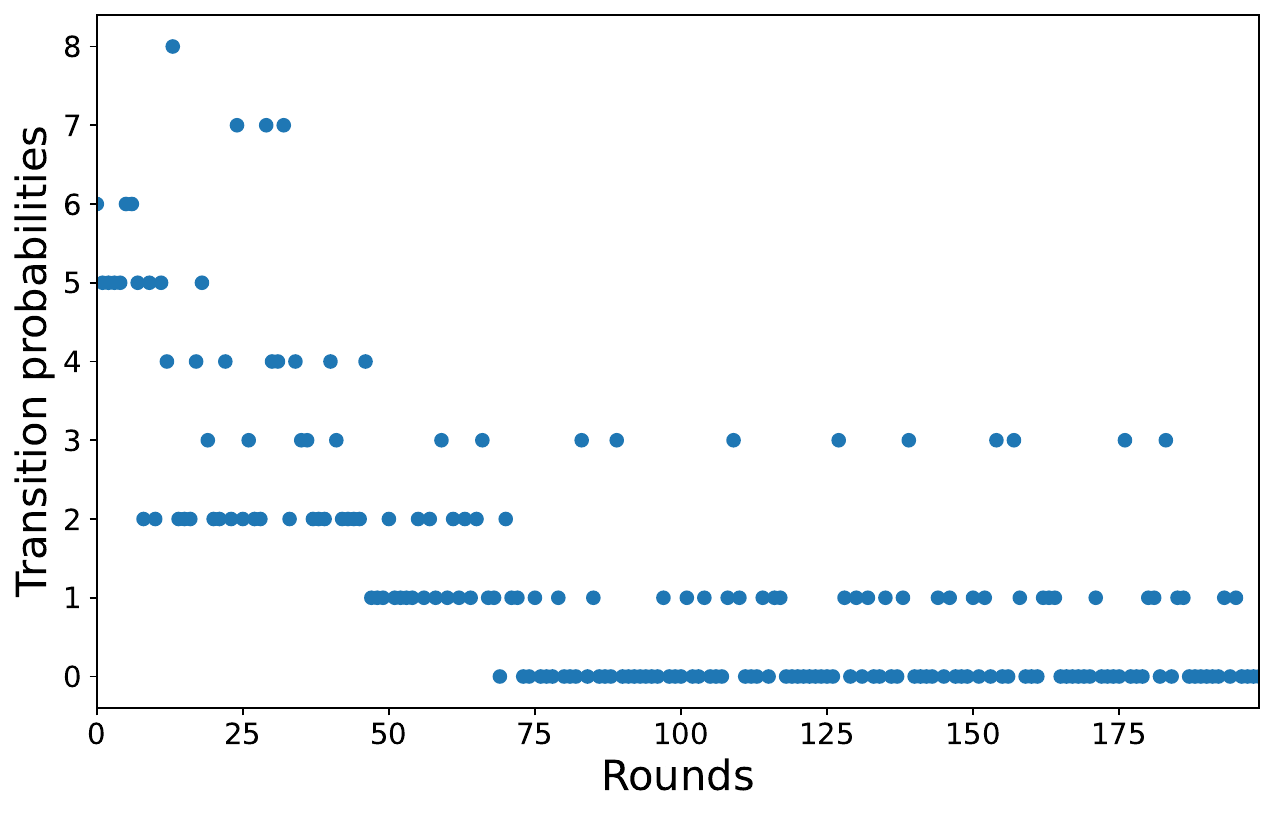}
    \caption{According to \eqref{eq:DesignGBatch}.}
    \label{fig:Games-OnePeriod-Set1a-Batch}
\end{subfigure}

\begin{subfigure}[t]{0.48\linewidth}
    \centering
    \includegraphics[width=0.90\textwidth]{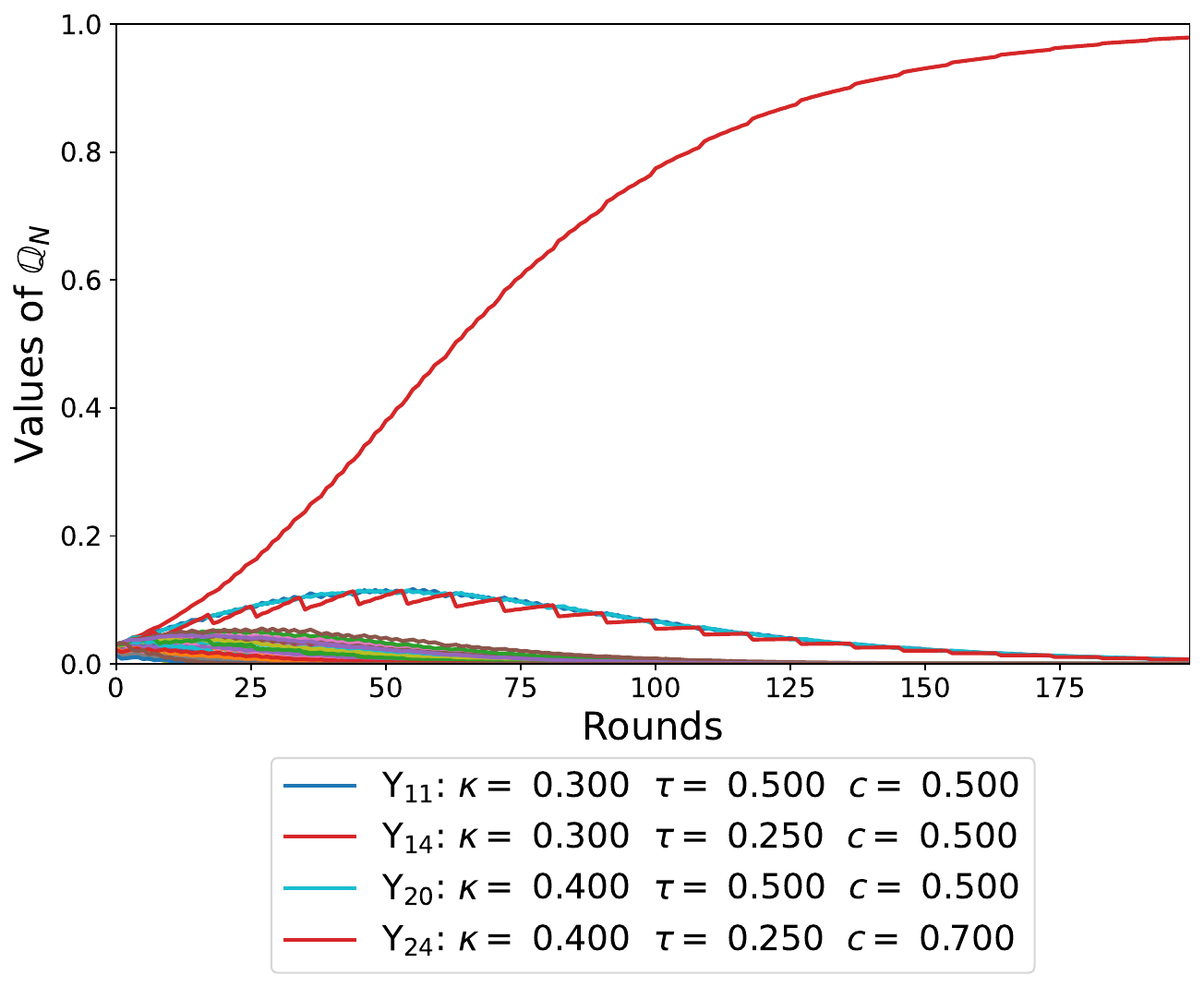}
    \caption{According to \eqref{eq:DesignGOnetoOne}.}
    \label{fig:Gibbs-OnePeriod-Set1a-OnetoOne}
\end{subfigure}
\hfill
\begin{subfigure}[t]{0.48\linewidth}
    \centering
    \includegraphics[width=0.90\textwidth]{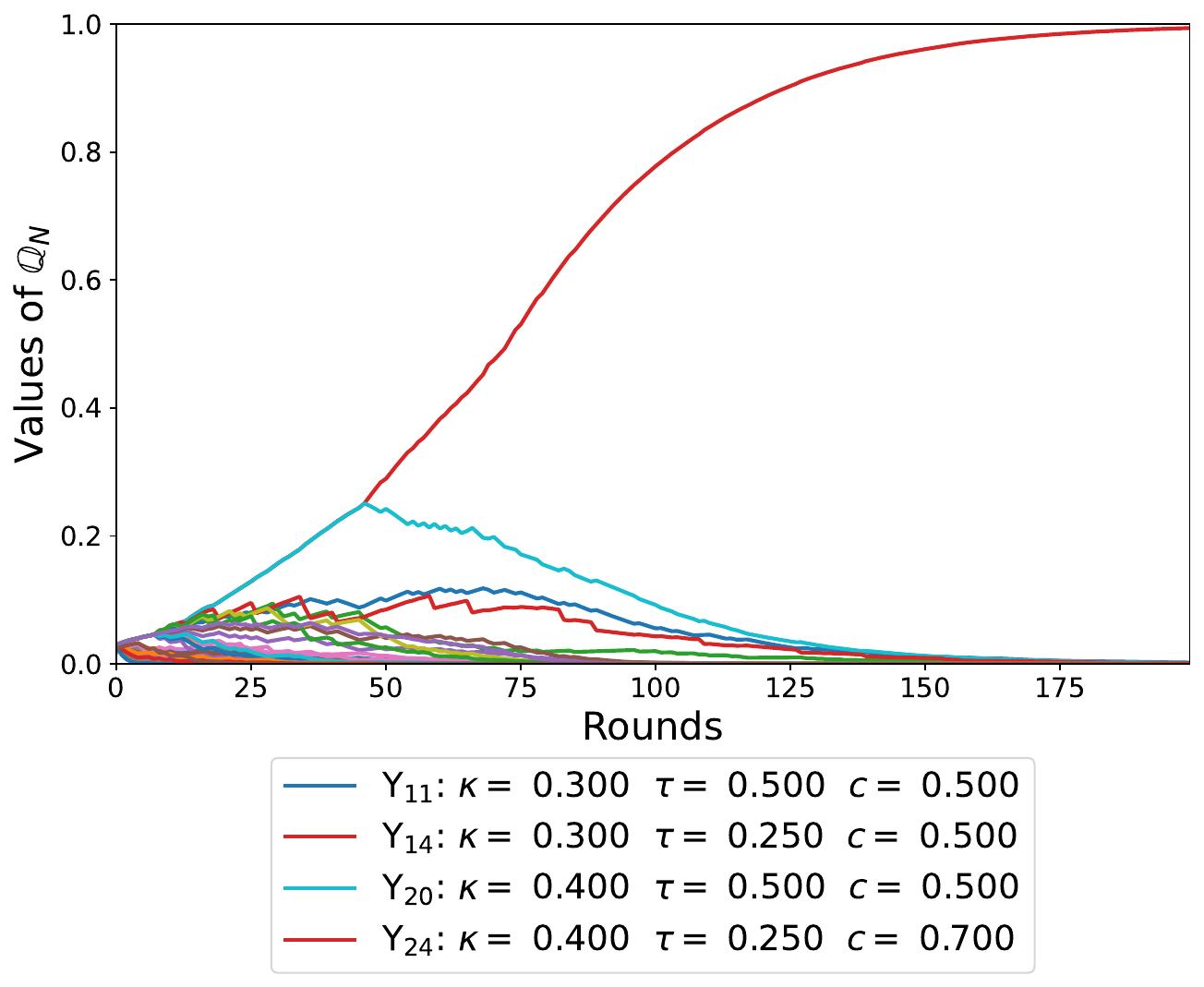}
    \caption{According to \eqref{eq:DesignGBatch}.}
    \label{fig:Gibbs-OnePeriod-Set1a-Batch}
\end{subfigure}

\caption{Evolution of the designed questions and $\bQ_N$ in the one-period setting for $k=4$.}
\label{fig:OnePeriod-Set1a}
\medskip
\small
Top: Evolution of the selected questions at each round of the learning algorithm, where each point represents the label of the chosen transition probability matrix. Bottom: Evolution of $\bQ_N$ at each round of the learning algorithm, where each line corresponds to one of the many risk aversion candidates. The legends in these plots give the closest risk aversion candidates to the client's risk aversion.
\end{figure}

\begin{figure}[htbp]
\centering

\begin{subfigure}[t]{0.48\linewidth}
    \centering
    \includegraphics[width=0.90\textwidth]{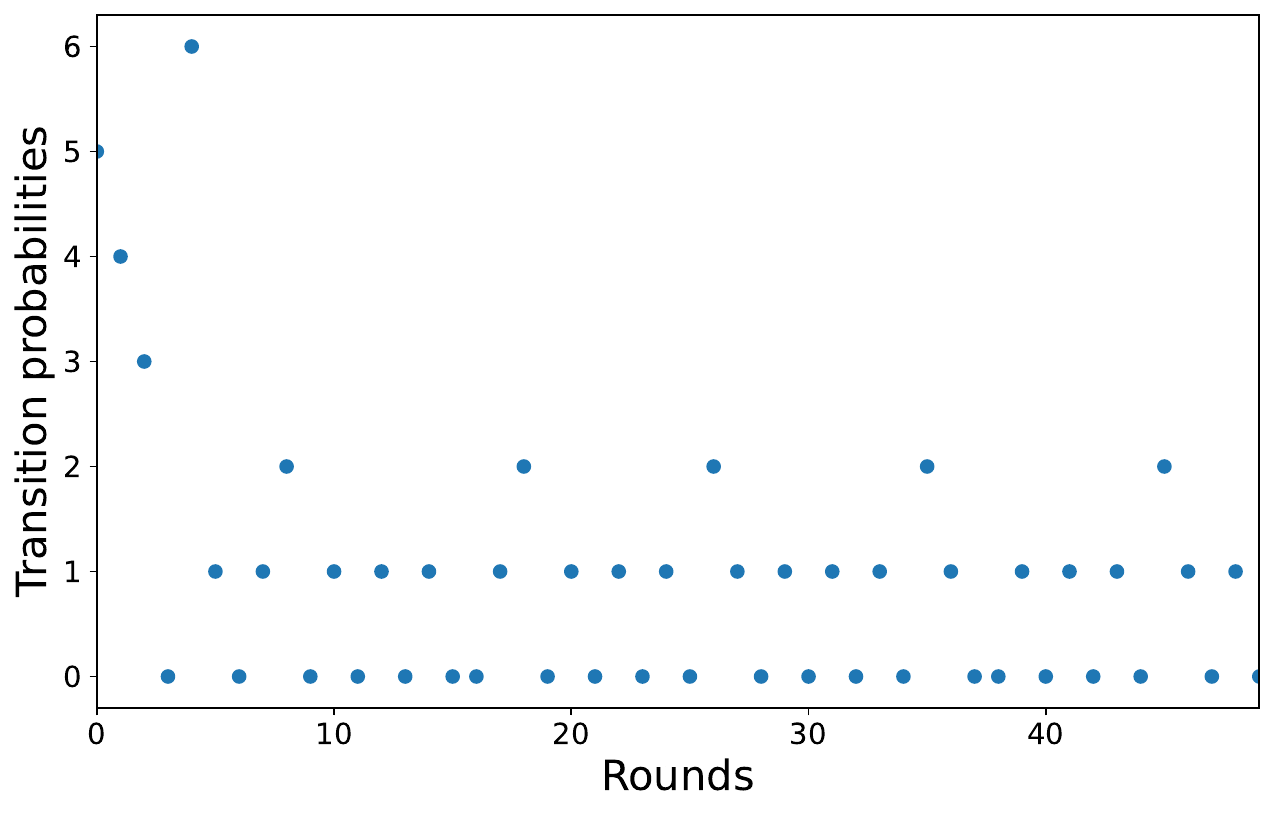}
    \caption{According to \eqref{eq:DesignGOnetoOne}.}
    \label{fig:Games-OnePeriod-Set1b-OnetoOne}
\end{subfigure}
\hfill
\begin{subfigure}[t]{0.48\linewidth}
    \centering
    \includegraphics[width=0.90\textwidth]{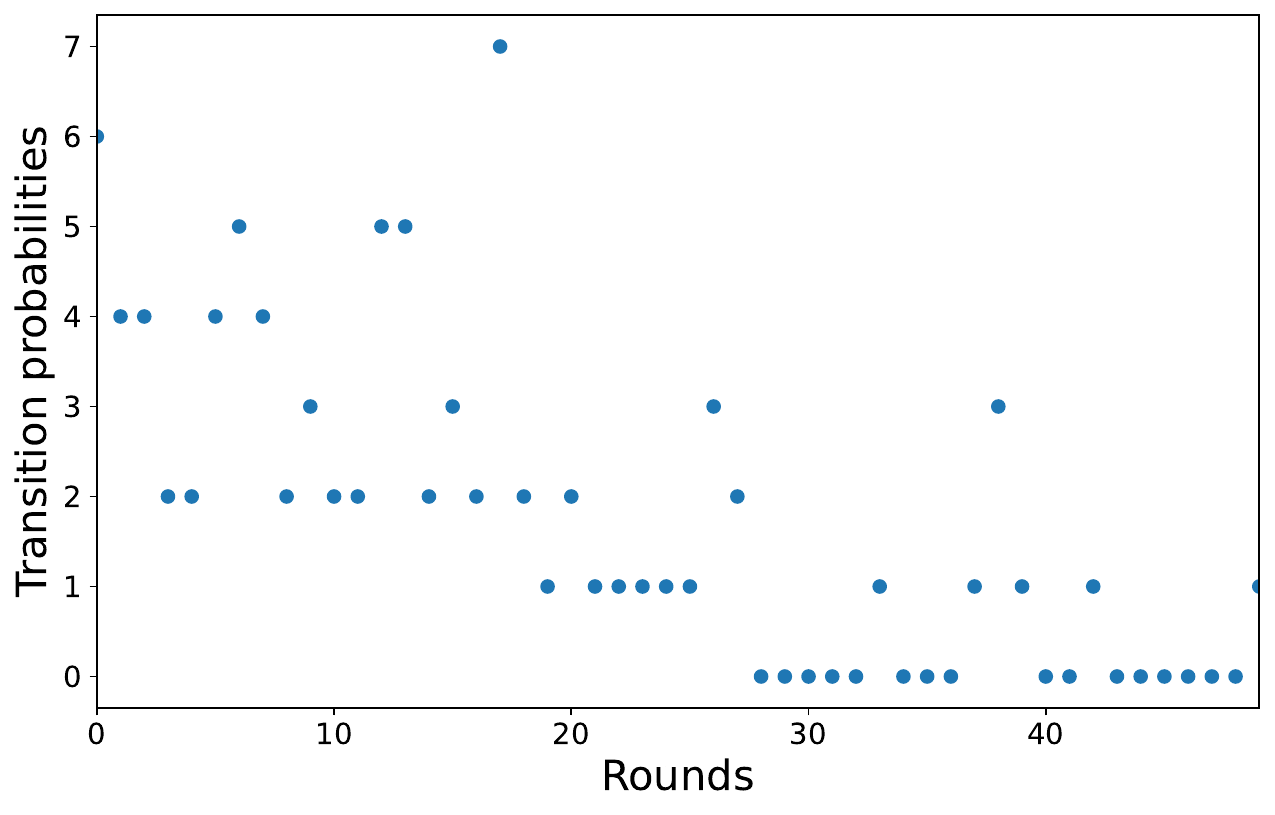}
    \caption{According to \eqref{eq:DesignGBatch}.}
    \label{fig:Games-OnePeriod-Set1b-Batch}
\end{subfigure}

\begin{subfigure}[t]{0.48\linewidth}
    \centering
    \includegraphics[width=0.90\textwidth]{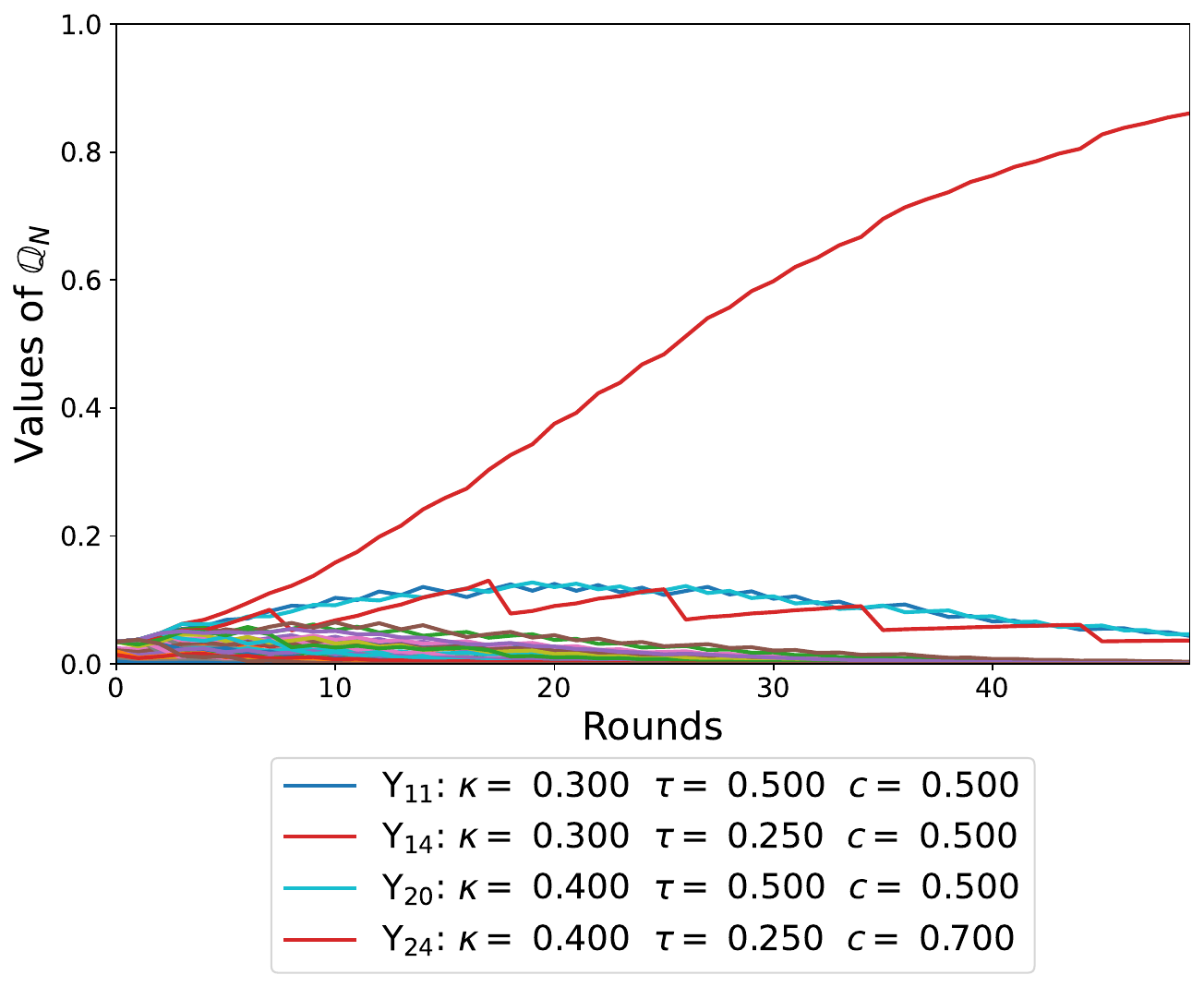}
    \caption{According to \eqref{eq:DesignGOnetoOne}.}
    \label{fig:Gibbs-OnePeriod-Set1b-OnetoOne}
\end{subfigure}
\hfill
\begin{subfigure}[t]{0.48\linewidth}
    \centering
    \includegraphics[width=0.90\textwidth]{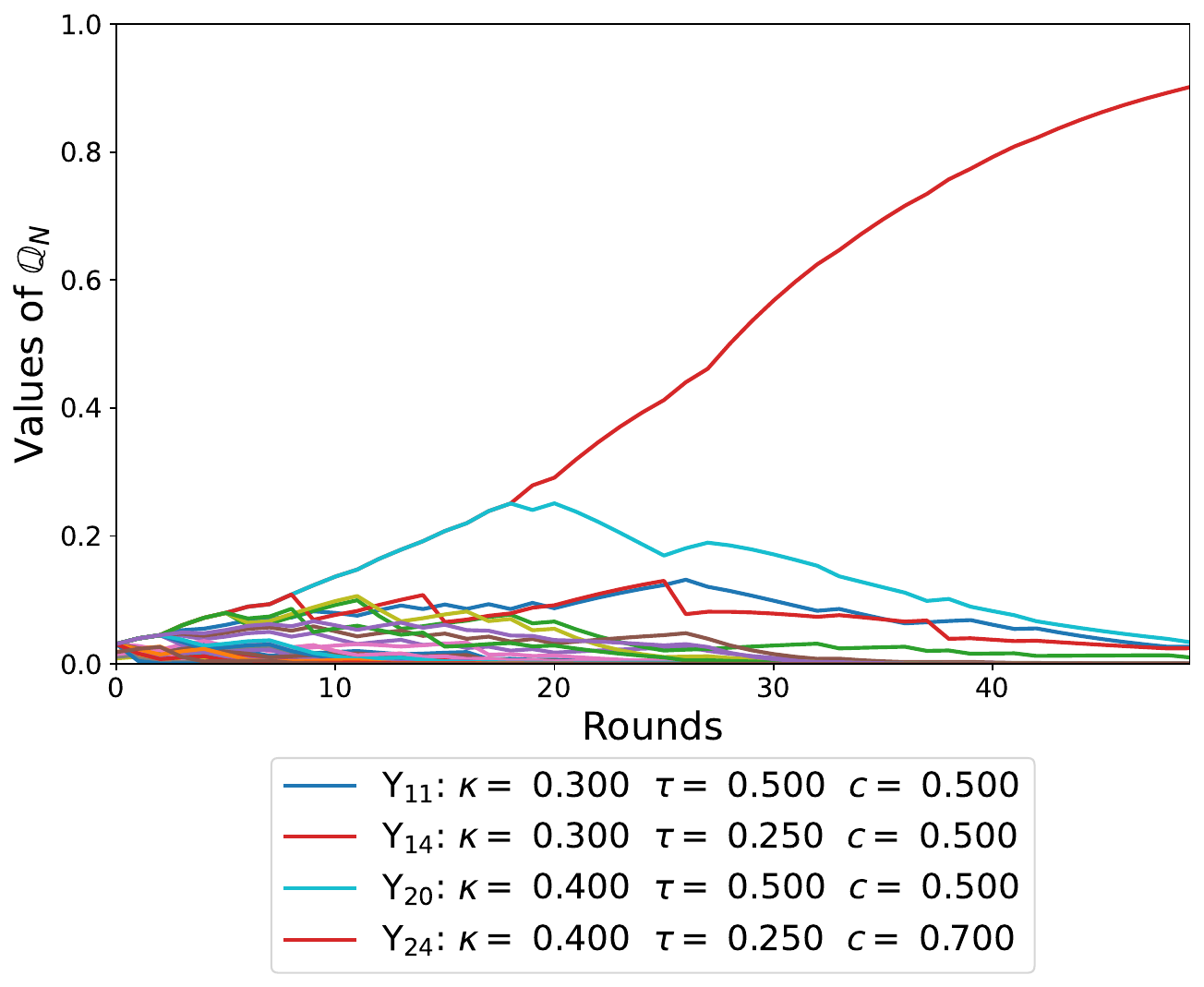}
    \caption{According to \eqref{eq:DesignGBatch}.}
    \label{fig:Gibbs-OnePeriod-Set1b-Batch}
\end{subfigure}

\caption{Evolution of the designed questions and $\bQ_N$ in the one-period setting for $k=10$.}
\label{fig:OnePeriod-Set1b}
\medskip
\small
Top: Evolution of the selected questions at each round of the learning algorithm, where each point represents the label of the chosen transition probability matrix. Bottom: Evolution of $\bQ_N$ at each round of the learning algorithm, where each line corresponds to one of the many risk aversion candidates. The legends in these plots give the closest risk aversion candidates to the client's risk aversion.
\end{figure}

An interesting question to ask is: what happens if the true risk aversion does not belong in the set of candidates risk aversions of the learner? In other words, what happens if the model is misspecified? To illustrate this scenario, we now fix the cost function $C_0(\bX) = [0.0, 0.5, 1.0]$ as well as $\tau_0 = 0$, but vary the $\kappa$ parameter. More precisely, we take 21 evenly spaced numbers over the interval $[0.1, 0.9]$ and set the true parameters $\kappa_{0}$ to 0.24, which does not belong to the set of risk aversion candidates. Figure~\ref{fig:Gibbs-OnePeriod-Set2} shows the evolution of $\bQ_N$ for the different question design approaches. When the client's risk aversion is not part of the set of risk candidates, the algorithm struggles to choose the truth at the beginning but eventually converges to the closest risk aversion available. On the other hand, uniformly sampling environments on $\cP(\bX)^\bA$ fails to decide on a single risk aversion, which is in line with Corollary~\ref{cor:RandomDesignConstitent}. This indicates that a learner may prefer designing questions according to \eqref{eq:DesignGBatch} for better identifying the client's risk aversion in settings where there is misspecification.

\begin{figure}[htbp]
\centering

\begin{subfigure}[t]{0.48\linewidth}
    \centering
    \includegraphics[width=0.93\textwidth]{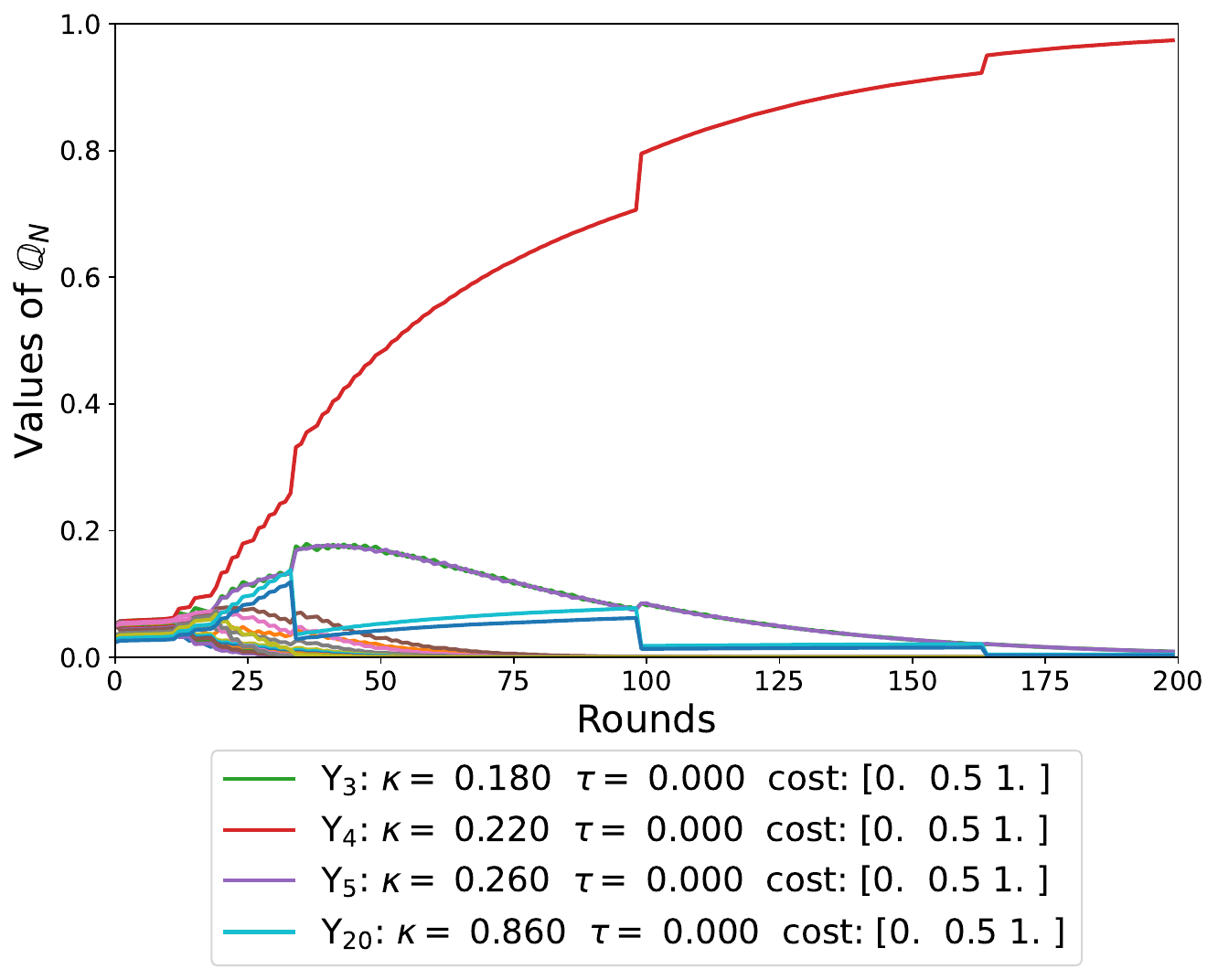}
    \caption{According to \eqref{eq:DesignGOnetoOne}.}
\end{subfigure}
\hfill
\begin{subfigure}[t]{0.48\linewidth}
    \centering
    \includegraphics[width=0.93\textwidth]{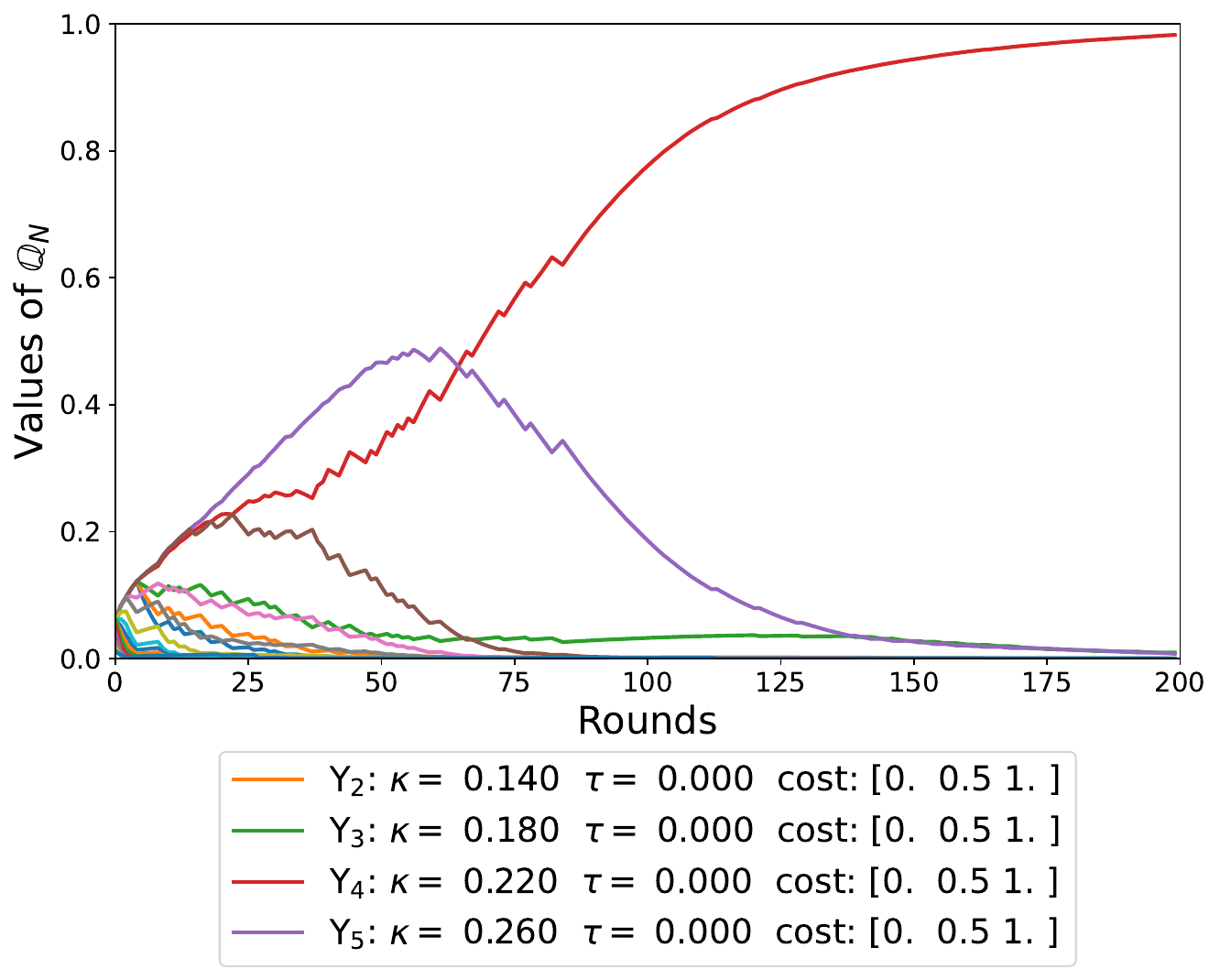}
    \caption{According to \eqref{eq:DesignGBatch}.}
\end{subfigure}
\hfill
\begin{subfigure}[t]{0.48\linewidth}
    \centering
    \includegraphics[width=0.93\textwidth]{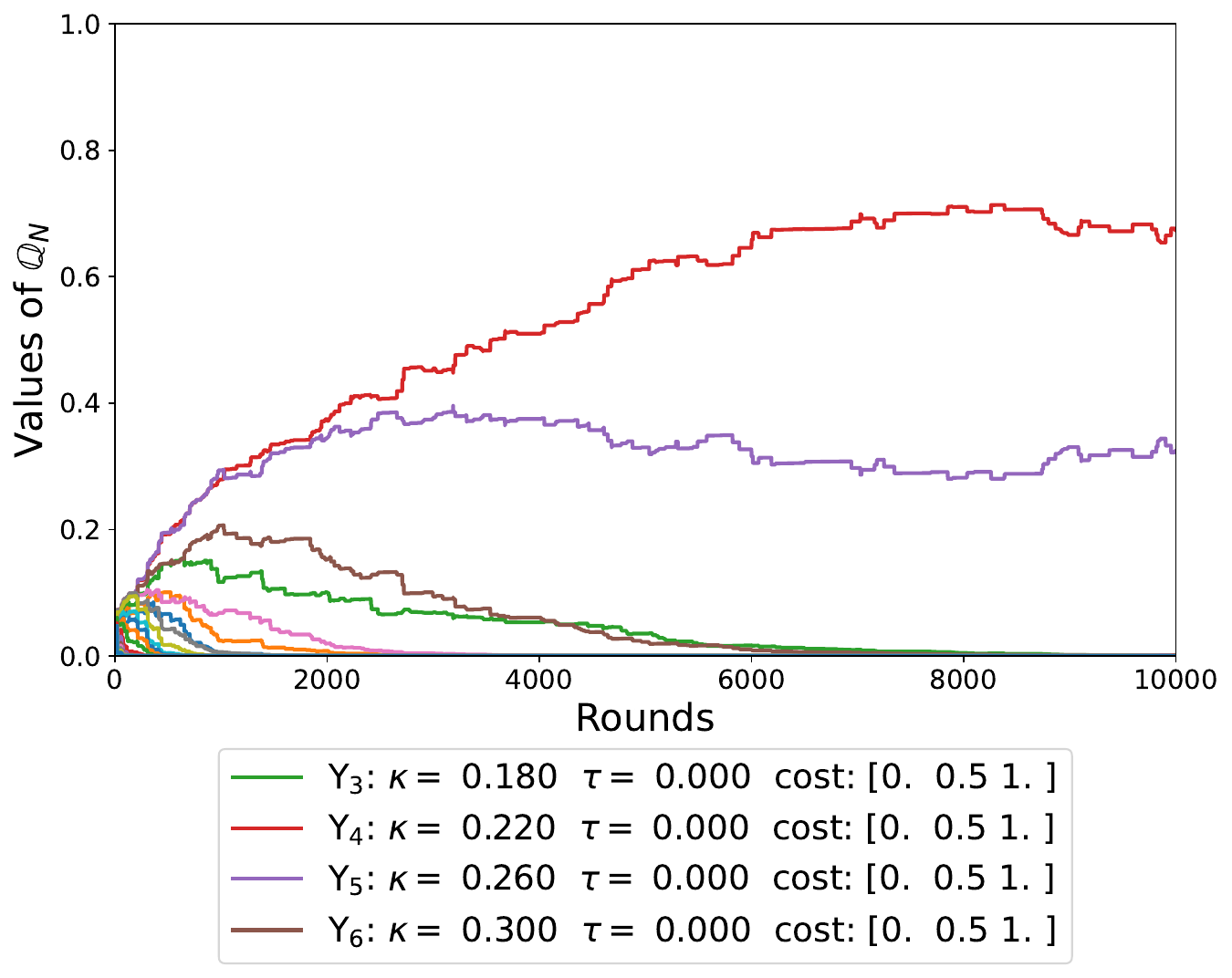}
    \caption{Fully at random.}
\end{subfigure}

\caption{Evolution of $\bQ_N$ in the one-period setting with misspecification.}
\label{fig:Gibbs-OnePeriod-Set2}
\medskip
\small
Evolution of $\bQ_N$ at each round of the learning algorithm, where each line corresponds to one of the many risk aversion candidates. The client's true risk-aversion does not belong to the risk aversion candidates. 
\end{figure}

\subsection{Experiment II}\label{ssec:Experiments2}
In this section, we aim to show the performance of our approach in a more challenging setting than the finite set of candidate risk aversions in Section~\ref{ssec:Experiments1}. More precisely, we want to model $\mu$ such that we allow more flexibility when characterizing the risk preferences of the client.  To do so, we first define for $\lambda\ge 1$ and $J\in\bN$
\begin{align*}
\cP_{\lambda}^J([0,1]):= \left\{\mu\in\cP_\lambda([0,1]): \sigma_\mu \text{ is constant on} \left[\frac{j-1}{J}, \frac{j}J\right),\, j=1,\dots,J \right\}.
\end{align*}
This set serves as a decent proxy for $\cP_\lambda([0,1])$, as shown in the result below.
\begin{proposition}\label{prop:PlambdaJ}
For any $\lambda\ge 1$, $J\in\bN$, and $\mu\in\cP_\lambda([0,1])$, there is a $\check\mu\in\cP_\lambda^J([0,1])$ such that $\wt\cU(\mu,\check\mu) \le \frac{\lambda}{J}$.
\end{proposition}
\begin{proof}
For $\mu\in\cP_{\lambda}([0,1])$, we construct $\check\mu\in \cP_{\lambda}^J([0,1])$ by letting
\begin{align}
\sigma_{\check\mu}\left(\frac{j-1}{J}\right) := \frac1J\int_{\frac{j-1}{J}}^{\frac{j}{J}}\sigma_{\mu}(\kappa)\dif\kappa,\quad j=1,\dots,J.
\end{align} 
It follows from \eqref{eq:IdenUtilde} that 
\begin{align}\label{eq:UBUtilde}
\wt\cU(\mu,\check\mu) \le \sum_{j=1}^J\int_{\frac{j-1}{J}}^{\frac{j}{J}} \left|\sigma_{\mu}(\kappa)-\sigma_{\check\mu}\left(\frac{j-1}{J}\right)\right|\dif\kappa.
\end{align}
By the monotonicity in Lemma~\ref{lem:Spectral}, we have
\begin{align}\label{eq:UBAbsDiffsigma}
\left|\sigma_{\mu}(\kappa)-\sigma_{\check\mu}\left(\frac{j-1}{J}\right)\right| \le \sigma_\mu\left(\frac{j}{J}\right) - \sigma_\mu\left(\frac{j-1}{J}\right),\quad \kappa\in\left[\frac{j-1}{J}, \frac{j}J\right],\quad j=1,\dots,J.
\end{align}
Substituting~\eqref{eq:UBAbsDiffsigma} into the right-hand side of~\eqref{eq:UBUtilde} readily completes the proof.
\end{proof}

While elicitation could be performed following the idea in the proof of Theorem~\ref{thm:ConvRate}, the number of interactions required may still be excessive. As a rough estimate, assuming the Lipschitz constant of $g_{c_0,\mu_0}$ is $0.5$, achieving an accuracy of $0.1$ would require $0.5/0.1^2=50$ grid points. The total number of interactions would be considerably larger when accounting for subsequent bisections. To support the practical application of the proposed framework, we need to strike for better efficiency.

We propose a particle-based algorithm with $K\in\bN$ in which each particle, indexed by $\ell=1,\ldots,K$, keeps track of a candidate risk aversion composed of a cost $c_\ell = C_\ell(x_1)$ and a measure $\mu_\ell\in \cP_{\lambda}^J([0,1])$ (and thus the spectral measure $\sigma_{\mu_\ell}$ in view of Lemma~\ref{lem:Spectral}). The algorithm consists of three core components: initialization, update, and removal of particles. For each particle $\ell\in\{0,\ldots,K\}$, we initially assign a value for the cost $c_\ell \sim \textrm{Uniform}(0,1)$ and randomly sample a measure $\mu_\ell\in \cP_{\lambda}^J([0,1])$, which in turn fixes $\sigma_{\mu_\ell}$, as a starting point. Then, for a given number of iterations, the algorithm updates the particles with a policy-gradient-inspired approach. More specifically, for each risk aversion candidate $\ell$ at a given round $N$, we compute the loss
\begin{equation}\label{eq:PenaltyParticle}
\cL(a_N^*, G_N, (C_\ell,\mu_\ell)) = \left(a^*_N\times \frac{\rho_{\mu_\ell}\big(C_\ell(X_{G_N^{\text{A}}})\big) - \rho_{\mu_\ell}\big(C_\ell(X_{G_N^{\text{B}}})\big)}{\rho_{\mu_\ell}\big(C_\ell(X_{G_N^{\text{A}}})\big) + \rho_{\mu_\ell}\big(C_\ell(X_{G_N^{\text{B}}})\big)} \right)_+.
\end{equation}
This represents a relative version of the optimality gap in \eqref{eq:DefOptGap}. Averaging over all particles and rounds played yields the loss function that we aim to optimize:
\begin{equation}\label{eq:FullLoss}
\cL_N = \frac{1}{N K} \sum_{n=1}^{N} \sum_{\ell=1}^{K} \cL\big(a_{n}^*, G_{n}, (C_\ell,\mu_\ell)\big) + \frac{0.01}{\lfloor0.1J\rfloor K} \frac{\sum_{j=\lceil0.9J\rceil}^{J} \sum_{\ell=1}^{K} \mu_\ell\left(\frac{j-1}{J}\right)^2}{\sum_{j=1}^{J} \sum_{\ell=1}^{K} \mu_\ell\left(\frac{j-1}{J}\right)},
\end{equation}
where the second term acts as a smoothness penalty for $\mu(\alpha)$ with $\alpha\in(0.9, 1)$. We modify the particles according to
\begin{equation}\label{eq:MoveCParticle}
c_\ell \leftarrow \min \Big( 1, \max \big( c_\ell - \nu_c \nabla_{c_\ell} \cL_N + \varsigma_c U_{c}, 0 \big) \Big)
\end{equation}
and
\begin{equation}\label{eq:MoveMuParticle}
\mu_\ell\left(\frac{j-1}{J}\right) \leftarrow \mu_\ell\left(\frac{j-1}{J}\right) - \nu_{\mu,j} \nabla_{\mu_\ell} \cL_N + \varsigma_{\mu,j} U_{\mu,j}, \quad j=1,\ldots,J,
\end{equation}
where $\nu_c,\nu_{\mu,j} \geq 0$ represent learning rates that control the exploitation of the gradient descent, $\varsigma_c,\varsigma_{\mu,j} \geq 0$ parameters to regulate the random exploration, and $U_c\sim\textrm{Uniform}(-0.5, 0.5)$ and $U_{\mu,j}\sim\textrm{Uniform}(0, 1)$ are IID. In the code implementation, we normalize the increments of $\mu_\ell$ to avoid potential conflicting gradient signals. Finally, if a particle $\ell\in\{0,\ldots,K\}$ is on average performing poorly such that
\begin{equation}\label{eq:RemoveParticle}
\frac{1}{N} \sum_{n=1}^{N} \cL\big(a_{n}^*, G_{n}, (C_\ell,\mu_\ell)\big) > 0,
\end{equation}
we then delete it from the system.

At each round $N$, we determine the next question by maximizing the relative distinguishing power defined in \eqref{eq:DefXi} with all particles:
\begin{equation}\label{eq:DesignParticle}
G_{N} \in \argmax_{G} \bE\left[ \Xi(G,(c,\mu),(c',\mu')) \right] \approx \argmax_{G} \frac{1}{K} \sum_{\ell=1}^{K} \Xi\Big(G, (c_{s(\ell)},\mu_{s(\ell)}), (c_{\bar{s}(\ell)},\mu_{\bar{s}(\ell)})\Big),
\end{equation}
where $G$ is of the form \eqref{eq:DefGpq} and $s,\bar{s}$ are permutations of the particle indices. This resembles the criterion~\eqref{eq:DesignGBatch} introduced in Section~\ref{ssec:Experiments1}. For the maximization step, we perform a brute-force search over all pairs $p>q$ with a step size of $1\%$. This $1\%$ increment is not merely a hyper-parameter for numerical accuracy; it also accounts for humans' limited perceptual resolution in evaluating probabilistic differences. It is worth noting that a $1\%$ change in probability was also used in the renowned Allais paradox experiment \citep{Allais1953Le}. Note that in our implementation, the brute-force search constitute less than $5\%$ of the total computation time, as the majority of the cost is spent on moving the particles.

Upon receiving the binary-choice question, The client then shows the corresponding optimal action. The learner subsequently performs two update cycles while replenishing eliminated risk aversion candidates during the first update. The approximation of the client's true risk aversion is given by taking the average over particles. We describe the full procedure in Algorithm~\ref{algo:particle}, where we alternate between updating $c$ and $\mu$ separately and perform two cycles with an elimination-replenishment procedure. These prevent getting stuck in local optima and reflect that we are dealing with the cost estimation from a distributional perspective.

\begin{algorithm}[htbp]
\caption{Particle-based algorithm for $c_0\in[0,1]$ and $\mu_0 \in \cP([0,1])$}
\label{algo:particle}
\KwIn{size of grid $J\in\bN$, $K\in\bN$ particles, $N\in\bN$ rounds, $M\in\bN$ updates;}
Initialize randomly $K$ particles with $c_\ell \in (0,1)$ and $\mu_\ell\in \cP_{\lambda}^J([0,1])$\;
\For{each round $n = 1, \ldots, N$}{
    The learner designs a question $G_n$ according to~\eqref{eq:DesignParticle}\;
    \lIf{$G_n$ does not have distinguishing power, i.e. $\Xi<0$}{\textbf{break}}
    The client selects his or her optimal action $a^*_n$ via~\eqref{eq:Defastar}\;
    \For{each update $m = 1, \ldots, M$}{
        Update risk via~\eqref{eq:MoveMuParticle} with $\nu_{\mu,j}=\frac{(J-j+1)}{J^3\sqrt{n}} \sum_{i=1}^{J} \sqrt{ \frac{1}{K} \sum_{\ell=1}^{K} \left(\mu_\ell\Big(\frac{i-1}{J}\Big) - \bar{\mu}\Big(\frac{i-1}{J}\Big)\right)^2 }$ and $\varsigma_\mu=0$\;
        \lIf{the number of particles satisfying~\eqref{eq:RemoveParticle} is low}{\textbf{break}}
    }
    Eliminate risk aversion candidates if they satisfy~\eqref{eq:RemoveParticle}\;
    Create new particles to obtain $K$ candidates\;
    Update the costs using~\eqref{eq:MoveCParticle} with $\nu_c=\max\left(0.05, 2 \sqrt{\frac{1}{K} \sum_{\ell=1}^{K} (c_\ell - \bar{c})^2} \right)$ and $\varsigma_c=\nu_c/2$\;
    \For{each update $m = 1, \ldots, 5M$}{
        Update risk via~\eqref{eq:MoveMuParticle} with $\nu_{\mu,j}=\frac{(J-j+1)}{J^3\sqrt{n}} \sum_{i=1}^{J} \sqrt{ \frac{1}{K} \sum_{\ell=1}^{K} \left(\mu_\ell\Big(\frac{i-1}{J}\Big) - \bar{\mu}\Big(\frac{i-1}{J}\Big)\right)^2 }$ and $\varsigma_\mu=0$\;
        \lIf{the number of particles satisfying~\eqref{eq:RemoveParticle} is low}{\textbf{break}}
    }
    Eliminate risk aversion candidates if they satisfy~\eqref{eq:RemoveParticle}\;
    Calculate $\bar{c} = \frac{1}{K}\sum_{\ell=1}^{K} c_\ell$ and $\bar{\sigma}(\frac{j-1}{J}) = \frac{1}{K}\sum_{\ell=1}^{K} \sigma_{\mu_\ell}(\frac{j-1}{J}), \ j=1,\ldots,J$\;
}
\KwOut{Approximation $\bar{c}$ and $\bar{\sigma}$}
\end{algorithm}

For this setup, we consider $J=100$ and use $K=1000$ particles, $M=1000$ updates and a maximum of $N=50$ rounds. We assume that the client's risk aversion is parameterized on a finer grid $\mu_0\in \cP_{\lambda}^{J_0}([0,1])$ where $J_0 = 4J$, while the learner's risk aversion belongs to the coarser space $\cP_{\lambda}^J([0,1])$ to reflect potential model misspecification. This can be easily extended to $\mu_0\in \cP_{\lambda}([0,1])$ without loss of generality. Empirically, the algorithm usually ends after approximately 20-40 rounds, as there are no distinguishing questions anymore for the learner, with a reported execution time of approximately 5-10 minutes using a Tesla T4 GPU on Google Colaboratory. During the first few rounds, note that the algorithm does not perform the full $M$ and $5M$ updates. Increasing the number of particles seems to improve the accuracy but reduces the computational efficiency.

Figure~\ref{fig:Set4} shows results for a client's risk aversion given by $c_0=0.3$ and a random piecewise constant $\sigma_{\mu_0}$. We can observe that the learner correctly identifies both the cost and the risk preferences of the client in Figure~\ref{fig:Sigma-Set4}. The errors between the learner's estimation and the true risk aversion, shown in Figure~\ref{fig:ErrorsConvergence-Set4}, decreases steadily after each round of questions, as the learner gathers more information on the client's optimal behavior. Here, we use the absolute difference to compute the error on $c$ and \eqref{eq:DefUtilde} to calculate the error on $\mu$. Finally, we also show in~\ref{fig:GamePower-Set4} the distinguishing power as a function of $p,q$ in a similar manner to Figure~\ref{fig:DistinguishingGame}. Designing the next question by maximizing the (relative) distinguishing power allows environment where different risk aversion candidates result in different optimal actions, improving the odds of quickly identifying the truth. Figure~\ref{fig:GameSelection-Set4} describes exactly which matrix $G_{p,q}$ is chosen at each round to differentiate between risk aversions and the true indifference curve. As expected, the questions in the first rounds (in darker shades of blue) are located near the indifference curve and, as the algorithm pinpoints the client's true risk aversion, questions in the later rounds (in light green and yellow) are exactly on the indifference curve.

\begin{figure}[htbp]
\centering

\begin{subfigure}[t]{0.48\linewidth}
    \centering
    \includegraphics[width=0.98\textwidth]{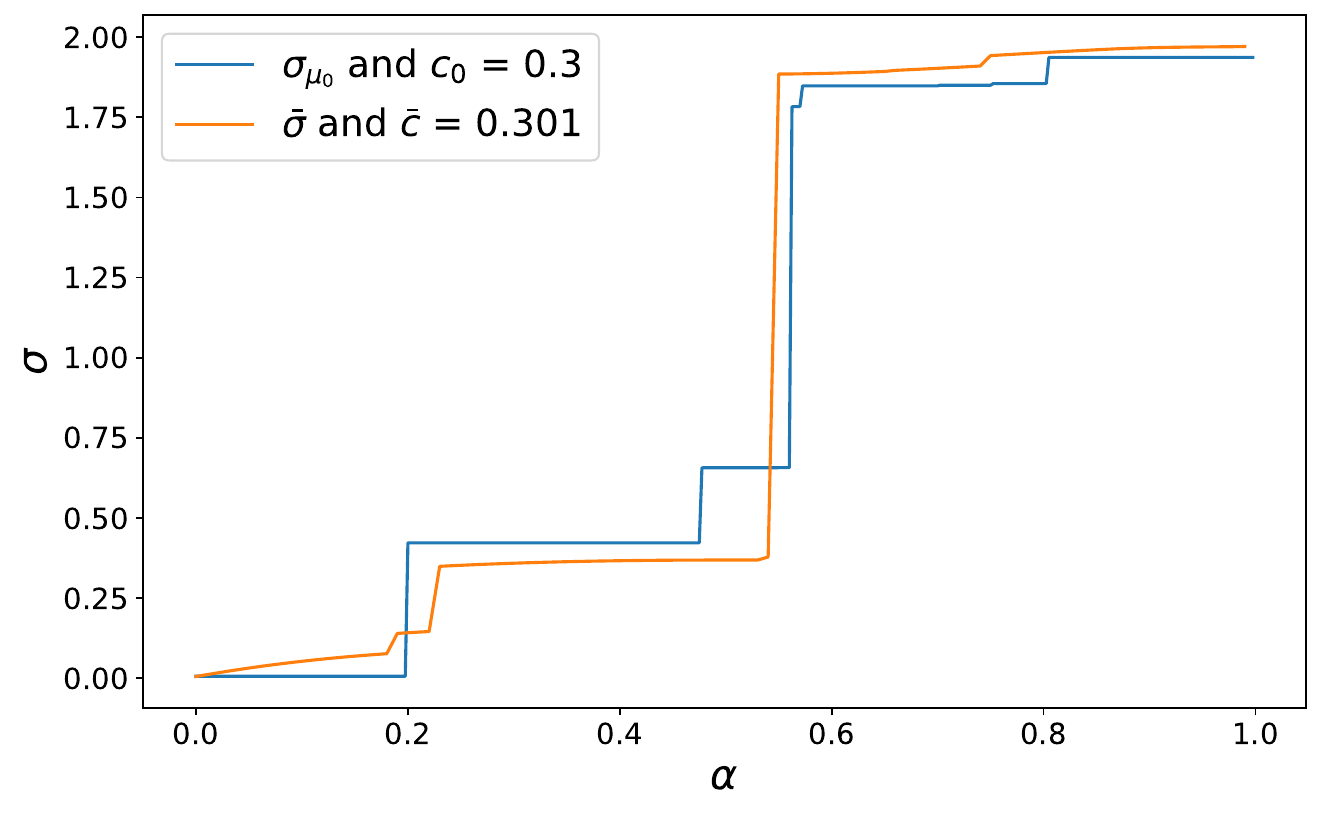}
    \caption{Estimation of $\sigma_{\mu_0}$.}
    \label{fig:Sigma-Set4}
\end{subfigure}
\hfill
\begin{subfigure}[t]{0.48\linewidth}
    \centering
    \includegraphics[width=0.98\textwidth]{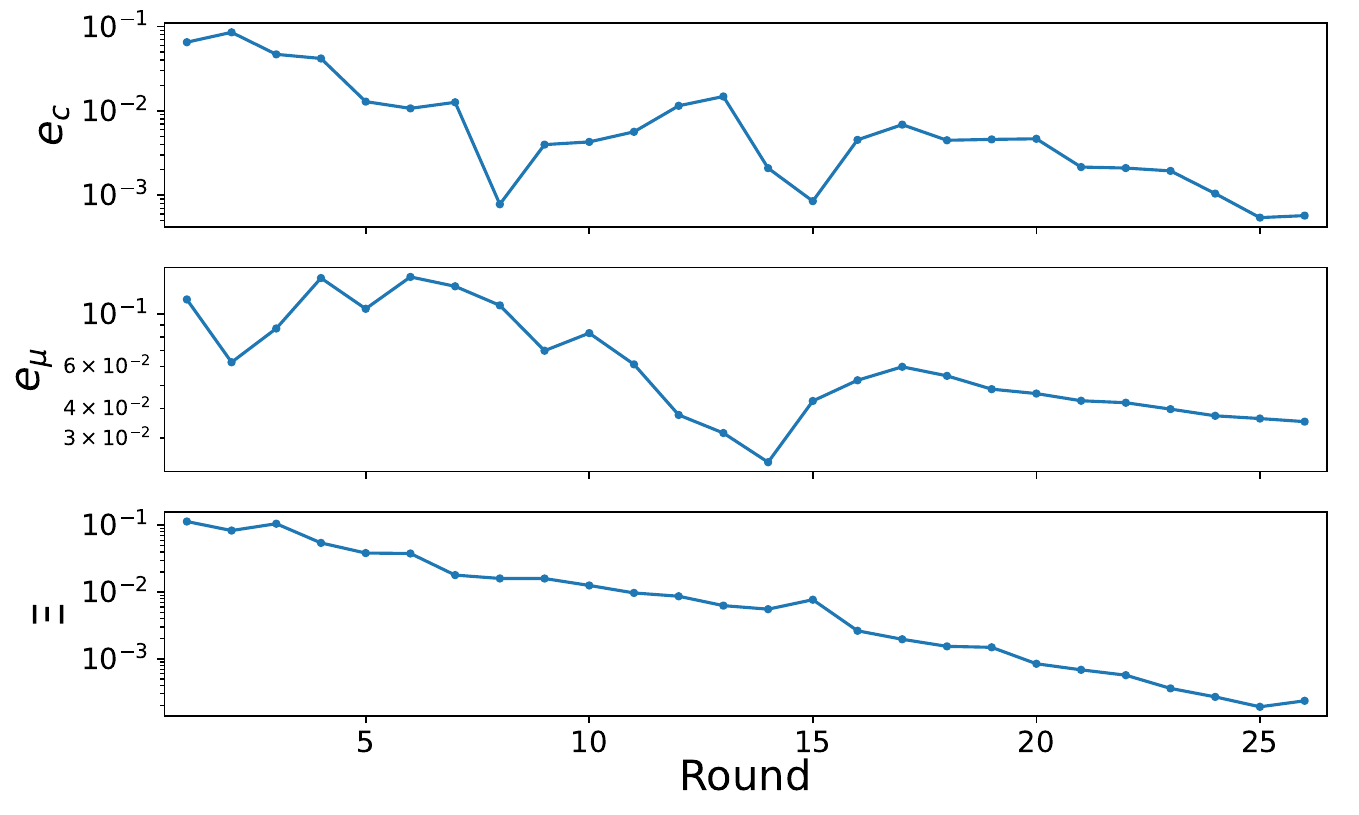}
    \caption{Progression of error metrics at each question.}
    \label{fig:ErrorsConvergence-Set4}
\end{subfigure}
\hfill
\begin{subfigure}[t]{0.48\linewidth}
    \centering
    \includegraphics[width=0.98\textwidth]{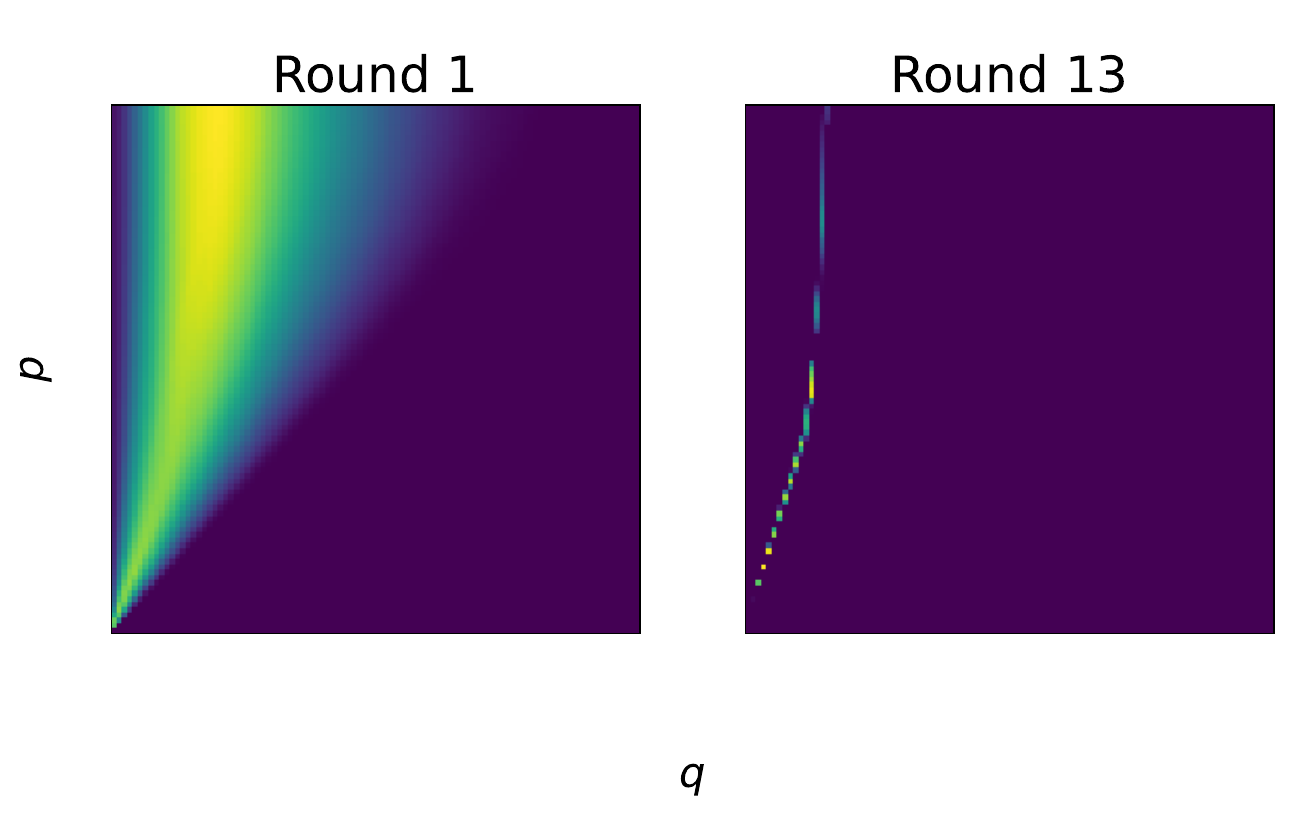}
    \caption{Distinguishing power.}
    \label{fig:GamePower-Set4}
\end{subfigure}
\hfill
\begin{subfigure}[t]{0.48\linewidth}
    \centering
    \includegraphics[width=0.98\textwidth]{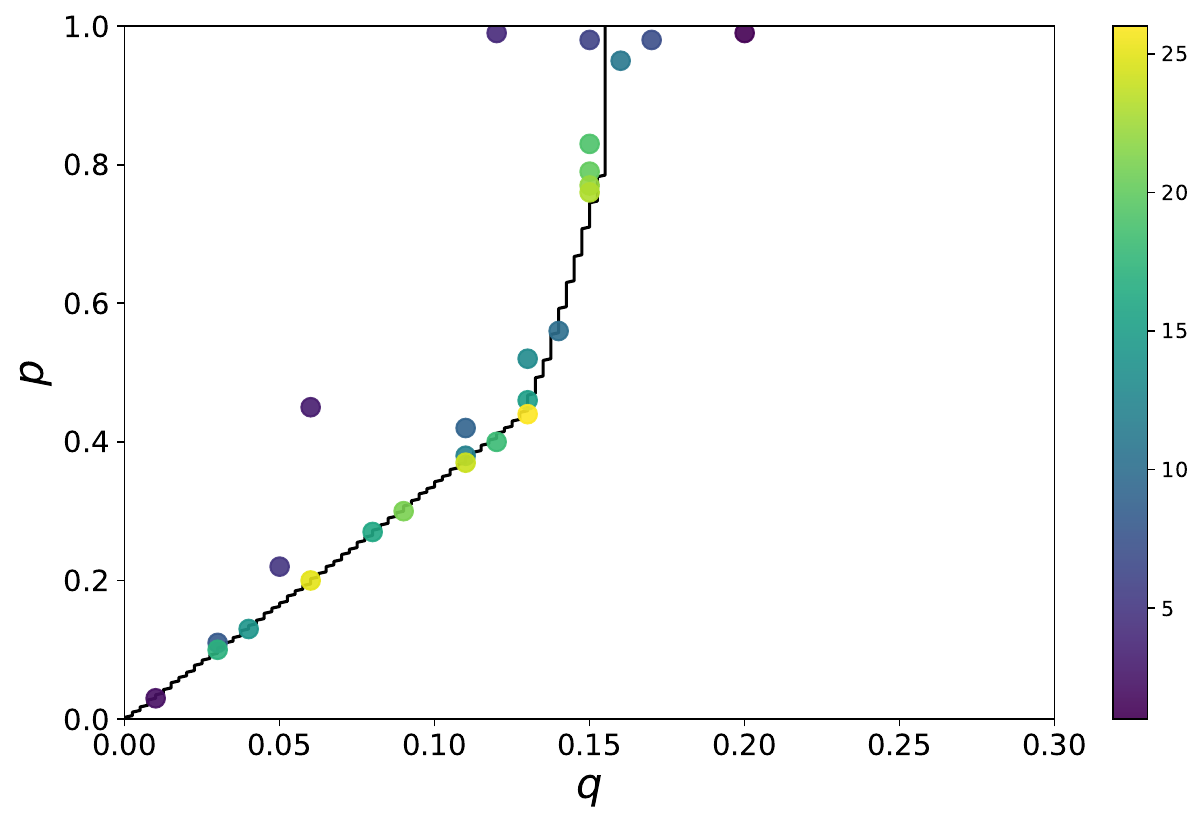}
    \caption{Selected questions and indifference line.}
    \label{fig:GameSelection-Set4}
\end{subfigure}

\caption{Results of the particle-based algorithm.}\label{fig:Set4}
\end{figure}

\begin{figure}[htbp]
\centering

\begin{subfigure}[t]{0.48\linewidth}
    \centering
    \includegraphics[width=0.98\textwidth]{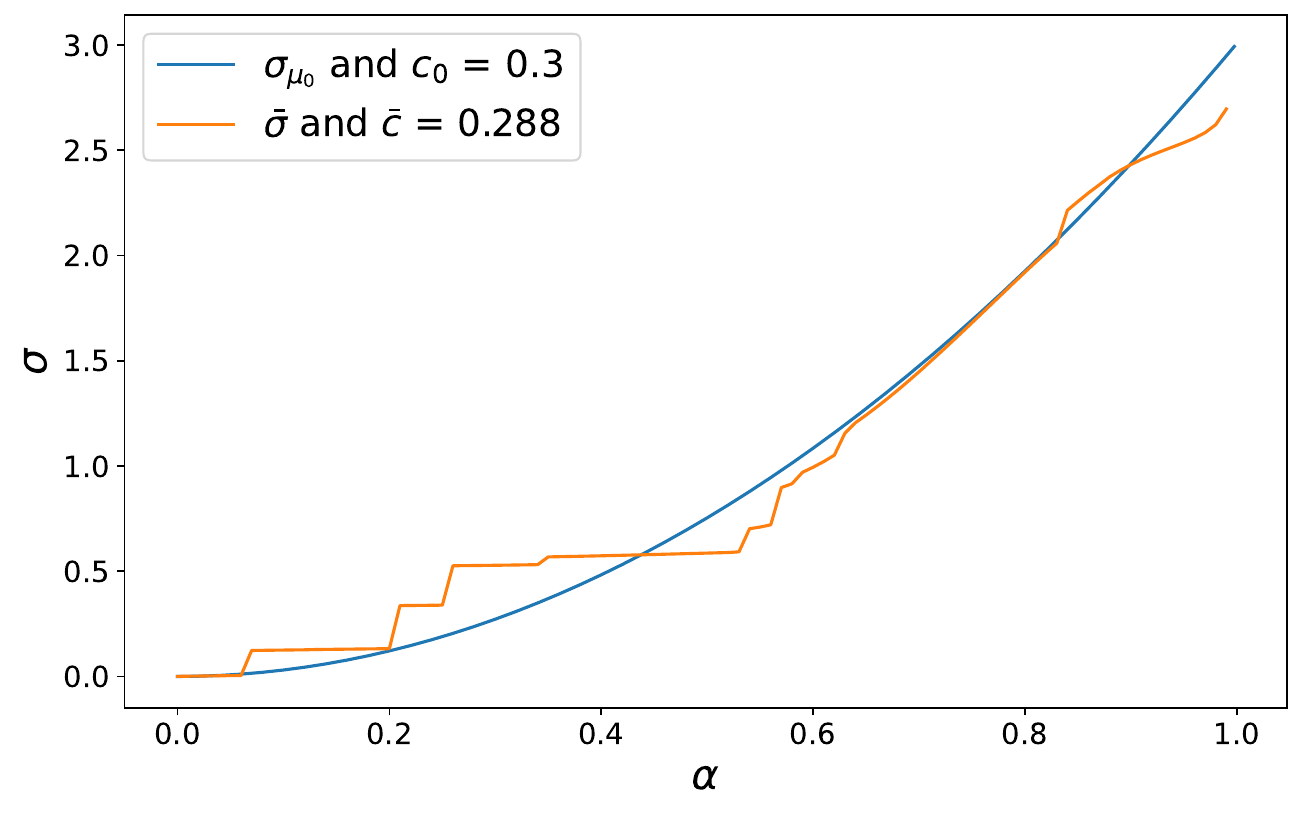}
    \caption{Estimation of $\sigma_{\mu_0}$.}
\end{subfigure}
\hfill
\begin{subfigure}[t]{0.48\linewidth}
    \centering
    \includegraphics[width=0.98\textwidth]{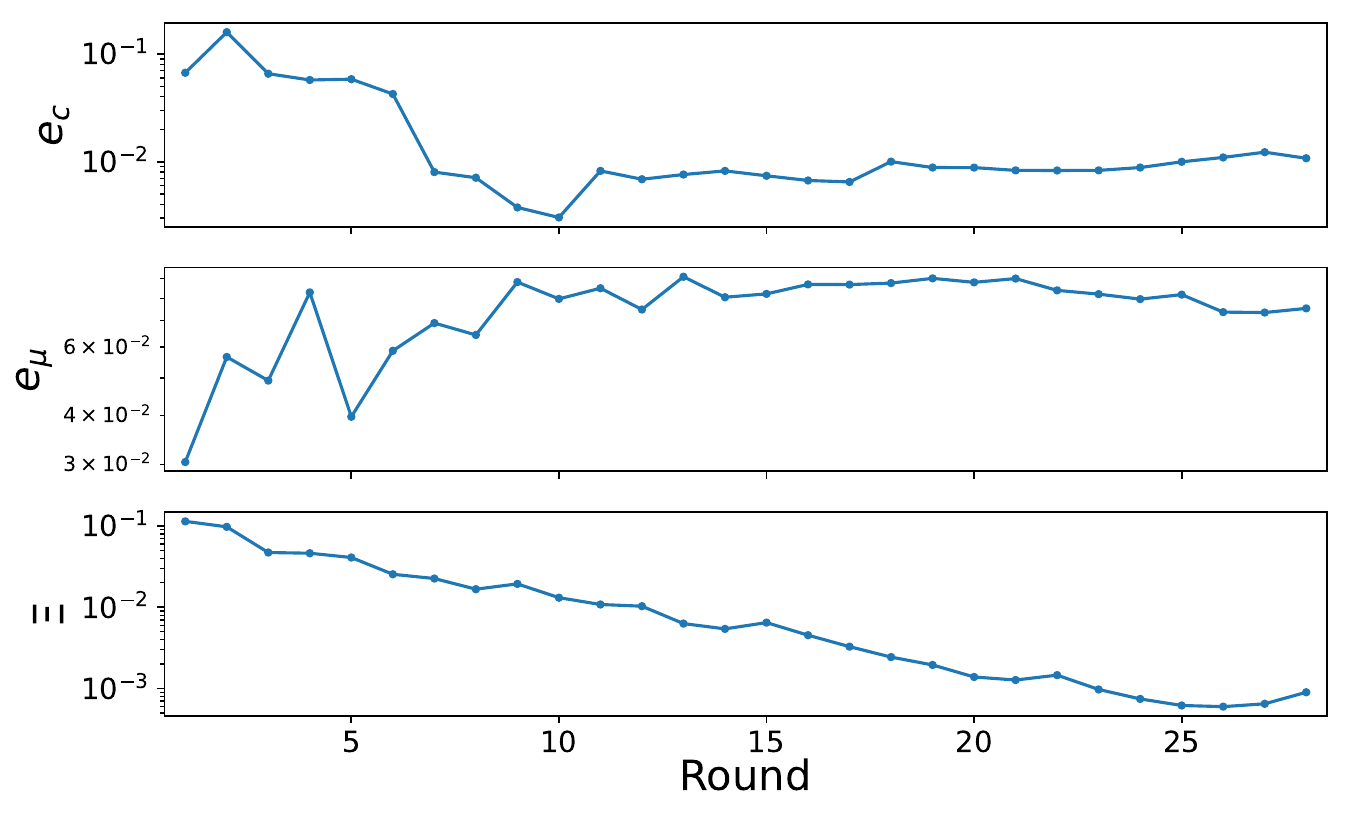}
    \caption{Progression of error metrics at each question.}
\end{subfigure}
\hfill
\begin{subfigure}[t]{0.48\linewidth}
    \centering
    \includegraphics[width=0.98\textwidth]{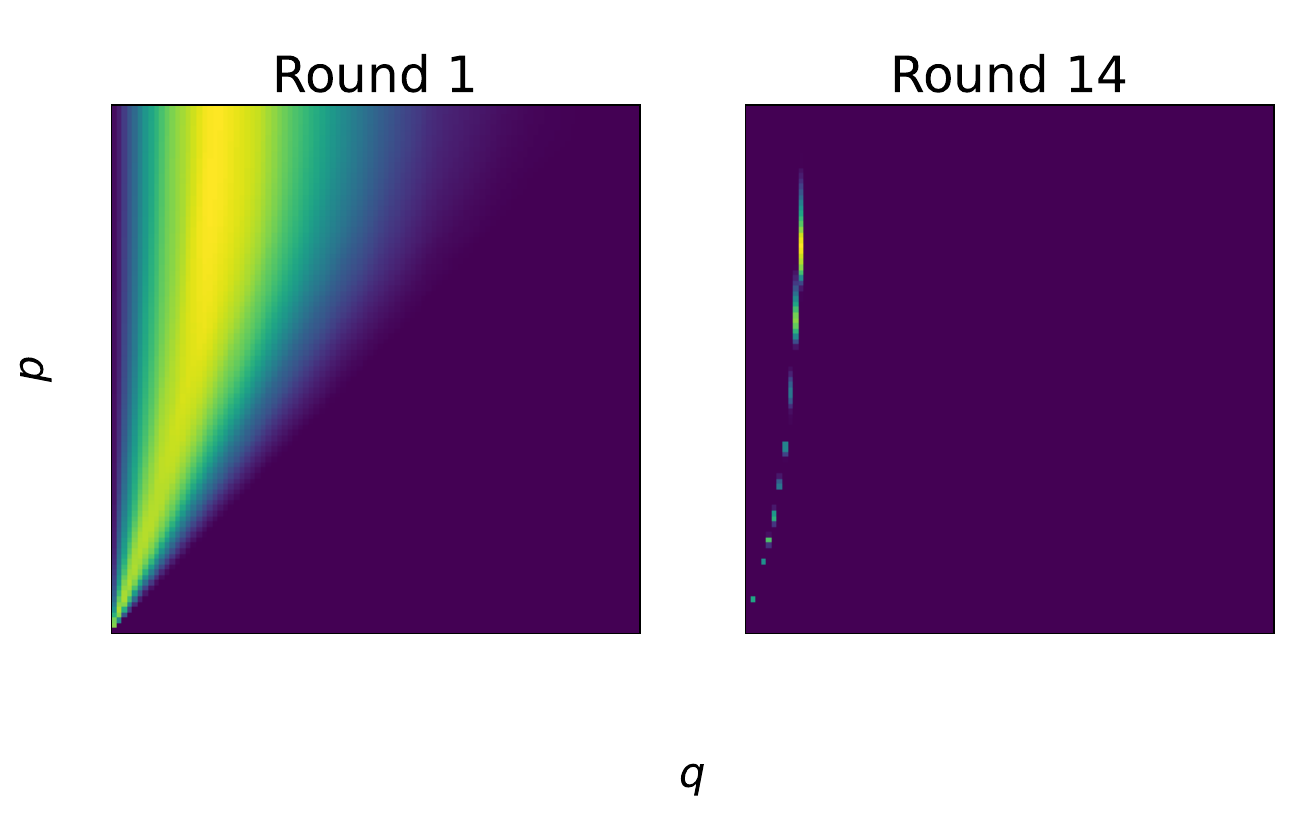}
    \caption{Distinguishing power.}
\end{subfigure}
\hfill
\begin{subfigure}[t]{0.48\linewidth}
    \centering
    \includegraphics[width=0.98\textwidth]{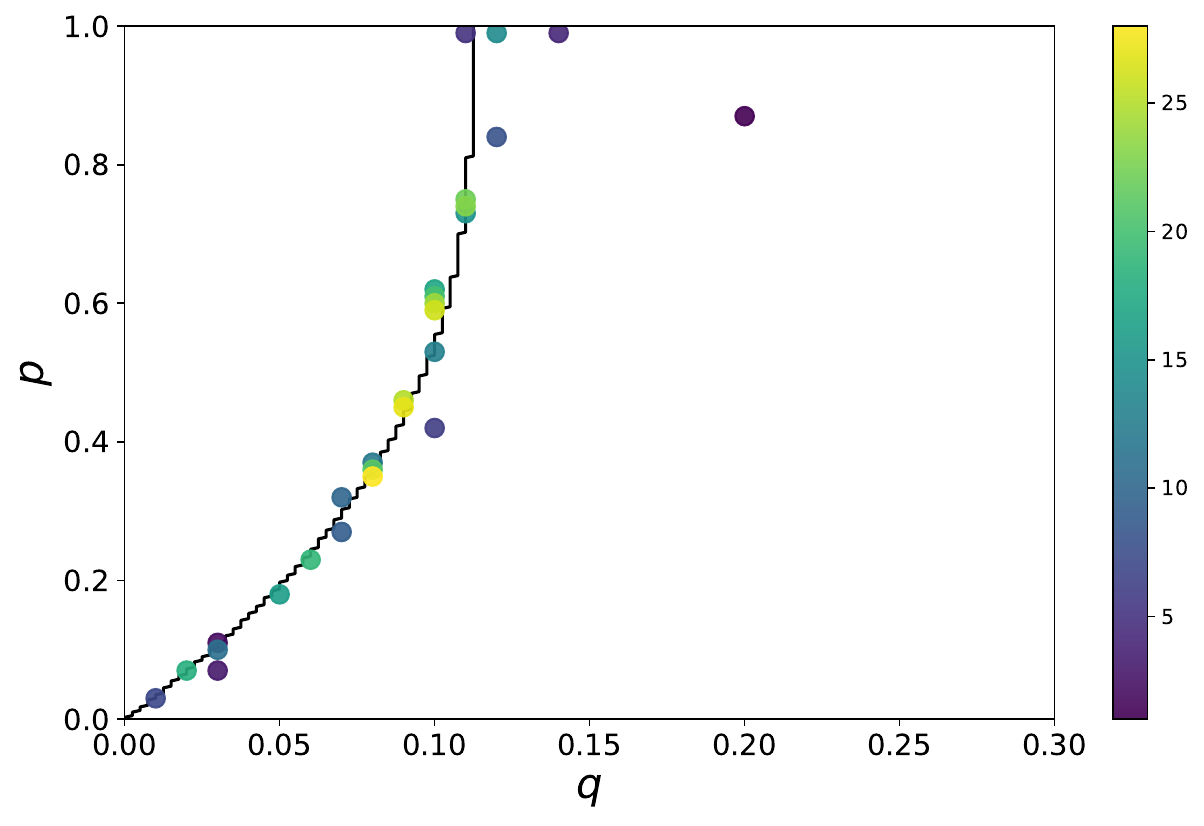}
    \caption{Selected questions and indifference line.}
\end{subfigure}

\caption{Results of the particle-based algorithm.}\label{fig:Set4-1}
\end{figure}

\begin{figure}[htbp]
\centering

\begin{subfigure}[t]{0.48\linewidth}
    \centering
    \includegraphics[width=0.98\textwidth]{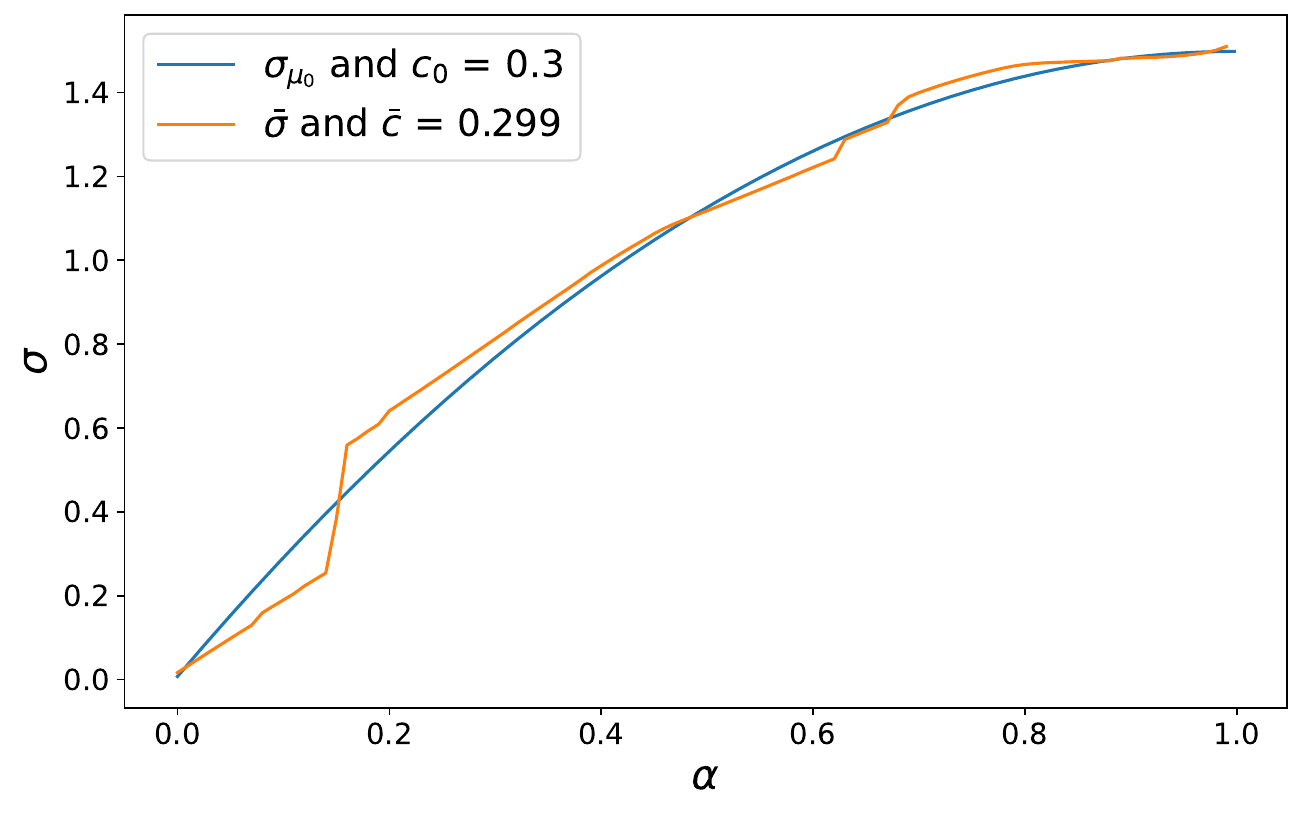}
    \caption{Estimation of $\sigma_{\mu_0}$.}
\end{subfigure}
\hfill
\begin{subfigure}[t]{0.48\linewidth}
    \centering
    \includegraphics[width=0.98\textwidth]{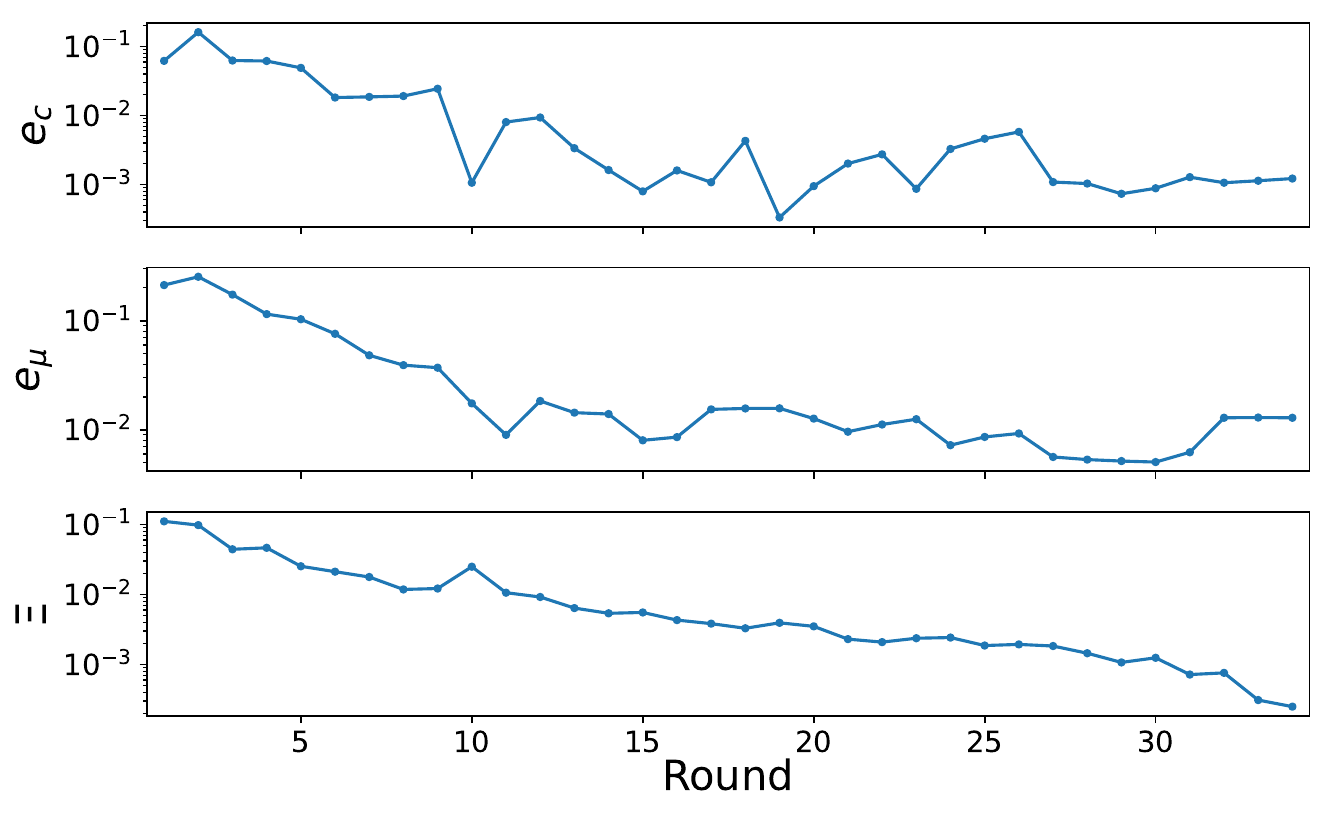}
    \caption{Progression of error metrics at each question.}
\end{subfigure}
\hfill
\begin{subfigure}[t]{0.48\linewidth}
    \centering
    \includegraphics[width=0.98\textwidth]{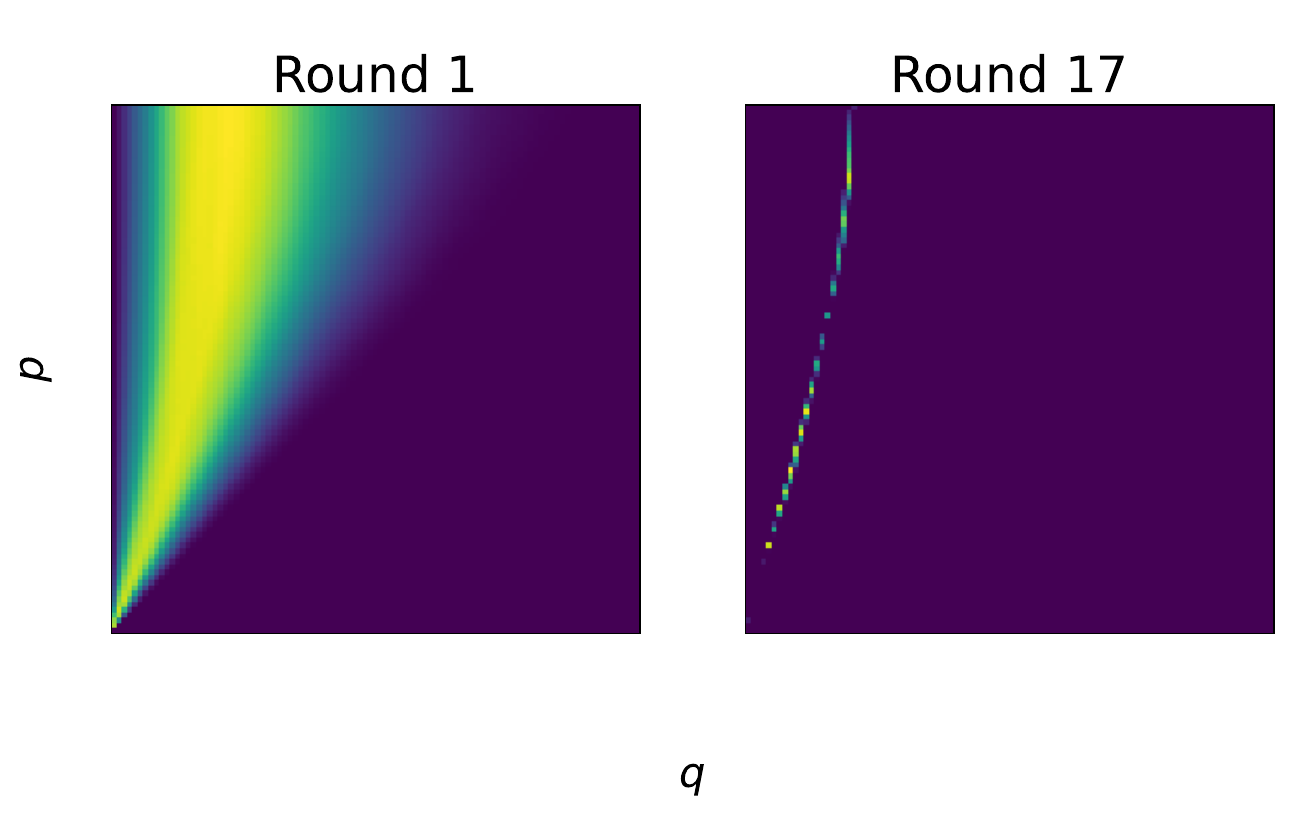}
    \caption{Distinguishing power.}
\end{subfigure}
\hfill
\begin{subfigure}[t]{0.48\linewidth}
    \centering
    \includegraphics[width=0.98\textwidth]{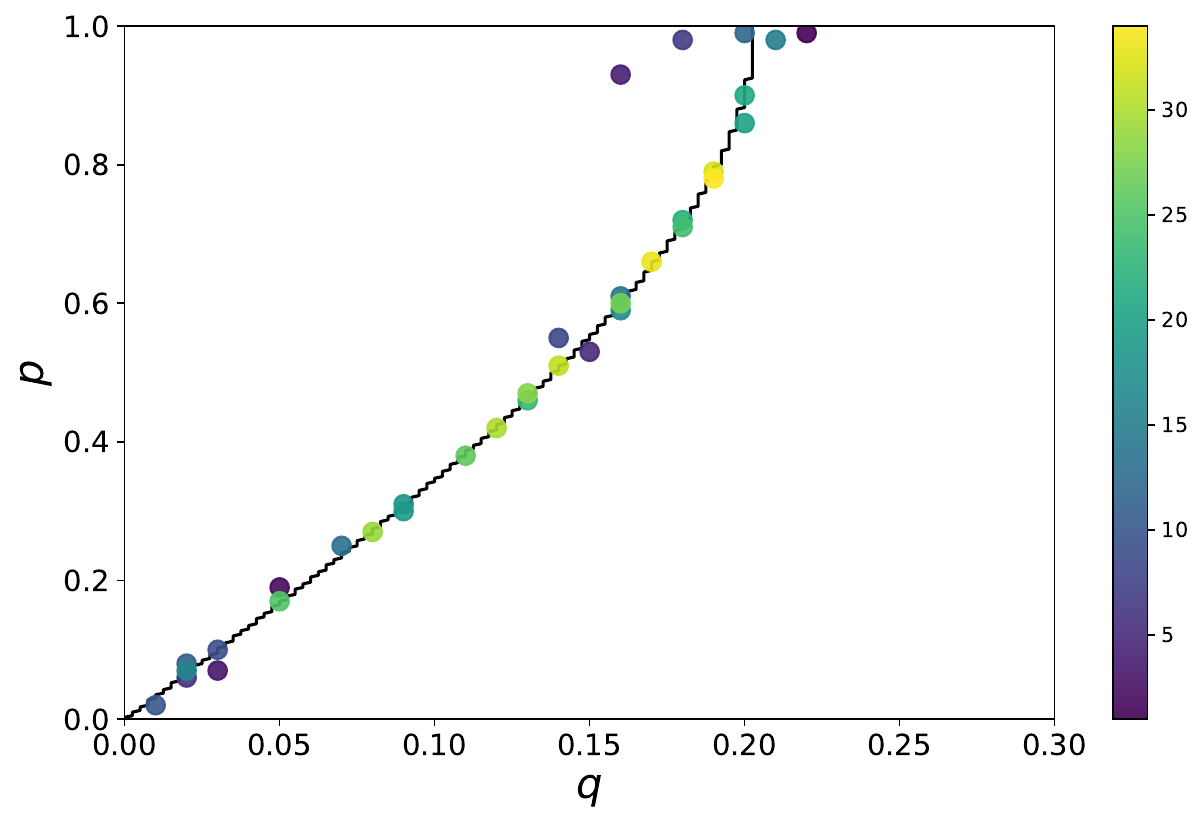}
    \caption{Selected questions and indifference line.}
\end{subfigure}

\caption{Results of the particle-based algorithm.}\label{fig:Set4-2}
\end{figure}

We repeat this exercise with a continuous $\mu_0$ in Figures~\ref{fig:Set4-1} and~\ref{fig:Set4-2}. In both cases, the learning algorithm correctly identifies the qualitative features of the client's risk aversion $\sigma_0$, the cost $c_0$, and the indifference curve. The approach appears to perform slightly better with a concave risk aversion than a convex one, although the difference is not substantial.\footnote{The architecture of the code notebook may be easily extended to the individual use cases, for instance by modifying hyperparameters, including alternative risk aversions or adding application-dependent modules.}

Additionally, we run a repeated experiments analysis to verify the efficiency of our algorithm for $50$ different risk-aversion candidates. This analysis has an execution time of approximately 2.5 hours on a server equipped with a 20 vCPU Intel(R) Xeon(R) Platinum 8470Q and a vGPU with 48GB of memory. Figure~\ref{fig:Set5} presents a scatterplot with the marginal distributions for the final errors in both the cost and risk preference. The color indicates the value of $c_0$, chosen uniformly in $[0,1]$, while the size gives the number of questions asked by the algorithm to achieve this accuracy. Interestingly enough, there do seem to be a relationship between the magnitude of the errors and the value of the client's true cost. We can also see that asking more questions reduces the overall error on the client's risk aversion, as one would expect. Even though we allow a maximum of 75 questions, the algorithm often ends early with less than 40 queries (see Table~\ref{tab:Set5}). The right-hand side of Table~\ref{tab:Set5} shows that larger errors are primarily due to risk aversions with very small or very large costs. In practice, we also notice that $\bar{\sigma}$ captures the qualitative features of $\sigma_0$ even when reporting larger errors, and adding random perturbations to the smoothness penalty in \eqref{eq:FullLoss} seems to improve learning. Performing a full ablation study to understand the contribution of each hyperparameter is important and left for future works.

\begin{figure}[htbp]
\centering
\includegraphics[width=0.90\textwidth]{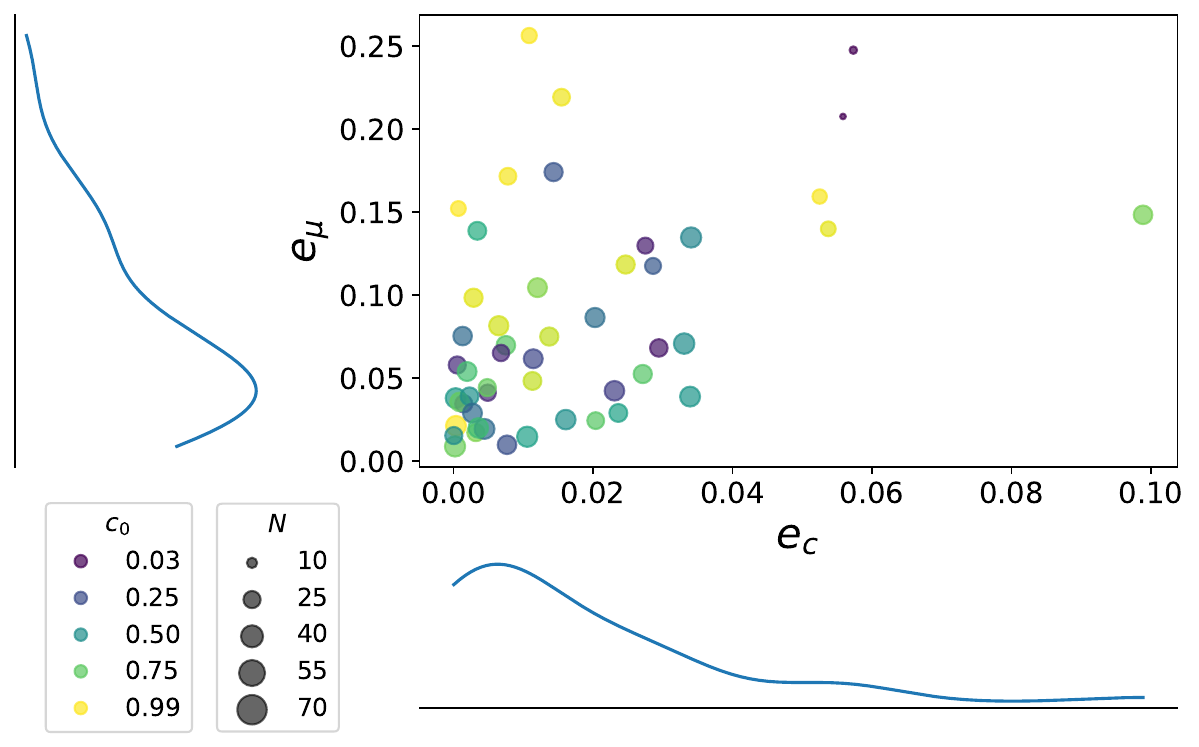}
\caption{Error-error plot for the repeated experiments analysis.}\label{fig:Set5}
\medskip
\small
Errors in terms of both $c_0$ and $\mu$ for each of the 50 random runs, where the color indicates the value of $c_0$ and the size gives the number of questions asked by the algorithm.
\end{figure}

\begin{table}[!ht]
\centering
\begin{tabular}{c|r r r | r r r}
    \hline\hline
    & \multicolumn{3}{c|}{All} & \multicolumn{3}{c}{$c_0 \in(0.1,0.9)$ \& $\sum_{j=\lceil0.9J_0\rceil}^{J_0} \mu\left(\frac{j-1}{J_0}\right) \leq 1$} \\
    & \multicolumn{3}{c|}{50 runs} & \multicolumn{3}{c}{28 runs} \\
    \hline
    & Mean & Max & Std. dev. & Mean & Max & Std. dev. \\
    \hline
    Number of questions & $26.78$ & $35.00$ & $6.13$ & $28.96$ & $35.00$ &  $3.48$ \\
    Particles remaining & $623.22$ & $946.00$ & $401.63$ & $677.82$ & $929.00$ & $360.09$ \\
    Error for $c$ & $0.0167$ & $0.0989$ & $0.0194$ & $0.0095$ & $0.0339$ & $0.0102$ \\
    Error for $\mu$ & $0.0826$ & $0.2564$ & $0.0642$ & $0.0465$ & $0.1741$ & $0.0362$ \\
    Terminal $\Psi$ & $0.0053$ & $0.1511$ & $0.0260$ & $3.8 \times 10^{-5}$ & $2.5\times 10^{-4}$ & $5.8 \times 10^{-5}$ \\
    \hline\hline
\end{tabular}
\caption{Descriptive statistics for the repeated experiments analysis.}
\label{tab:Set5}
\end{table}

\newpage

\section{Discussion on potential future works} \label{sec:Extensions}
In this section, we provide some discussions on potential further works.

\subsection{Multiple choices may improve learning}\label{subsec:MultipleChoice}

A natural follow-up to Theorem~\ref{thm:Separation} is to question whether there exists a scenario where the agent's risk aversion can be distinguished with only one demonstration. Proposition~\ref{prop:MultiChoiceSeparation} below demonstrates that under the setup where $C(x)=x$, this is possible, provided the state and action space is large enough. The proof is provided in Section~\ref{subsec:Proofprop:MultiChoiceSeparation}. However, it remains an open question whether there exists an environment that distinguishes risk aversion with a single demonstration in other settings. The corresponding design problem may also be of interest.
\begin{proposition}\label{prop:MultiChoiceSeparation} 
Consider $\Upsilon=\set{\mu_1,\dots,\mu_L}$. Suppose $C(x)=x$. Then, there exist a state space $\bX=\set{x_0,\dots,x_{2L}}$ and an action space $\bA$ with $|\bA|=L$, and a multiple-choice question $G\in\cP(\bX)^{\bA}$ such that 
\begin{align*}
\set{a_\ell} = \argmin_{a\in\bA} \rho_{\mu_\ell}\left(X_{G^{a}}\right), \quad \ell=1,\dots,L,
\end{align*}
where $X_{G^{a}}\sim G^{a}$.
\end{proposition}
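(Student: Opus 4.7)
My plan is to construct the environment explicitly by giving each action its own private pair of states. In the canonical setup $C(x)=x$, for each $\ell\in\set{1,\dots,L}$ I take $P^{a_\ell}$ to be the three-point distribution supported on the triple $\set{x_0,x_{2\ell-1},x_{2\ell}}$, so that the pairs $\set{x_{2\ell-1},x_{2\ell}}$ are pairwise disjoint and share only $x_0=0$. Writing $p_\ell:=P^{a_\ell}(\set{x_0})$ and $q_\ell:=P^{a_\ell}(\set{x_{2\ell-1}})$, formula \eqref{eq:Exprrho} gives
\[
\rho_{\mu_k}\bigl(X^{a_\ell}_{G}\bigr) = x_{2\ell-1}\bigl(S_{\mu_k}(p_\ell+q_\ell) - S_{\mu_k}(p_\ell)\bigr) + x_{2\ell}\bigl(1 - S_{\mu_k}(p_\ell+q_\ell)\bigr),
\]
where $S_\mu(p):=\int_0^p\sigma_\mu(\alpha)\,\dif\alpha$. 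The target $\set{a_\ell}=\argmin_{a\in\bA}\rho_{\mu_\ell}(X^a_G)$ then reduces to an affine system of $L(L-1)$ strict inequalities in the $4L$ free parameters $(p_\ell,q_\ell,x_{2\ell-1},x_{2\ell})_{\ell=1}^{L}$, subject to the ordering $0<x_1<\dots<x_{2L}$ and the probability-simplex constraints.

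I would establish feasibility of this system by induction on $L$. The base case $L=2$ is a direct transcription of Theorem \ref{thm:Separation} into the canonical setup. For the inductive step, I start from a valid configuration for $\mu_1,\dots,\mu_{L-1}$ on the states $\set{x_0,\dots,x_{2L-2}}$, adjoin two new states $x_{2L-1}<x_{2L}$ with $x_{2L-1}>x_{2L-2}$, and design a new action $a_L$ supported on $\set{x_0,x_{2L-1},x_{2L}}$. It then suffices to choose $P^{a_L}$ satisfying
\begin{align*}
(a)\quad &\rho_{\mu_L}(P^{a_L}) < \rho_{\mu_L}(P^{a_k}),\quad k<L,\\
(b)\quad &\rho_{\mu_\ell}(P^{a_\ell}) < \rho_{\mu_\ell}(P^{a_L}),\quad \ell<L.
\end{align*}
Condition (b) is arranged by placing $x_{2L-1},x_{2L}$ large enough that monotonicity of $\rho_\mu$ makes $\rho_{\mu_\ell}(P^{a_L})$ large for every $\ell<L$; condition (a) is powered by the distinctness of $\mu_L$ from each $\mu_k$, via the pairwise separating distributions from Theorem \ref{thm:Separation}, which I embed into $P^{a_L}$ through the two new states together with a controlled mass at $x_0$.

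The main obstacle is the tension between (a) and (b) in the inductive step: concentrating $P^{a_L}$ on the large new states enforces (b) but inflates $\rho_{\mu_L}(P^{a_L})$ and jeopardizes (a), while shifting mass onto $x_0$ reduces $\rho_{\mu_L}(P^{a_L})$ but also reduces $\rho_{\mu_\ell}(P^{a_L})$ and jeopardizes (b). To break the symmetry, I would exploit Lemma \ref{lem:Spectral}: since the $\sigma_{\mu_\ell}$ are pairwise distinct nondecreasing right-continuous functions, I can place the probability cut-point $p_L+q_L$ at a level where $\sigma_{\mu_L}$ differs from every $\sigma_{\mu_\ell}$ ($\ell<L$) in a controlled direction. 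This forces the coefficients of $x_{2L-1}$ and $x_{2L}$ in $\rho_{\mu_L}(P^{a_L})$ to sit asymmetrically relative to those in $\rho_{\mu_\ell}(P^{a_L})$, so that, using the boundedness of $\sigma_{\mu_\ell}$ from Assumption \ref{assump}(iii) and the continuity of $\rho_\mu$ in the distribution parameters (as in Lemma \ref{lem:Cont}), the pair $(x_{2L-1},x_{2L})$ together with $(p_L,q_L)$ can be tuned so that (a) and (b) hold simultaneously. Because the inequalities are strict and hence open, a feasible perturbation exists, closing the induction and completing the construction.
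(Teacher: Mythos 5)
The heart of the proposition is exactly the step you leave to "tuning": making the new action simultaneously strictly optimal for $\mu_L$ and strictly suboptimal for every $\mu_\ell$, $\ell<L$, and your sketch does not establish it. Two concrete problems. First, the mechanism you invoke --- choosing the cut-point $p_L+q_L$ "at a level where $\sigma_{\mu_L}$ differs from every $\sigma_{\mu_\ell}$ in a controlled direction" --- need not exist: the crossing lemma the paper relies on (Lemma \ref{lem:NegPosIntervals}) is a pairwise statement, and the intervals on which $\sigma_{\mu_L}$ lies below $\sigma_{\mu_\ell}$ can be disjoint for different $\ell$, so there may be no single level at which $\sigma_{\mu_L}$ sits on a prescribed side of all other spectra at once. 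Second, the structural constraint you impose (the new action supported on $\set{x_0,x_{2L-1},x_{2L}}$ with both new states above every old state) can make conditions (a) and (b) jointly infeasible no matter how the four free parameters are tuned. For instance, let $\mu_1=\delta_0$ (so $\rho_{\mu_1}=\bE$) and $\mu_L=\delta_{1-\kappa}$ (so $\rho_{\mu_L}=\avar_\kappa$) with $\kappa$ small, and write $v_1:=\bE\bigl[X^{a_1}_G\bigr]>0$ and $m_L:=\min_{k<L}\avar_\kappa\bigl(X^{a_k}_G\bigr)\le x_{2L-2}$. If $P^{a_L}\bigl(\set{x_{2L-1},x_{2L}}\bigr)\ge\kappa$, then $\avar_\kappa\bigl(X^{a_L}_G\bigr)\ge x_{2L-1}>x_{2L-2}\ge m_L$ and (a) fails; if that mass is below $\kappa$, then $\avar_\kappa\bigl(X^{a_L}_G\bigr)=\kappa^{-1}\bE\bigl[X^{a_L}_G\bigr]$, so (a) forces $\bE\bigl[X^{a_L}_G\bigr]<\kappa\, x_{2L-2}$, which contradicts (b) for the expectation whenever $\kappa\le v_1/x_{2L-2}$. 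Since your induction hypothesis ("start from a valid configuration for $\mu_1,\dots,\mu_{L-1}$") carries no quantitative information about $v_\ell$, $m_L$, or the state values, nothing rules this out, and the step as stated is not valid. (Minor additional points: you invoke Assumption \ref{assump}(iii), which the proposition does not assume and the paper's proof avoids, and the base case is not a "direct transcription" of Theorem \ref{thm:Separation}, whose construction uses a shared support rather than your private-pair structure.)

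For contrast, the paper's proof never fixes private supports nor forces new states to dominate old ones. It maintains a partition of $\set{1,\dots,L}$ into groups, one distribution per group strictly preferred by all measures in that group, together with the minimal preference gap $\Delta^K$; at each step it splits one group by perturbing the quantile function of that group's distribution by amounts $h^\pm\le\Delta^K/2$ localized on intervals where the two chosen spectra cross (and away from the finitely many jumps of the quantile function). The smallness of $h^\pm$ automatically preserves all previously established strict preferences, and the crossing only ever has to separate the two measures currently being split, so no simultaneous comparison against all other spectra is needed. If you want to rescue your route, you would need to carry an analogous quantitative invariant through the induction (uniform gaps, including under the not-yet-processed measures) and abandon the requirement that each new pair of states lies above all previous ones; as written, the proposal has a genuine gap precisely where the proof has to do its work.
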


It is important to note, however, that multiple-choice questions may overwhelm clients, potentially reducing answer accuracy -- a consideration beyond the scope of this paper. Additionally, potential convergence and stability issues warrant further investigation. We refer to \cite{Zhu2022Contextual} for discussions on related topics.

\subsection{Infinite-horizon case}\label{subsec:InfHor}
Human decision-making typically involves long-term planning and dynamic adjustments. However, considering every potential outcome is impractical, so individuals often cope by discounting uncertainties in the distant future. While this may be a simplification, we model such disregard with a discount factor $\gamma\in(0,1)$. Building on the set up of the static case in Section~\ref{sec:Pre}, we characterize the client's risk aversion with the triplet $(c_0,\mu_0,\gamma_0)\in(0,1)\times\cP([0,1])\times(0,1)$, where $c_0$ characterizes a stationary state-dependent cost function as in~\eqref{eq:Setupc0}, $\mu_0$ is a probability measure on $(0,1)$ playing a role similar to~\eqref{eq:Defrhomu}, and $\gamma_0$ represents the discount factor. Adopting the risk-averse dynamic programming framework of \cite{Ruszczynski2010Risk}, we detail our model of client's risk evaluation in Definition~\ref{def:DRM}.

Theorem~\ref{thm:Separation2} is the infinite-horizon counterpart to Theorem~\ref{thm:Separation}. This indicates that for any two distinct risk aversion profiles, there must exist a binary choice question (see Section \ref{subsec:ProofSeparation2} for detailed settings) that leads to different choices. The proof is provided in Section~\ref{subsec:ProofSeparation2}.
\begin{theorem}\label{thm:Separation2}
For any $(c',\mu',\gamma')\in(0,1)\times\cP([0,1])\times(0,1)$ that is different from $(c_0,\mu_0,\gamma_0)$, there exists $H\in\big(\cP(\bX)\times\cM(\bX)\big)^\bA$ such that
% Suppose $(c_0,\mu_0,\gamma_0), (c',\mu',\gamma')\in(0,1)\times\cP([0,1])\times(0,1)$ and $(c_0,\mu_0,\gamma_0)\neq(c',\mu',\gamma')$. Then, there exists $H\in\big(\cP(\bX)\times\cM(\bX)\big)^\bA$ such that 
\begin{align*}
\argmin_{a\in\bA} \varrho_{c_0,\mu_0,\gamma_0}\left(H^a\right) \cap \argmin_{a\in\bA} \varrho_{c',\mu',\gamma'}\left(H^a\right) = \emptyset.
\end{align*}
\end{theorem}

The analogue of Theorem~\ref{thm:SuffFinite}, Corollary~\ref{cor:RandomDesignConstitent}, and Theorem~\ref{thm:QuantDisEnv} in this infinite-horizon setting can be established using arguments that parallel the static case. For the extension of Theorem~\ref{thm:QuantDisEnv} in particular, equations~\eqref{eq:InfHorStateHomo1} and~\eqref{eq:InfHorStateHomo2} can be applied when $(c_0,\mu_0)\neq(c',\mu')$, while equations~\eqref{eq:InfHorg1} and~\eqref{eq:InfHorg2} handle the case where $(c_0,\mu_0)=(c',\mu')$ but $\gamma_0\neq \gamma'$.

\subsection{A Bayesian framework}
In both the static and infinite-horizon settings, the following Bayesian framework for IRL is a reasonable alternative \citep[cf.][]{Buening2022Environment} to approach this learning problem. One reason to adopt the Bayesian framework is that it naturally takes into account sub-optimality, as illustrated below under the finite candidate setup (cf. Section~\ref{ssec:Experiments1}). Assume that the agent, given risk aversion $\ell$, chooses action $a$ with probability 
\begin{align*}
\frac{h(V(a,\ell,G))}{\sum_{i\in\bA}h(V(i,\ell,G))},
\end{align*}
where $h:\bR\to(0,\infty)$ is nonincreasing. Suppose additionally that the choices are independent across different rounds. After $N$ rounds of interactions, under the Bayesian paradigm that 
\begin{align*}
\text{posterior} \propto \text{likelihood} \times \text{prior} ,
\end{align*}
with uniform prior, the posterior distribution for a risk-aversion candidate $\ell$ is proportional to
\begin{align*}
\prod_{n=1}^N\frac{h(V(a_n^*,\ell,G_n))}{\sum_{i\in\bA}h(V(i,\ell,G_n))}.
\end{align*}
It is possible to consider $h$ unknown and integrate the estimation of $h$ as a part of the IRL. It would be interesting to study the identifiability and the convergence of posterior distributions under this setting. However, formulating a non-parametric Bayesian framework is challenging because it requires working with non-trivial probability measures on $\cP([0,1])$.

\subsection{Function approximation}
The challenge of the optimization problem in the infinite-horizon case lies in the handling of the dynamic programming equation $V$ in \eqref{eq:Evaluation}, as both the optimality gap and distinguishing power then involve the optimal value function. For finite spaces $\bX$ and $\bA$, one can make use of fixed-point iteration using a brute force approach for all possible environments and risk aversion candidates. Instead, another approach consists of using neural network as function approximators of $V$.
Using neural networks helps mitigate the cost of computing the distinguishing power for the learner every time a new environment $G$ appears. This ultimately becomes important when the action space $\bA$ and/or state space $\bX$ are continuous. Neural network structures are also known to be universal approximators \citep[cf.][]{pinkus1999approximation} which allows the estimation of $V$ to any arbitrary accuracy given a sufficiently large neural net.

One may consider a neural net, denoted $V^{\theta}$, parametrized by some parameters $\theta$ that takes as inputs a risk aversion characterization $(c_\ell, \mu_{\ell}, \gamma_\ell)$ as well as an environment $G$, and outputs $V(a,\ell,G)$ for all actions $a \in \bA$.
Depending on the class of dynamic risk measures under study, the neural network may be trained with a nested simulation framework to approximate general risk aversion characterizations under coherent risk measures \citep[see e.g.,][]{tamar2016sequential}.
Alternatively, one could focus on elicitable dynamic risk measures \citep[see e.g.,][]{coache2023conditionally,pesenti2025risk}, such as subclasses of spectral and distortion risk measures, and make use of strictly consistent scoring functions to efficiently approximate the dynamic risk without any nested simulation. 
We leave as future work a formal validation of this methodology for continuous states with function approximators.

\section{Concluding Remarks}\label{sec:Conclusion}

In this paper, we propose an IRL framework for eliciting an agent's risk preferences in an interactive manner, tailored to robo-advising questionnaires. We mainly focus on the static one-period case, where we study identifiability and convergence rate, but also conduct a preliminary analysis in the dynamic infinite-horizon case. In both settings, we prove the existence of a distinguishing question for any two risk aversions and derive lower bounds for the (relative) distinguishing power. In addition, we provide algorithms to solve this class of IRL problem. Our first set of experiments showcases that, for a finite set of candidate risk aversions, our design rules for selecting questions results in faster convergence than choosing the games uniformly. Our second set of experiments highlights that such approaches may be adapted to allow richer parametric spaces, with a suitable empirical efficiency in fewer than 50 rounds of interactions.

\section*{Acknowledgments}

% %The authors thank [...] for [...].
ZC acknowledges financial support from the Guangzhou-HKUST(GZ) Joint Funding Program (Grant No. 2024A03J0630). AC acknowledges support from the Fonds de recherche du Québec -- Nature et technologies (B2X-270105).  SJ acknowledges support from the Natural Sciences and Engineering Research Council of Canada (RGPIN-2018-05705) and the University of Toronto's Data Science Institute.

\appendix

\newpage
\bibliographystyle{abbrv}
\bibliography{refs}

@article{osa2018algorithmic,
  title={An algorithmic perspective on imitation learning},
  author={Osa, Takayuki and Pajarinen, Joni and Neumann, Gerhard and Bagnell, J Andrew and Abbeel, Pieter and Peters, Jan and others},
  journal={Foundations and Trends{\textregistered} in Robotics},
  volume={7},
  number={1-2},
  pages={1--179},
  year={2018},
  publisher={Now Publishers, Inc.}
}

@inproceedings{ellis2024generalized,
  title={A generalized acquisition function for preference-based reward learning},
  author={Ellis, Evan and Ghosal, Gaurav R and Russell, Stuart J and Dragan, Anca and B{\i}y{\i}k, Erdem},
  booktitle={2024 IEEE International Conference on Robotics and Automation (ICRA)},
  pages={2814--2821},
  year={2024},
  organization={IEEE}
}

@inproceedings{bain1995framework,
  title={A Framework for Behavioural Cloning.},
  author={Bain, Michael and Sammut, Claude},
  booktitle={Machine Intelligence 15},
  pages={103--129},
  year={1995}
}

@inproceedings{ross2011reduction,
  title={A reduction of imitation learning and structured prediction to no-regret online learning},
  author={Ross, St{\'e}phane and Gordon, Geoffrey and Bagnell, Drew},
  booktitle={Proceedings of the Fourteenth International Conference on Artificial Intelligence and Statistics},
  pages={627--635},
  year={2011},
  organization={JMLR Workshop and Conference Proceedings}
}

@article{kantorovich1958space,
  title={On a space of totally additive functions},
  author={Kantorovich, Leonid Vasilevich and Rubinshtein, SG},
  journal={Vestnik of the St. Petersburg University: Mathematics},
  volume={13},
  number={7},
  pages={52--59},
  year={1958},
  publisher={Allerton Press, Inc.}
}

@article{coache2023conditionally,
  title={Conditionally elicitable dynamic risk measures for deep reinforcement learning},
  author={Coache, Anthony and Jaimungal, Sebastian and Cartea, {\'A}lvaro},
  journal={SIAM Journal on Financial Mathematics},
  volume={14},
  number={4},
  pages={1249--1289},
  year={2023},
  publisher={SIAM}
}

@article{tamar2016sequential,
  title={Sequential decision making with coherent risk},
  author={Tamar, Aviv and Chow, Yinlam and Ghavamzadeh, Mohammad and Mannor, Shie},
  journal={IEEE Transactions on Automatic Control},
  volume={62},
  number={7},
  pages={3323--3338},
  year={2016},
  publisher={IEEE}
}

@book{Villani2008book,
  title={Optimal transport: Old and new},
  author={Villani, C{\'e}dric},
  volume={338},
  year={2008},
  publisher={Springer Science \& Business Media}
}

@book{Shapiro2021book,
  title={Lectures on Stochastic Programming: Modeling and Theory, Third Edition},
  author={Shapiro, A. and Dentcheva, D. and Ruszczynski, A},
  year={2021},
  publisher={Springer}
}

@article{Ruszczynski2010Risk,
  title={Risk-averse dynamic programming for {Markov} decision processes},
  author={Ruszczy\'nski, A.},
  journal={Mathematical Programming, Series B},
  volume={125},
  pages={235--261},
  year={2010}
}

@book{Aliprantis2006book,
  title={Infinite Dimensional Analysis: A Hitchhiker’s Guide},
  author={Aliprantis, C. D. and Border, K. C.},
  year={2006},
  publisher={Springer-Verlag Berlin Heidelberg}
}

@article{Kusuoka2001Law,
  title={On law invariant coherent risk measures},
  author={Kusuoka, S.},
  journal={Advances in Mathematical Economics},
  volume={},
  number={},
  pages={83--95},
  year={2001}
}

@article{Artzner1999Coherent,
  title={Coherent measures of risk},
  author={Artzner, P. and Delbaen, F. and Eber, J.M. and Heath, D.},
  journal={Mathematical Finance},
  volume={9},
  number={3},
  pages={203--228},
  year={1999}
}

@article{Delbaen2002Coherent,
  title={Coherent risk measures on general probability spaces},
  author={Delbaen, F.},
  editor={Sandmann, K. and Sch\"onbucher, P. J.},
  journal={Advances in Finance and Stochastics},
  volume={9},
  number={3},
  pages={1--37},
  year={2002},
  publisher={Springer-Verlag}
}

@article{Jouini2006Law,
  title={Law Invariant Risk Measures have the {Fatou} Property},
  author={Ely\`es Jouini and Walter Schachermayer and Nizar Touzi },
  journal={Advances in Mathematical Economics},
  volume={},
  number={},
  pages={49-71},
  year={2006}
}

@article{Rockafellar2000Optimization,
  title={Optimization of conditional value-at-risk},
  author={R. T. Rockafellar and S. Uryasev},
  journal={Journal of Risk},
  volume={2},
  number={},
  pages={21-42},
  year={2000}
}

@article{Wang2022Risk,
  title={Risk-averse autonomous systems: A brief history and recent developments from the perspective of optimal control},
  author={Yuheng Wang and Margaret P. Chapman},
  journal={Artificial Intelligence},
  volume={311},
  number={},
  pages={},
  year={2022}
}

@article{Adams2022Survey,
  title={A survey of inverse reinforcement learning},
  author={Stephen Adams and Tyler Cody and Peter A. Beling},
  journal={Artificial Intelligence Review},
  volume={55},
  number={},
  pages={4307–4346},
  year={2022}
}

@article{Arora2021Survey,
  title={A survey of inverse reinforcement learning: Challenges, methods and progress},
  author={Saurabh Arora and Prashant Doshi},
  journal={Artificial Intelligence},
  volume={297},
  number={},
  pages={},
  year={2021}
}

@article{Guo2021Learning,
  title={Learning from an Exploring Demonstrator: Optimal Reward Estimation for Bandits},
  author={Wenshuo Guo and Kumar Krishna Agrawal and Aditya Grovery and Vidya Muthukumarz and Ashwin Pananjady},
  journal={International Conference on Artificial Intelligence and Statistics},
  volume={},
  number={},
  pages={},
  year={2021}
}

@article{Ziebart2008Maximum,
  title={Maximum Entropy Inverse Reinforcement Learning},
  author={Brian D. Ziebart and Andrew Maas and J.Andrew Bagnell and Anind K. Dey},
  journal={Proceedings of the Twenty-Third AAAI Conference on Artificial Intelligence},
  volume={},
  number={},
  pages={},
  year={2008}
}

@article{Ng2000Algorithms,
  title={Algorithms for Inverse Reinforcement Learning},
  author={Andrew Y. Ng and Stuart J. Russell},
  journal={Proceedings of the Seventeenth International Conference on Machine Learning},
  volume={},
  number={},
  pages={663-670},
  year={2000}
}

@article{Ng2004Apprenticeship,
  title={Apprenticeship Learning via Inverse Reinforcement Learning},
  author={P. Abbeel and A. Y. Ng},
  journal={Proceedings of the Twenty-First International Conference on Machine Learning},
  volume={},
  number={},
  pages={},
  year={2004}
}

@article{Ramachandran2007Bayesian,
  title={Bayesian inverse reinforcement learning},
  author={Deepak Ramachandran and Eyal Amir},
  journal={Proceedings of the 20th International Joint Conference on Artifical Intelligence},
  volume={},
  number={},
  pages={2586–2591},
  year={2007}
}

@article{Lopes2009Active,
  title={Active learning for reward estimation in inverse reinforcement learning},
  author={Manuel Lopes and Francisco Melo and Luis Montesano },
  journal={Machine Learning and Knowledge Discovery in Databases},
  volume={},
  number={},
  pages={31–46},
  year={2009}
}

@article{Amin2017Repeated,
  title={Repeated Inverse Reinforcement Learning},
  author={Kareem Amin and Nan Jiang and Satinder Singh},
  journal={Advances in Neural Information Processing Systems},
  volume={30},
  number={},
  pages={},
  year={2017}
}

@article{Amin2016Towards,
  title={Towards Resolving Unidentifiability in Inverse Reinforcement Learning},
  author={Kareem Amin and Satinder Singh},
  journal={arXiv:1601.06569},
  volume={},
  number={},
  pages={},
  year={2016}
}

@article{Ratliff2006Maximum,
  title={Maximum Margin Planning},
  author={Nathan D. Ratliff and J. Andrew Bagnell and Martin A. Zinkevich},
  journal={Proceedings of the 23rd International Conference on Machine Learning},
  volume={},
  number={},
  pages={729–736},
  year={2006}
}

@article{Kim2021Reward,
  title={Reward Identification in Inverse Reinforcement Learning},
  author={Kuno Kim and Shivam Garg and Kirankumar Shiragur and Stefano Ermon},
  journal={Proceedings of the 38th International Conference on Machine Learning},
  volume={139},
  number={},
  pages={5496-5505},
  year={2021}
}

@article{Viktor2023Invariance,
  title={Invariance in Policy Optimisation and Partial Identifiability in Reward Learning},
  author={Joar Max Viktor Skalse and Matthew Farrugia-Roberts and Stuart Russell and Alessandro Abate and Adam Gleave},
  journal={Proceedings of the 40th International Conference on Machine Learning},
  volume={202},
  number={},
  pages={32033-32058},
  year={2023}
}

@article{Majumdar2017Risk,
  title={Risk-sensitive Inverse Reinforcement Learning via Coherent Risk Models},
  author={Anirudha Majumdar and Sumeet Singh and Ajay Mandlekar and Marco Pavone},
  journal={Robotics: Science and Systems},
  volume={},
  number={},
  pages={},
  year={2017}
}

@article{Singh2018Risk,
  title={Risk-sensitive inverse reinforcement learning via semi- and non-parametric methods},
  author={Sumeet Singh and Jonathan Lacotte and Anirudha Majumdar and Marco Pavone},
  journal={Robotics: Science and Systems},
  volume={37},
  number={13-14},
  pages={},
  year={2018}
}

@article{Chen2019Active,
  title={Active Learning for Risk-Sensitive Inverse Reinforcement Learning},
  author={R Chen and W Wang and Z Zhao and D Zhao},
  journal={arXiv:1909.07843},
  volume={},
  number={},
  pages={},
  year={2019}
}

@article{Fu2018Learning,
  title={Learning Robust Rewards with Adverserial Inverse Reinforcement Learning},
  author={Justin Fu and Katie Luo and Sergey Levine},
  journal={International Conference on Learning Representations},
  volume={},
  number={},
  pages={},
  year={2018}
}

@article{Li2018Minimizing,
  title={Minimizing Risk Exposure When the Choice of a Risk Measure Is Ambiguous},
  author={Erick Delage and Jonathan Yu-Meng Li},
  journal={Management Science},
  volume={64},
  number={1},
  pages={327-344},
  year={2018}
}

@article{Li2021Minimizing,
  title={Inverse Optimization of Convex Risk Functions},
  author={Jonathan Yu-Meng Li},
  journal={Management Science},
  volume={67},
  number={11},
  pages={7113-7141},
  year={2021}
}

@article{Charness2012Experimental,
  title={Experimental methods: Eliciting risk preferences},
  author={Gary Charness and Uri Gneezy and Alex Imas},
  journal={Journal of Economic Behavior and Organization},
  volume={87},
  number={},
  pages={43–51},
  year={2012}
}

@article{Mata2018Risk,
  title={Risk Preference: A View from Psychology},
  author={Rui Mata and Renato Frey and David Richter and J\"urgen Schupp and Ralph Hertwig},
  journal={Journal of Economic Perspectives},
  volume={32},
  number={2},
  pages={155–172},
  year={2018}
}

@article{Charness2021Experimental,
  title={Experimental methods: Eliciting beliefs},
  author={Gary Charness and Uri Gneezy and Vlastimil Rasocha},
  journal={Journal of Economic Behavior and Organization},
  volume={189},
  number={},
  pages={234-256},
  year={2021}
}

@article{Horisch2018AreRisk,
  title={Are Risk Preferences Stable?},
  author={Hannah Schildberg-H\"orisch},
  journal={Journal of Economic Perspectives},
  volume={32},
  number={2},
  pages={135–154},
  year={2018}
}

@article{Ratliff2020Inverse,
  title={Inverse Risk-Sensitive Reinforcement Learning},
  author={Lillian J. Ratliff and Eric Mazumdar},
  journal={IEEE Transactions on Automatic Control},
  volume={65},
  number={3},
  pages={},
  year={2020}
}

@article{Cao2021Identifiability,
  title={Identifiability in inverse reinforcement learning},
  author={Haoyang Cao and Samuel Cohen and Lukasz Szpruch},
  journal={Advances in Neural Information Processing Systems 34},
  volume={},
  number={},
  pages={},
  year={2021}
}

@article{Buening2022Environment,
  title={Environment Design for Inverse Reinforcement Learning},
  author={Thomas Kleine Buening and Christos Dimitrakakis},
  journal={arXiv:2210.14972},
  volume={},
  number={},
  pages={},
  year={2022}
}

@article{Buening2022Interactive,
  title={Interactive inverse reinforcement learning for cooperative games},
  author={Thomas Kleine B\"uning and Anne-Marie George and Christos Dimitrakakis},
  journal={International Conference on Machine Learning},
  volume={},
  number={},
  pages={2393–2413},
  year={2022}
}

@article{Acerbi2002Spectral,
  title={Spectral measures of risk: A coherent representation of subjective risk aversion},
  author={Carlo Acerbi},
  journal={Journal of Banking and Finance},
  volume={26},
  number={},
  pages={1505–1518},
  year={2002}
}

@book{sutton2018reinforcement,
  title={Reinforcement learning: An introduction},
  author={Sutton, Richard S and Barto, Andrew G},
  year={2018},
  publisher={MIT press}
}

@article{pinkus1999approximation,
  title={Approximation theory of the {MLP} model in neural networks},
  author={Pinkus, Allan},
  journal={Acta Numerica},
  volume={8},
  pages={143--195},
  year={1999},
  publisher={Cambridge University Press}
}

@article{pesenti2025risk,
  title={Risk budgeting allocation for dynamic risk measures},
  author={Pesenti, Silvana M and Jaimungal, Sebastian and Saporito, Yuri F and Targino, Rodrigo S},
  journal={Operations Research},
  volume={73},
  number={3},
  pages={1208--1229},
  year={2025},
  publisher={INFORMS}
}

@article{Thompson2022Measuring,
  title={Measuring the gap between elicited and revealed risk for investors: An empirical study},
  author={John R. J. Thompson and Longlong Feng and R. Mark Reesor and Chuck Grace and Adam Metzler },
  journal={Financial Planning Review},
  volume={5},
  number={},
  pages={},
  year={2022}
}

@article{Kalinchenko2013Calibrating,
  title={Calibrating risk preferences with the generalized capital asset pricing model based on mixed conditional value-at-risk deviation},
  author={K. Kalinchenko and S. Uryasev and R. T. Rockafellar},
  journal={The Journal of Risk},
  volume={15},
  number={1},
  pages={45–70},
  year={2013}
}

@article{Grechuk2014Inverse,
  title={Inverse portfolio problem with mean-deviation model},
  author={Bogdan Grechuk and Michael Zabarankin},
  journal={European Journal of Operational Research},
  volume={234},
  number={},
  pages={281–490},
  year={2014}
}

@article{Grechuk2016Inverse,
  title={Inverse portfolio problem with coherent risk measures},
  author={Bogdan Grechuk and Michael Zabarankin},
  journal={European Journal of Operational Research},
  volume={249},
  number={},
  pages={740–750},
  year={2016}
}

@article{Kahneman1979Prospect,
  title={Prospect Theory: An Analysis of Decision under Risk},
  author={Kahneman, Daniel and Tversky, Amos},
  journal={Econometrica},
  volume={47},
  number={2},
  pages={263–291},
  year={1979}
}

@article{Harrison2016Cumulative,
  title={Cumulative Prospect Theory in the Laboratory: A Reconsideration},
  author={Glenn Harrison and Todd Swarthout},
  journal={Experimental Economics Center Working Paper Series},
  volume={},
  number={},
  pages={},
  year={2016}
}

@article{Rieger2017Estimating,
  title={Estimating cumulative prospect theory parameters from an international survey},
  author={Rieger, M.O. and Wang, M. and Hens, T.},
  journal={Theory and Decision},
  volume={82},
  number={},
  pages={567-596},
  year={2017}
}

@article{lHaridon2019All,
  title={All over the map: A worldwide comparison of risk preferences},
  author={Olivier l’Haridon and Ferdinand M. Vieider},
  journal={Quantitative Economics},
  volume={10},
  number={},
  pages={185–215},
  year={2019}
}

@article{Alsabah2020Robo,
  title={Robo-Advising: Learning Investors’ Risk Preferences via Portfolio Choices},
  author={Humoud Alsabah and Agostino Capponi and Octavio Ruiz Lacedelli and Matt Stern},
  journal={Journal of Financial Econometrics},
  volume={19},
  number={2},
  pages={369–392},
  year={2021}
}

@article{Chan2022Inverse,
  title={Inverse Optimization: Theory and Applications},
  author={Timothy C. Y. Chan and Rafid Mahmood and Ian Yihang Zhu},
  journal={arXiv:2109.03920},
  volume={},
  number={},
  pages={},
  year={2022}
}

@book{Wainer2000Computerized,
  title={Computerized Adaptive Testing: A Primer (2nd Edition)},
  author={Howard Wainer and Neil J. Dorans and Daniel Eignor and Ronald Flaugher and Bert F. Green and Robert J. Mislevy and Lynne Steinberg and David Thissen},
  year={2000},
  publisher={Routledge}
}

@article{Cavagnaro2013Optimal,
  title={Optimal Decision Stimuli for Risky Choice Experiments: An Adaptive Approach},
  author={Daniel R. Cavagnaro and Richard Gonzalez and Jay I. Myung and Mark A. Pitt},
  journal={Management Science},
  volume={59},
  number={2},
  pages={358-375},
  year={2013}
}

@article{Toubia2004Polyhedral,
  title={Polyhedral methods for adaptive choice--based conjoint analysis},
  author={Olivier Toubia and John R. Hauser and Duncan I. Simester},
  journal={Journal of Marketing Research},
  volume={41},
  number={1},
  pages={116-131},
  year={2004}
}

@article{Fujita2023Adaptive,
  title={Adaptive optimal stimulus selection in cognitive models using a model averaging approach},
  author={K. Fujita and K. Okada},
  journal={Behaviormetrika},
  volume={50},
  number={},
  pages={431-463},
  year={2023}
}

@article{Howes2023Towards,
  title={Towards machines that understand people},
  author={Andrew Howes and Jussi P. P. Jokinen and Antti Oulasvirta},
  journal={AI Magazine},
  volume={44},
  number={3},
  pages={312-327},
  year={2023}
}

@article{Capponi2022Personalized,
  title={Personalized Robo-Advising: Enhancing Investment Through Client Interaction},
  author={Agostino Capponi and Sveinn \'Olafsson and Thaleia Zariphopoulou},
  journal={Management Science},
  volume={68},
  number={4},
  pages={2485–2512},
  year={2022}
}

@article{He2022Wisdom,
  title={The Wisdom of Model Crowds},
  author={Lisheng He and Pantelis P. Analytis and Sudeep Bhatia},
  journal={Management Science},
  volume={68},
  number={5},
  pages={3635-3659},
  year={2022}
}

@article{Allais1953Le,
  title={Le comportement de l’homme rationnel devant le risque: Critique des postulats et axiomes de l'\'ecole am\'ericaine.},
  author={M. Allais},
  journal={Econometrica},
  volume={21},
  number={4},
  pages={503–546},
  year={1953}
}

@article{Ellsberg1961Risk,
  title={Risk, Ambiguity, and the Savage Axioms},
  author={Daniel Ellsberg},
  journal={The Quarterly Journal of Economics},
  volume={75},
  number={4},
  pages={643-669},
  year={1961}
}

@article{Toubia2013Dynamic,
  title={Dynamic Experiments for Estimating Preferences: An Adaptive Method of Eliciting Time and Risk Parameters},
  author={Olivier Toubia and Eric Johnson and Theodoros Evgeniou and Philippe Delqui\'e},
  journal={Management Science},
  volume={59},
  number={3},
  pages={613-640},
  year={2013}
}

@article{Fidanoski2023ztree,
  title={A z-Tree implementation of the Dynamic Experiments for Estimating
Preferences [DEEP] method},
  author={Filip Fidanoski and Timothy Johnson},
  journal={Journal of Behavioral and Experimental Finance},
  volume={38},
  number={},
  pages={},
  year={2023}
}

@article{Tversky1992Dynamic,
  title={Advances in prospect theory: Cumulative representation of uncertainty},
  author={A. Tversky and D. Kahneman},
  journal={Journal of Risk and Uncertainty},
  volume={5},
  number={4},
  pages={297–323},
  year={1992}
}

@article{Myunga2013Tutorial,
  title={A Tutorial on Adaptive Design Optimization},
  author={Jay I. Myunga and Daniel R. Cavagnarob and Mark A. Pitt},
  journal={Journal of Mathematical Psychology},
  volume={57},
  number={4-5},
  pages={53-67},
  year={2013}
}

@article{Greenhill2020Bayesian,
  title={Bayesian Optimization for Adaptive Experimental Design: A Review},
  author={S. Greenhill and S. Rana and S. Gupta and P. Vellanki and S. Venkatesh},
  journal={IEEE Access},
  volume={8},
  number={},
  pages={13937-13948},
  year={2020}
}

@article{Huan2024Optimal,
  title={Optimal experimental design: Formulations and computations},
  author={Xun Huan and Jayanth Jagalur and Youssef Marzouk},
  journal={Acta Numerica},
  volume={},
  number={},
  pages={715-840},
  year={2024}
}

@article{Rainforth2024Modern,
  title={Modern Bayesian Experimental Design},
  author={Tom Rainforth and Adam Foster and Desi R. Ivanova and Freddie Bickford Smith},
  journal={Statistical Science},
  volume={39},
  number={1},
  pages={100-114},
  year={2024}
}

@article{Dai2025Preference,
  title={Preference-fitting Method: Portfolio-induced Utility and Piecewise HARA Approximation},
  author={Rui Dai and Zongxia Liang and Yang Liu},
  journal={Preprint},
  volume={},
  number={},
  pages={},
  year={2025}
}

@article{Metelli2023Towards,
  title={Towards Theoretical Understanding of Inverse Reinforcement Learning},
  author={Alberto Maria Metelli and Filippo Lazzati and Marcello Restelli},
  journal={Proceedings of the 40th International Conference on Machine Learning},
  volume={1022},
  number={},
  pages={24555 - 24591},
  year={2023}
}

@article{Zhao2024Is,
  title={Is Inverse Reinforcement Learning Harder than Standard Reinforcement Learning? {A} Theoretical Perspective},
  author={L. Zhao and M. Wang and Y. Bai},
  journal={Proceedings of the 41st International Conference on Machine Learning},
  volume={235},
  number={},
  pages={60957--61020},
  year={2024}
}

@article{Zhu2022Contextual,
  title={Contextual Bandits with Large Action Spaces: Made Practical},
  author={Yinglun Zhu and Dylan J Foster and John Langford and Paul Mineiro},
  journal={Proceedings of the 39th International Conference on Machine Learning},
  volume={162},
  number={},
  pages={27428-27453},
  year={2022}
}

@article{Zhao2020Ambiguity,
  title={Ambiguity, Nominal Bond Yields, and Real Bond Yields},
  author={Guihai Zhao},
  journal={American Economic Review: Insights},
  volume={2},
  number={2},
  pages={177-192},
  year={2020}
}

@article{Delage2019Dicesion,
  title={``Dice''-sion–Making Under Uncertainty: When Can a Random Decision Reduce Risk?},
  author={Erick Delage and Daniel Kuhn and Wolfram Wiesemann},
  journal={Management Science},
  volume={65},
  number={7},
  pages={3282-3301},
  year={2019}
}

@book{Royden1988book,
  title={Real Analysis},
  author={H. L. Royden},
  year={1988},
  publisher={MacMillan}
}

@article{Cox2014Utility,
  title={UTILITY THEORY FRONT TO BACK — INFERRING UTILITY FROM AGENTS' CHOICES},
  author={Alexander M. G. Cox and David Hobson and Jan Obl\'oj},
  journal={International Journal of Theoretical and Applied Finance},
  volume={17},
  number={3},
  pages={},
  year={2014}
}

@article{Wang2025Bayesian,
  title={Bayesian Optimization with Preference Exploration Using a Monotonic Neural Network Ensemble},
  author={H. Wang and J. Branke and M. Poloczek},
  journal={The Thirty-Ninth Annual Conference on Neural Information Processing Systems},
  volume={},
  number={},
  pages={},
  year={2025}
}

\newpage
\section{Related works}\label{sec:RelatedWorks}

The prevailing belief that behavioral demonstrations reflect human values has led to the development of IRL, a field dedicated to understanding an agent's objectives through their behavior. Specifically, IRL aims to estimate an agent's utility function by observing their assumed optimal policy within a known environment \citep[cf.][]{Ng2000Algorithms, Ng2004Apprenticeship, Ratliff2006Maximum, Ramachandran2007Bayesian, Ziebart2008Maximum}. We also refer to \cite{Arora2021Survey, Adams2022Survey, Chan2022Inverse} for surveys on recent progress of IRL.

This type of IRL problem is of great significance in behavioral prediction, where one gathers agent's reactions to a known environment in order to: (i) determine agent's preferences; or (ii) predict agent's reactions to other situations \citep[cf.][]{bain1995framework,ross2011reduction,osa2018algorithmic}. In this paper, we focus on the former. This is because we are ultimately interested in understanding the client's risk preferences in a robo-advisor context (see Section 1 for an overview), rather than merely replicating their (possibly sub-optimal) behavior as it is done in behavioral cloning.

Most studies in IRL operate under the assumption that the agent is risk-neutral. Even in this restricted setting, IRL still suffers from a lack of identifiability \citep{Ng2000Algorithms}. Numerous studies have attempted to address this issue. For instance, \cite{Fu2018Learning} proposes using an adversarial reward learning formulation. \cite{Guo2021Learning} and the references therein showcase that learning from exploring agents alleviates the identifiability issue given a limited number of agent demonstrations. \cite{Cao2021Identifiability} considers an entropy-regularized utility function and analyzes the sufficient condition for identifiability. \cite{Kim2021Reward} embeds the domain of a Markov decision process model into a graph and reasons about how properties of the graph relate to identifiability. \cite{Viktor2023Invariance} formally characterizes the partial identifiability of the reward function and analyze the impact of partial identifiability.  Several works, including \cite{Metelli2023Towards, Zhao2024Is}, develop theory for the feasibility set of reward functions and examine related convergence rates. \cite{ellis2024generalized} tackles the identifiability issue by considering acquisition functions that allow learning the reward function up to a behavioral equivalence class.

Despite the identifiability issue presented, in light of a series of empirical studies confronting the usage of expected utility in risk aversion modeling \citep{Allais1953Le, Ellsberg1961Risk, Kahneman1979Prospect}, we must consider components beyond expectation. The study of human risk aversion toward uncertainty has been extensively conducted in the fields of psychology, economics, finance, and marketing. We refer to \cite{He2022Wisdom} for a comprehensive list of such models.

A great number of studies utilize questionnaires for eliciting risk aversion \citep[cf.][and the references therein]{Charness2012Experimental, Harrison2016Cumulative, Rieger2017Estimating, Mata2018Risk, Horisch2018AreRisk, lHaridon2019All, Charness2021Experimental, Thompson2022Measuring}. The method of dynamic experimentation can be used to improve the efficiency of questionnaire. Instead of a set of pre-fixed questions in a regular questionnaire, dynamic experimentation presents different questions in real time based on the subject's previous responses. We provide a few works that are closely related to our setup. Within a hierarchical Bayesian framework, \cite{Toubia2013Dynamic} models the risk profile using cumulative prospect theory \citep{Tversky1992Dynamic} and designs the elicitation questions by maximizing the norm of the Hessian of the posterior distribution at its mode. As noted by \cite{Myunga2013Tutorial}, under suitable conditions, customizing a potentially highly sophisticated utility function for question design enables the problem to be recast in a Bayesian framework as one of adaptive experimental design \citep[cf.][]{Rainforth2024Modern, Huan2024Optimal, Greenhill2020Bayesian}. \cite{Cavagnaro2013Optimal} introduces an optimal design targeting indifference curves, which is tested on risk aversion models including weighted expected utility and cumulative prospect theory. \cite{Fidanoski2023ztree} proposes a z-tree method that train in advance a dynamic querying strategy to accelerate the execution of \cite{Toubia2013Dynamic}.  In \cite{Majumdar2017Risk}, the authors use coherent risk measures, introduced in the seminal works of \cite{Artzner1999Coherent, Delbaen2002Coherent} for their mathematically sound properties, to model risk aversion. The proposed method is further investigated in \cite{Singh2018Risk} from semi- and non-parametric perspectives. They consider cases with both known and unknown costs. For the known cost scenario, they propose a linear programming approach to elicit the risk profile and prove its consistency. This computational approach is then extended to the case of unknown costs. The numerical efficiency of this method was improved in \cite{Chen2019Active} through the adoption of an additional active learning component, which allows for querying the agent for additional demonstrations. \cite{Fujita2023Adaptive} considers a mixture of risk aversion models and employ a selection criterion based on Fisher information matrices. More experimental studies on this adaptive framework can be found in \cite{Amin2016Towards, Amin2017Repeated, Buening2022Environment, Buening2022Interactive, Wang2025Bayesian} and the references therein, although these are primarily based on expected utility. Nonetheless, the majority of these methods are either designed for parametric models or necessitate substantial, case-specific modifications to be applied in a nonparametric setting. We point out that our data consists of binary-choice questions rather than direct measurements. While binary responses are common in experimental design, our goal is to learn a nonparametric risk profile $(C_0,\mu_0)$ from such comparisons, which alters the nature of the question design and subsequent inference. This viewpoint demands a specially tailored approach that, in essence, reduces the problem to a computational task.

To be comprehensive, we highlight below a few more studies that elicit risk aversions modeled by convex risk measures. Although these studies fall outside the particular adaptive IRL framework we consider, they offer valuable alternative insights on the matter. \cite{Kalinchenko2013Calibrating} calibrates the risk preferences of investors based on generalized capital asset pricing model with data from put-option prices on the S\&P 500. \cite{Cox2014Utility} infers the utility function from optimal consumption and investment strategies. It analyzes the existence of a solution in continuous-time settings and offers insightful analysis on the identifiability issue. \cite{Grechuk2014Inverse} and \cite{Grechuk2016Inverse} examine the inverse portfolio problem for risk measures, exploring the existence of solutions for the inverse problem across different settings. For inference based on various forms of observations, we refer to \cite{Ratliff2020Inverse,Dai2025Preference} and the references therein. \cite{Alsabah2020Robo} models risk aversion using a versatile parametric family. They then learn the risk aversion over time by observing the agent's portfolio choices in different market environments under a certain invertibility assumption. While not limited to inferring the agent's objective, \cite{Li2018Minimizing} formulates a constrained optimization problem based on a series of assessments that infer the agent's risk aversion, with solution methods for such constrained optimization further studied in \cite{Li2021Minimizing}.

\section{Proofs}\label{sec:Proofs}

\subsection{An important technical lemma}\label{subsec:Pf:IndCurve}
In this section, we investigate the concept of indifference curve $g_{c,\mu}(p)$ introduced in \eqref{eq:Defg}. As we will see, the function $g_{c,\mu}$ plays a crucial role in proving several of our main results.

Let
\begin{align}\label{eq:Defrmu}
\underline p_\mu:=\inf\left\{p\in\bR:\int_{1-p}^1\sigma_\mu(\kappa)\dif\kappa=1\right\}.
\end{align}
By Lemma~\ref{lem:Spectral}, the definition in \eqref{eq:Defrmu} in particular implies that 
\begin{align}\label{eq:MonoIntsigma}
\text{$p\mapsto\int^1_{1-p}\sigma_\mu(\kappa)$ is strictly increasing on $[0,\underline p_{\mu})$ and constant on $[\underline p_\mu,1]$.}
\end{align}
The lemma below reveals the properties of $g_{c,\mu}$.
\begin{lemma}\label{lem:g}
Let $c\in(0,1]$ and $\mu\in\cP([0,1])$ with $\mu(\set{1})=0$. The following statements are true:
\begin{itemize}
\item[(a)] For any $p\in[0,1]$, we have
\begin{align}\label{eq:IdenIntg}
\int_{1-g_{c,\mu}(p)}^1 \sigma_\mu(\kappa)\dif\kappa = c\int_{1-p}^1\sigma_\mu(\kappa)\dif\kappa.
\end{align}
\item[(b)] For $p\in[0,1]$, if $q<g_{C,\mu}(p)$, then
\begin{align}
\int^1_{1-q}\sigma_\mu(\kappa)\dif\kappa < c\int^1_{1-g_{c,\mu}(p)}\sigma_\mu(\kappa)\dif\kappa.
\end{align}
The analogue is true with ``$<$'' replaced by ``$>$''.
\item[(c)] $g_{c,\mu}$ is strictly increasing in $[0, \underline p_\mu)$, and constant in $[\underline p_\mu,1]$.
\item[(d)] $g_{C,\mu}(p)=0$ only at $p=0$, and $g_{C,\mu}(1)=\inf\set{q\in[0,1]:\int_{1-q}^1\sigma_\mu(\kappa)\dif\kappa\ge c}<\underline p_\mu$.
\item[(e)] For any $p\in[0,1]$, we have $g_{c,\mu}(p) \le c p$.
\item[(f)] $g_{c,\mu}$ is $c$-Lipschitz continuous,\footnote{This implies that $g_{c,\mu}$ is almost everywhere differentiable and satisfies $g_{c,\mu}(b)-g_{c,\mu}(a) = \int_{a}^b g'_{c,\mu}(p)\dif p$ for any $a,b\in[0,1]$. See, e.g., \cite[Chapter 5]{Royden1988book}} and satisfies for almost every $p\in[0,1]$ that
\begin{align}\label{eq:IDsigma}
\sigma_{\mu}(1-p) = c^{-1}  \sigma_{\mu}\big(1-g_{c,\mu}(p)\big)\dot g_{c,\mu}(p).
\end{align}
\end{itemize}

\end{lemma}
\begin{proof}
\textbf{(a)} In view of \eqref{eq:Defg}, leveraging the definition of infimum, there exists $(q_i)_{i\in\bN}$ such that 
\begin{align}
\int_{1-q_i}^1\sigma_\mu(\kappa)\dif\kappa \ge c \int_{1-p}^1\sigma_\mu(\kappa)\dif\kappa, \quad i\in\bN
\end{align}
and $\lim_{i\to\infty} q_i = g_{c,\mu}(p)$. The above together with the continuity of $p\mapsto\int^1_{1-p}\sigma_\mu(\kappa)\dif\kappa$ implies 
\begin{align}\label{eq:UBIntg}
\int_{1-g_{c,\mu}(p)}^1\sigma_\mu(\kappa)\dif\kappa  = \lim_{i\to\infty} \int_{1-q_i}^1\sigma_\mu(\kappa)\dif\kappa \ge c \int_{1-p}^1\sigma_\mu(\kappa)\dif\kappa.
\end{align}
On the other hand, by \eqref{eq:Defg} again, for any $q'<g_{C,\mu}(p)$, we must have 
\begin{align}
\int_{1-q'}^1\sigma_\mu(\kappa)\dif\kappa  < c \int_{1-p}^1\sigma_\mu(\kappa)\dif\kappa.
\end{align}
By continuity, we have 
\begin{align}\label{eq:LBIntg}
\int_{1-g_{c,\mu}(p)}^1\sigma_\mu(\kappa)\dif\kappa = \lim_{q'\to g_{c,\mu}(p)-}\int_{1-q'}^1\sigma_\mu(\kappa)\dif\kappa \le c \int_{1-p}^1\sigma_\mu(\kappa)\dif\kappa.
\end{align}
By combining \eqref{eq:UBIntg} and \eqref{eq:LBIntg}, we conclude the proof.
\\
\textbf{(b)} Note that, by Lemma~\ref{lem:Spectral}, $\sigma_\mu\ge 0$ and $\int_{1-p}^1\sigma_\mu(\kappa)\dif\kappa\le 1$ for $p\in[0,1]$. Therefore, 
\begin{align}\label{eq:gUpperBound}
g_{C,\mu}(p) \le \inf\bigg\{q\in[0,1]:\int_{1-q}^1\sigma_\mu(\kappa)\dif\kappa\ge c\bigg\} < \underline p_\mu.
\end{align}
The last inequality in \eqref{eq:gUpperBound} is indeed strict; otherwise, we yield the contradiction
\begin{align*}
1=\int_{1-\underline p_\mu}^1\sigma_\kappa\dif\kappa = c %= c\int_0^1\sigma_\mu(\kappa)\dif\kappa
\end{align*}
from \eqref{eq:Defrmu} and statement (a). Lastly, by combining statement (a) and the fact that $p\mapsto\int_{1-p}^1\sigma(\kappa)\dif\kappa$ is strictly increasing in $[0,\underline p_\mu)$ by construction \eqref{eq:Defrmu}, we yield the strict monotonicity of $g_{c,\mu}$ on $[0,\underline p_\mu)$. The rest of the statement follows from that $p\mapsto\int_{1-p}^1\sigma(\kappa)\dif\kappa$ is constant in $[\underline p_\mu, 1]$ due to \eqref{eq:Defrmu}. \\
%\textbf{(c)} This follows immediately from the fact that $C_2\in(0,1)$.\\
\textbf{(c)} In view of \eqref{eq:Defg} and \eqref{eq:MonoIntsigma}, $g_{c,\mu}$ must be constant in $[\underline p_\mu,1]$. Regarding the strict monotonicity of $g_{C,\mu}$ in $[0,\underline p_\mu)$, we proceed by contradiction. Suppose there exist $p,p'$ such that $0\le p< p'<\underline p_\mu$ and $g_{c,\mu}(p)\ge g_{c,\mu}(p')$. Then, by statement (a), we have
$$c\int_{1-p}^1\sigma_\mu(\kappa)\dif\kappa = \int_{1-g_{c,\mu}(p)}^1 \sigma_{\mu}(\kappa)\dif\kappa \ge \int_{1-g_{c,\mu}(p')}^1 \sigma_{\mu}(\kappa)\dif\kappa = c \int_{1-g_{c,\mu}(p')}\sigma(\kappa)\dif\kappa,$$ 
which contradicts \eqref{eq:MonoIntsigma}. \\
\textbf{(d)} It is clear from \eqref{eq:Defg} that $g_{C,\mu}(0)=0$. For $p>0$, by Lemma~\ref{lem:Spectral}, we must have $\sigma_\mu(\kappa)>0$ on a neighborhood of $\kappa=1$, and thus $g_{c,\mu}(p)>0$. The second part of statement (d) was proved in \eqref{eq:gUpperBound}.\\
\textbf{(e)} Due to the monotonicity of $\sigma_\mu$ in Lemma \ref{lem:Spectral}, $p\mapsto \int^1_{1-p}\sigma_\mu$ is concave. It follows that 
\begin{align}
c\int^1_{1-p}\sigma_\mu(\kappa)\dif\kappa = (1-c)\int^1_{1-0}\sigma_\mu(\kappa)\dif\kappa + c\int^1_{1-p}\sigma_\mu(\kappa)\dif\kappa \le \int^1_{1-cp}\sigma_\mu(\kappa)\dif\kappa.
\end{align}
In view of the definition of $g_{c,\mu}$ in \eqref{eq:Defg}, we conclude the proof.\\
\textbf{(f)} Once we establish the Lipschitz continuity of $g_{c,\mu}$, we can differentiate both sides of \eqref{eq:IdenIntg} with respect to $p$ to derive the identity. The remainder of the proof focuses on the Lipschitz continuity of $g_{c,\mu}$. Let $\delta>0$ be sufficiently small. By statement (a) (d) (e) and that $\sigma_\mu$ is nondecreasing as stated in Lemma~\ref{lem:Spectral}, we have
\begin{align*}
&\int_{1-(g_{c,\mu}(p)+c\delta)}^1 \sigma_\mu(\kappa)\dif\kappa = c\int_{1-p}^1 \sigma_\mu(\kappa)\dif\kappa + \int_{1-(g_{c,\mu}(p)+c\delta)}^{1-g_{c,\mu}(p)}\sigma_\mu(\kappa)\dif\kappa\\
&\quad\ge c \int_{1-p}^1 \sigma_\mu(\kappa)\dif\kappa + \int^{1-cp}_{1-(cp+c\delta)}\sigma_\mu(\kappa)\dif\kappa \ge c \int_{1-p}^1 \sigma_\mu(\kappa)\dif\kappa + \int^{1-p}_{1-(p+c\delta)}\sigma_\mu(\kappa)\dif\kappa.
\end{align*}
By change of variable,
\begin{align}
\int^{1-p}_{1-p-c\delta}\sigma_\mu(\kappa)\dif\kappa \stackrel{\kappa=cy+(1-c)(1-p)}{=} c \int_{1-p-\delta}^{1-p} \sigma_\mu\big(cy+(1-c)(1-p)\big) \dif y \ge c \int^{1-p}_{1-p-\delta}\sigma_\mu(y)\dif y,
\end{align}
where, in the last inequality, we have used the monotonicity of $\sigma_\mu$ in Lemma~\ref{lem:Spectral} and the fact that $cy+(1-c)(1-p) \ge y$ for $y\in[0,1-p]$. By combining the above, we yield
\begin{align}
\int_{1-(g_{c,\mu}(p)+c\delta)}^1 \sigma_\mu(\kappa)\dif\kappa \ge c\int^1_{1-p}\sigma_\mu(\kappa)\dif\kappa.
\end{align}
This together with definition~\ref{eq:Defg} and statement (c) implies
\begin{align*}
g_{c,\mu}(p)+c\delta \ge g_{c,\mu}(p+\delta) > g_{c,\mu}(p),
\end{align*}
i.e., $g_{c,\mu}$ is $c$-Lipschitz continuous. 
\end{proof}

\subsection{Proof of Proposition~\ref{prop:Utilde}}\label{subsec:Pf:prop:Utilde}

\begin{proof}[Proof of \eqref{eq:IdenUtilde}]
For $f\in\cF_{\uparrow}$, we introduce the extended inverse $f^{-1}(u):=\inf\set{\alpha\in[0,1]:f(\alpha)\ge u}$ with the convention that $\inf\emptyset=\infty$. For $J\in\bN$, define
\begin{align*}
f^J(\alpha) := \frac1J\sum_{j=1}^{J-1} \1_{[f^{-1}(\frac{j}{J}),1]}(\alpha), \quad\alpha\in[0,1].
\end{align*}
Since $f$ is monotone and bounded within $[0,1]$, we have $\|f-f^J\|_\infty \le J^{-1}$. Consequently,
\begin{align*}
\wt{\cU}(\tilde\mu,\tilde\mu') &\le \sup_{\cF_{\uparrow}}\left| \int_{[0,1)} f^J(\alpha)\big(\tilde\mu(\dif \alpha)-\tilde\mu'(\dif \alpha)\big) \right| + \frac2J\\
&= \frac1J \sup_{\cF_{\uparrow}} \sum_{j=1}^{J-1} \left| \tilde\mu\big([f^{-1}(\frac{j}{J}),1]\big) - \tilde\mu'\big([f^{-1}(\frac{j}{J}),1]\big) \right| + \frac2J\\
&\le \sup_{t\in[0,1]}\big|\tilde\mu([t,1])-\tilde\mu'([t,1])\big| + \frac2J.
\end{align*}
Letting $J\to\infty$, we have proved one direction of the domination. For the other direction, note that $\1_{[\alpha,1]}\in\cF_\uparrow$. The proof is complete.
\end{proof}

\begin{proof}[Proof of \eqref{eq:BBWassUtilde}]

To show the first inequality, let us consider a random variable $Z$ bounded within $[0,1]$. Denote $F^{-1}_Z$ the inverse CDF of $Z$. By \eqref{eq:DefAVaR}, the function $\kappa\mapsto\avar_\kappa(Z)$ is bounded by $[0,1]$ and locally $\kappa^{-1}$-Lipschitz continuous. Using Lemma~\ref{lem:Spectral} and the fact that $\avar_{\kappa}(Z)\in[0,1]$, it follows that for any $y\in(0,1)$,
\begin{align}
&\left| \int_0^1 F^{-1}_Z(\alpha)\left(\tilde\mu(\alpha)-\tilde\mu'(\alpha)\right) \right| =  \left| \int_0^1 \avar_{1-\alpha}(Z) \left(\mu(\dif\alpha)-\mu'(\dif\alpha)\right) \right| \\
&\quad \le \left| \int_0^1 \avar_{(1-\alpha)\vee y}(Z) \left(\mu(\dif\alpha)-\mu'(\dif\alpha)\right) \right| + 2y \le \frac{1}{y}\cW(\mu,\mu') + 2 y.
\end{align} 
Maximizing the right hand side above at $y^*=\sqrt{\frac12\cW(\mu,\mu')}$, we yield the first inequality.

Regarding the second inequality, by \eqref{eq:DefW},  \eqref{eq:Idenmusigma}, \eqref{eq:Defmutilde}, and \eqref{eq:IdenUtilde}, we have
\begin{align}\label{eq:UBWmu}
\cW(\mu,\mu') &= \int_0^1\big| \mu([0,x]) - \mu'([0,x]) \big|\dif x \le \int_0^1 (1-x) \big|\sigma_\mu(x)-\sigma_{\mu'}(x)\big|\dif x + \int_0^1\big| \tilde\mu([0,x]) - \tilde\mu'([0,x]) \big|\dif x\nonumber \\
&\le \int_0^1|\sigma_\mu(x)-\sigma_{\mu'}(x)|\dif x + 2\wt\cU(\mu,\mu').
\end{align} 
The rest of the proof is dedicated to bounding the first term in the right hand side of \eqref{eq:UBWmu}. 

To proceed, we claim that $\sigma_\mu-\sigma_{\mu'}$ switches sign at most countably many times. More precisely, there exist $(I^+_n)_{n\in\bN}, (I^-_n)_{n\in\bN}$ such that $I^\pm_n$'s are disjoint open intervals within $[0,1]$ and $\sigma_\mu-\sigma_{\mu'}$ is strictly positive (resp. negative) in $I^+_n$ (resp. $I^-_n$). Moreover, $(I^+_n)_{n\in\bN}, (I^-_n)_{n\in\bN}$ should satisfy that, with $$I^0:=\set{\kappa\in[0,1]:\sigma_\mu(\kappa)=\sigma_{\mu'}(\kappa)},$$ 
we have $[0,1]\setminus\left(I^0 \cup \bigcup_{n\in\bN} I^+_n \cup \bigcup_{n\in\bN} I^-_n \right)$ has $0$ Lebesgue measure. To show this, we let\footnote{The left and right limit must exist as $\sigma_\mu$ is monotone and bounded as argued by Lemma~\ref{lem:Spectral}.} 
\begin{align}
S_{\text{jump}} := \left\{\kappa\in[0,1]: \lim_{y\to\kappa-}\sigma_{\mu}(\kappa)\neq\lim_{y\to\kappa+}\sigma_{\mu}(\kappa) \text{ or } \lim_{y\to\kappa-}\sigma_{\mu'}(\kappa)\neq\lim_{y\to\kappa+}\sigma_{\mu'}(\kappa) \right\}
\end{align}
i.e., $S_{\text{jump}}$ is the set of jump points of $\sigma_\mu$ and $\sigma_\mu'$. By Lemma~\ref{lem:Spectral}, $\sigma_\mu$ and $\sigma_{\mu'}$ are bounded and monotone, and thus $S_{\text{jump}}$ must be countable. It follows that $\sigma_\mu-\sigma_{\mu'}$ is continuous on $[0,1]\setminus S_J$. We construct $I^+$ as
$$I^+:=\left\{x\in[0,1]\setminus S_{\text{jump}}: \sigma_\mu(x)-\sigma_\mu(x') > 0 \right\}.$$
Note that $I^+$ is open due to the continuity above. If $I^+$ is not empty, then, by \cite[Proposition 1.5.8]{Royden1988book}, $I^+$ can be expressed as a countable union of disjoint intervals, which reveals the existence of $(I^+_n)_{n\in\bN}$. We apply the analogous argument to $I^-$.  Finally, since the countable set $S_{\text{jump}}$ has $0$ Lebesgue measure, we readily prove the claim.

Following the claim above, we merge $(I^+_n)_{n\in\bN}, (I^-_n)_{n\in\bN}$ into $(I_n)_{n\in\bN}$ and denote $I_n=(a_n,b_n)$. Consequently,
\begin{align*}
\int_0^1|\sigma_\mu(x)-\sigma_{\mu'}(x)|\dif x = \sum_{n=1}^\infty \int_{I_n}|\sigma_\mu(x)-\sigma_{\mu'}(x)|\dif x,
\end{align*}
where the sum on the right hand side above must converge as $\|\sigma_\mu-\sigma_{\mu'}\|_\infty<\infty$. Therefore, for any $k\in\bN$, there exists $N_k\in\bN$ such that 
\begin{align}\label{eq:UBIntAbsDiffsigma}
\int_0^1|\sigma_\mu(x)-\sigma_{\mu'}(x)|\dif x \le \sum_{n=1}^{N_k} \int_{I_n}|\sigma_\mu(x)-\sigma_{\mu'}(x)|\dif x + \frac1k.
\end{align}
By \eqref{eq:Defmutilde} and the fact that $\sigma_\mu-\sigma_{\mu'}$ does not switch sign on $I_n$, for any $n\in\bN$ we have 
\begin{align}
\int_{I_n}|\sigma_\mu(x)-\sigma_{\mu'}(x)|\dif x &= \big|\mu([a_n,b_n)) - \mu'([a_n,b_n))\big| \\
& \le \big|\mu([a_n,1]) - \mu'([a_n,1]) \big| - \big|\mu([b_n,1]) - \mu'([b_n,1]) \big| \le 2\wt\cU(\mu,\mu'), \label{eq:UBIntInAbsDiffsigma}
\end{align}
where the last inequality follows from Proposition~\ref{prop:Utilde}.

In view of \eqref{eq:UBIntAbsDiffsigma}, we continue to investigate the auxiliary problem below
\begin{align*}
\max_{N\in\bN,\; \ell_n, m_n \in\bR_+, n=1,\dots,N } \sum_{n=1}^N \ell_n m_n
\end{align*}
subject to 
\begin{gather}
\sum_{n=1}^N \ell_n = 1,\quad \sum_{n=1}^N m_n = 2\lambda,\quad \ell_n m_n \le 2\delta,\quad\ell_n,m_n \ge 0,\; n=1,\dots,N,
\end{gather}
where $\delta$ stands for $\wt\cU(\mu,\mu')$, $\ell_n$ and $m_n$ are dummy variables for $b_n-a_n$ and $\sup_{x\in I_n}|\sigma_{\mu}(x)-\sigma_{\mu'}(x)|$, respectively. Above, the constrain that  $\sum_{n=1}^N m_n = 2\lambda$ is imposed because either $\sigma_\mu$ or $\sigma_\mu'$ must increase by the amount of $\sup_{x\in I_n}|\sigma_{\mu}(x)-\sigma_{\mu'}(x)|$ during $I_n$. 

Momentarily excluding the constrain that $\ell_n m_n \le 2\delta$, the first order condition of the corresponding Lagrangian provides a unique solution. But the solution corresponds to the minimum. We thus search the boundary of the region enclosed by $\ell_n m_n \le 2\delta$ and $\ell_n,m_n\ge 0$ for $n=1,\dots,N$. 

In principle, we should exhaust the cases of $\ell_n m_n=2\delta,\,n=1,\dots,N_0$ for $N_0=1,\dots,N$. However, the sub problem
\begin{align*}
\max \left\{2N_0\delta + \sum_{n=N_0+1}^N \ell_n m_n\right\}
\end{align*} 
subject to
\begin{align*}
\sum_{n=N_0+1}^N\ell_n = 1 - \sum_{n=1}^{N_0}\ell_n,\quad\sum_{n=N_0+1}^N m_n=1-\sum_{n=1}^{N_0}m_n,\quad \ell_n m_n\le 2\delta,\quad\ell_n,m_n\ge0,\quad n=N_0+1,\dots,N,
\end{align*}
due to similar reasoning as above, must attain its maximum at the boundary of region enclosed by $\ell_n m_n\le 2\delta$ and $\ell_n,m_n\ge 0$ for $n\ge N_0+1$, unless $(1 - \sum_{n=1}^{N_0}\ell_n)(1-\sum_{n=1}^{N_0}m_n)<2\delta$. Therefore, we proceed to investigate the further auxiliary problem
\begin{align}\label{eq:MaxN}
\max N \in\bN
\end{align}
subject to
\begin{align*}
\sum_{n=1}^N\ell_n=1, \quad \sum_{n=1}^N m_n = 2\lambda,\quad \ell_n m_n = 2\delta,\quad n=1,\dots,N.
\end{align*}
We rewrite the constrain into
\begin{align*}
\sum_{n=1}^N\ell_n=1,\quad \sum_{n=1}^N\ell_n^{-1}=\frac{\lambda}{\delta}.
\end{align*}
Note that given an $\check N\in\bN$, the quantity $\ell^*_n=\check N^{-1}$ minimizes $\sum_{n=1}^{\check N}\ell_n^{-1}$ under the constrain $\sum_{n=1}^{\check N}\ell_n=1$. Consequently, $N^*=\lfloor \sqrt{\frac{\lambda}{\delta}} \rfloor$ attains \eqref{eq:MaxN}. This together with \eqref{eq:UBIntAbsDiffsigma} and \eqref{eq:UBIntInAbsDiffsigma} implies 
\begin{align*}
\int_0^1|\sigma_\mu(x)-\sigma_{\mu'}(x)|\dif x \le 2\sqrt{\lambda\delta} + \frac1k.
\end{align*}
Letting $k\to\infty$, we conclude the proof.
\end{proof}

\subsection{Proof of Lemma~\ref{lem:Cont}}\label{subsec:Pf:Lem:Cont}

We first establish the compactness of $\cP_\lambda([0,1])$.
\begin{lemma}\label{lem:Compact}
$\cP_\lambda([0,1])$ is compact under $\wt\cU$.
\end{lemma}
\begin{proof}
In view of \eqref{eq:BBWassUtilde}, it is sufficient to show the compactness of $\cP_\lambda([0,1])$ under $\cW$. It is well known that $\cW$ metrizes weak topology in $\cP([0,1])$ and, under which, $\cP([0,1])$ is compact; see, e.g., \cite[Corollary 6.13]{Villani2008book} for related arguments. Since $\cP_\lambda([0,1])\subset\cP([0,1])$, what is left to show is the closedness of $\cP_\lambda([0,1])$. To this end, we let $(\mu_n)_{n\in\bN}\subset\cP_\lambda([0,1])$ be a sequence converging weakly to $\mu^*$. Note that, for any $\delta\in(0,1)$, $\mu|_{[0,1-\delta]}$ also converges weakly. Consequently,
\begin{align*}
\int_0^\delta \frac{1}{1-\alpha}\mu^*(\dif \alpha) = \lim_{n\to\infty} \int_0^{1-\delta} \frac{1}{1-\alpha} \mu(\dif\alpha) \le \lambda.
\end{align*}
Letting $\delta\to0+$ and invoking the monotone convergence theorem, we obtain that $\mu^*\in\cP_\lambda([0,1])$.
\end{proof}

We are ready to prove Lemma~\ref{lem:Cont}.
\begin{proof}[Proof of Lemma~\ref{lem:Cont}]
Thanks to \eqref{eq:BBWassUtilde}, it is sufficient to prove the statement with $\cP_\lambda([0,1])$ equipped with $\cW$. In addition, by Lemma~\ref{lem:Compact} and the fact the continuity on compact set implies uniform continuity, we only need to prove the joint continuity in the arguments.

We first analyze the continuity of $\Phi$. To this end, note that the joint continuity of $(G,c)\mapsto\rho_{\mu}(C(X_{G^a}))$ is an immediate consequence of \eqref{eq:Defrhomu}. Since $\bA$ is discrete, we proceed to prove the continuity of $\Phi$ with respect to $\mu\mapsto\rho_{\mu}(C(X_{P}))$, uniformly in $(a,G,c)$. It follows from \eqref{eq:DefAVaR} that $\kappa\mapsto\avar_\kappa(C(X_{G^a}))$ is $(1-\kappa)^{-2}$-Lipschitz continuous. For $\delta\in(0,1)$, we let $$f^\delta_{a,G,c}(\kappa):=\1_{[0,1-\delta]}(\kappa)\avar_\kappa(C(X_{G^a})) + \1_{(1-\delta,1]}(\kappa)\avar_\delta(C(X_P)).$$ 
By \eqref{eq:Defrhomu}, for any $\mu,\mu'\in\cP_\lambda([0,1])$, we have
\begin{align*}
&\left|\rho_{\mu}(C(X_P)) - \rho_{\mu'}(C(X_P))\right| \\
&\quad\le \left|\int_{0}^1 f^\delta_{P,C}(\alpha) \big(\mu(\dif\alpha)-\mu'(\dif\alpha)\big) \right| +  \left|\int_{0}^1 \left(\avar_\alpha(C(X_P)) - f^\delta_{P,C}(\alpha)\right) \big(\mu(\dif\alpha)-\mu'(\dif\alpha)\big) \right| \\
&\quad\le \delta^{-2}\cW(\mu,\mu') + 2\lambda\delta,
\end{align*}
which indicates a modulus of continuity $w\mapsto \inf_{\delta\in(0,1)}\left\{\delta^{-2}w+2\lambda\delta\right\}=3\lambda^{\frac23}w^{\frac13}$, regardless of other arguments. After a straightforward application of triangle inequality, we obtain the joint continuity of $\Phi$. Since sum and minimization preserve continuity, the joint continuity of $\Phi$ follows automatically from \eqref{eq:DefOptGap}.

As for $\Psi$, following \eqref{eq:IdenPsi}, it can be verified that
\begin{align}
\Psi\big(G,(c,\mu),(c',\mu')\big) = \sqrt{\Phi\big(\text{A},G,(c,\mu)\big) \Phi\big(\text{B},G,(c',\mu')\big)} + \sqrt{\Phi\big(\text{B},G,(c,\mu)\big) \Phi\big(\text{A},G,(c',\mu')\big)}.
\end{align}
By combining this representation with the continuity of $\Phi$, we yield the continuity of $\Psi$. The continuity of $\Xi$ follows analogously.
\end{proof}

\subsection{Proof of Theorem~\ref{thm:Separation}}\label{subsec:ProofSeparation}

% \begin{align}%\label{eq:Defh}
% h_{C,\mu}^1(p) := \rho_\mu(C(X^{1}_{p}))=c\int_{1-p}^1\sigma_\mu(\kappa)\dif\kappa \quad\text{and}\quad  h_{C,\mu}^2(q) := \rho_\mu(C(X^{2}_{q})) = c^{-1} h_{C,\mu}^1(q),
% \end{align}\anthony{$h_{C,\mu}^1(p),h_{C,\mu}^2(p)$ are not used within the proof?}

\begin{proof}[Proof of Theorem~\ref{thm:Separation}]
It is sufficient to prove that for different $(c,\mu)$ and $(c',\mu')$, there is always a $p\in(0,1)$ such that $g_{c,\mu}(p)\neq g_{c',\mu'}(p)$. Indeed, without loss of generality, we assume $g_{c,\mu}(p)<g_{c',\mu'}(p)$. Pick $q\in(g_{c,\mu}(p),g_{c',\mu'}(p))$. Then, by Lemma~\ref{lem:g} (b), we have $\rho_\mu(C(X_{G^{\text{A}}_{p,q}})) < \rho_\mu(C(X_{G^{\text{B}}_{p,q}}))$, while $\rho_{\mu'}(C'(X_{G^{\text{A}}_{p,q}})) > \rho_{\mu'}(C'(X_{G^{\text{B}}_{p,q}}))$, i.e., $G_{p,q}$ is the distinguishing environment.

For the sake of neatness, below we write 
\begin{align}\label{eq:Defh}
h_{c,\mu}(p) := \int_{1-p}^1 \sigma_{\mu}(\kappa)\dif\kappa, \quad p\in[0,1].
\end{align}
Note that $h_{c,\mu}$ is concave due to the monotonicity of $\sigma_\mu$ as listed in Lemma~\ref{lem:Spectral}, and satisfies $h(0)=0$ and $h(1)=1$.
We will prove by contradiction in two different cases. 

\textbf{Case 1, $c\neq c'$.} We first consider the case where $c\neq c'$, while $\mu$ and $\mu'$ may or may not be different. Without loss of generality, we assume $c<c'$. We define $p_1=1$ and $p_{k+1}=g_{c,\mu}(p_{k})$ for $k\in\bN$. It follows from Lemma~\ref{lem:g} (e) that $(p_k)_{k\in\bN}$ is positive and strictly decreasing. Moreover, $h_{c,\mu}(p_1)=1$ by \eqref{eq:Defh} and Lemma~\ref{lem:Spectral}, and thus $h_{c,\mu}(p_k) = c^{k-1}$ for $k\in\bN$ by Lemma~\ref{lem:g} (a). Similarly, $h_{c',\mu'}(p_k)={c'}^{k-1}$ for $k\in\bN$. It follows that
\begin{align*}
\frac{h_{c',\mu'}(p_{k})-h_{c',\mu'}(p_{k+1})}{p_k-p_{k+1}} = \frac{{c'}^{k-1}-{c'}^{k}}{p_k-p_{k+1}} = \frac{{c'}^{k-1}-{c'}^{k}}{{c'}^{k-1}(1-c)} \frac{{c'}^{k-1}}{{c}^{k-1}} \frac{{c}^{k-1}-{c}^{k}}{p_k-p_{k+1}} = \frac{1-c'}{1-c} \frac{{c'}^{k-1}}{{c}^{k-1}} \frac{h_{c,\mu}(p_{k}) - h_{c,\mu}(p_{k+1})}{p_k-p_{k+1}}.
\end{align*}
For $k\ge 3$, we have $p_{k+1}<p_{k}<\cdots<p_1=1$. We continue to the concavity of $h_{c',\mu'}$ and yield
\begin{align*}
\frac{h_{c',\mu'}(p_{k})-h_{c',\mu'}(p_{k+1})}{p_k-p_{k+1}} \ge \frac{1-c'}{1-c} \frac{{c'}^{k-1}}{{c}^{k-1}}\frac{h_{c,\mu}(p_1)-h_{c,\mu}(p_2)}{p_1-p_2} = \frac{{c'}^{k-1}}{{c}^{k-1}}\frac{1-c'}{p_1-p_2}.
\end{align*}
In view of \eqref{eq:Defh}, by the monotonicity of $\sigma_\mu$ as listed in Lemma~\ref{lem:Spectral}, we have 
\begin{align}
\limsup_{p\to 1-}\sigma_{\mu'}(p) \ge \sigma_{\mu'}(1-p_{k+1}) \ge    \frac{h_{c',\mu'}(p_{k})-h_{c',\mu'}(p_{k+1})}{p_k-p_{k+1}} \ge \frac{{c'}^{k-1}}{{c}^{k-1}}\frac{1-c'}{p_1-p_2}.
\end{align}
Since $k$ is arbitrary, under the setting that $c<c'$, we yield a contradiction to Assumption~\ref{assump:Basic}.

\textbf{Case 2, $c=c'$ but $\mu\neq\mu'$.} We now suppose $c=c'$, but $\mu\neq\mu'$. We define $p_1=1$ and $p_{k+1}=g_{c,\mu}(p_{k})$ for $n\in\bN$. Similarly as before, $(p_k)_{k\in\bN}$ is positive and strictly decreasing. Additionally, $h_{c,\mu}(p_k)=h_{c,\mu'}(p_k)=c^{k-1}$. We must have $\lim_{k\to\infty}p_k=0$; otherwise by Lemma~\ref{lem:g} (e), we have $g_{C,\mu}(p)=0$ for some $p>0$, contradicting Lemma~\ref{lem:g} (d).
 
We claim that there must be a $\hat p\in(0,1)$ such that $h_{c,\mu}(\hat p)\neq h_{c,\mu'}(\hat p)$. We will show this claim by contradiction. Suppose otherwise, then by \eqref{eq:Defh}, differentiation of integration formula \cite[Theorem 5.3.10]{Royden1988book}, and the right continuity of $\sigma_\mu,\sigma_{\mu'}$ from Lemma~\ref{lem:Spectral}, we have $\sigma_{\mu}=\sigma_{\mu'}$. By Lemma~\ref{lem:Spectral} again, we have $\mu([0,\alpha])=\mu'([0,\alpha])$ for $\alpha\in[0,1)$. It follows from monotone class lemma \cite[Section 4.4, Lemma 4.13]{Aliprantis2006book} that $\mu=\mu'$, contradicting the setting that $\mu\neq\mu'$. 

We define $\hat c:=h_{c,\mu}(\hat p)$ and $\hat c':=h_{c,\mu'}(\hat p)$. Without loss of generality, we assume that $\hat c < \hat c'$. Moreover, since $(p_k)_{k\in\bN}$ is strictly decreasing and $\lim_{k\to\infty}p_k=0$, we have $\hat p\in(p_{k_0+1},p_{k_0})$ for some $k_0\in\bN$. Note that 
\begin{align}\label{eq:cOrder}
c^{k_0}<\hat c<\hat c'< c^{k_0-1}.
\end{align}
We let $\bar p_{k}:=p_{k_0+k-1}$ for $k\in\bN$. Additionally, we define $\hat p_1:=\hat p$ and $\hat p_{k+1}=g_{C,\mu}(\hat p_k)$ for $k\in\bN$. By Lemma~\ref{lem:g} (a), we have $h_{c,\mu}(\hat p_k)=c^{k-1}\hat c$. The above together with Lemma~\ref{lem:g} (a) again implies 
\begin{align}\label{eq:hqValues}
h_{c,\mu'}(\bar p_k)=h_{c,\mu'}(\bar p_k)= c^{k_0+k-2}, \quad h_{c,\mu}(\hat q_k)=\hat c c^{k-1},\quad h_{c,\mu'}(\hat q_k)=\hat c' c^{k-1}.
\end{align}
By combining \eqref{eq:cOrder}, \eqref{eq:hqValues} and the monotonicity of $h_{c,\mu}$, we have 
\begin{align}\label{eq:pOrder}
\hat p_{k+1} < \bar p_{k+1} < \hat p_k < \bar p_k.
\end{align}
It follows from \eqref{eq:pOrder}, the concavities of $h_{c,\mu'}$, and \eqref{eq:hqValues} that 
\begin{align*}
\frac{h_{c,\mu'}(\bar p_{k+1}) - h_{c,\mu'}(\hat p_{k+1})}{\bar p_{k+1} - \hat p_{k+1}} \ge \frac{h_{c,\mu'}(\hat p_{k}) - h_{c,\mu'}(\bar p_{k+1})}{\hat p_{k} - \bar p_{k+1}} = \frac{\hat c' c^{k-1} - c^{k_0+k-1}}{\hat p_{k} - \bar p_{k+1}}.
\end{align*}
Recall that $\hat c<\hat c'$. By \eqref{eq:hqValues}, \eqref{eq:pOrder} and the concavity of $h_{c,\mu}$ in \eqref{eq:Defh}, we yield
\begin{align*}
&\frac{h_{c,\mu'}(\bar p_{k+1}) - h_{c,\mu'}(\hat p_{k+1})}{\bar p_{k+1} - \hat p_{k+1}} \ge  \frac{\hat c c^{k-1} - c^{k_0+k-1}}{\hat p_{k} - \bar p_{k+1}} = \frac{h_{c,\mu}(\hat p_{k}) - h_{c,\mu}(\bar p_{k+1})}{\hat p_{k} - \bar p_{k+1}} \ge \frac{h_{c,\mu}(\bar p_{k}) - h_{c,\mu}(\hat p_{k})}{\bar p_{k} - \hat p_{k}}\\
&\quad = \frac{c^{k_0+k-2} - \hat c c^{k-1}}{\bar p_{k} - \hat p_{k}} = \frac{c^{k_0-1}-\hat c}{c^{k_0-1}-\hat c'} \frac{c^{k_0+k-2} - \hat c' c^{k-1}}{\bar p_{k} - \hat p_{k}} =  \frac{c^{k_0-1}-\hat c}{c^{k_0-1}-\hat c'} \frac{h_{c,\mu'}(\bar p_{k}) - h_{c,\mu'}(\hat q_{k})}{\bar p_{k} - \hat p_{k}}.
\end{align*}
By induction, we have 
\begin{align*}
\frac{h_{c,\mu'}(\bar q_{k}) - h_{c,\mu'}(\hat q_{k})}{\bar p_{k} - \hat p_{k}} \ge \left(\frac{c^{k_0-1}-\hat c}{c^{k_0-1}-\hat c'}\right)^{k-1} \frac{h_{c,\mu'}(\bar q_{1}) - h_{c,\mu'}(\hat p_{1})}{\bar p_{1} - \hat p_{1}} = \left(\frac{c^{k_0-1}-\hat c}{c^{k_0-1}-\hat c'}\right)^{k-1}\frac{c^{k_0-1}-\hat c}{\bar p_{k_0} - \hat p}.
\end{align*}
This together with \eqref{eq:Defh} and the monotonicity of $\sigma_\mu$ in Lemma~\ref{lem:Spectral} implies
\begin{align}
\sigma_{\mu'}(1) \ge \sigma_{\mu'}(1-\hat p_k) \ge \frac{h_{c,\mu'}(\bar p_{k}) - h_{c,\mu'}(\hat p_{k})}{\bar p_{k} - \hat p_{k}} \ge \left(\frac{c^{k_0-1}-\hat c}{c^{k_0-1}-\hat c'}\right)^{k-1}\frac{c^{k_0-1}-\hat c}{\bar p_{k_0} - \hat p},
\end{align}
where we note $\frac{c^{k_0-1}-\hat c}{c^{k_0-1}-\hat c'}>1$ due to \eqref{eq:cOrder}. Since $k$ is arbitrary, we yield a contradiction against Assumption~\ref{assump:Basic}.
\end{proof}

\subsection{Proof of Remark~\ref{rmk:GenSep}}\label{subsec:Pf:rmk:GenSep}
\begin{proof}
Let $\bX=\set{x_0,\dots,x_d}$ and consider two different $C,C':\bX\to[0,1]$. Suppose 
\begin{align}
0=C(x_0)\le C(x_1) \le \dots \le C(x_d)
\end{align}
and similarly for $C'$. Assume that $|C(\bX)|,|C'(\bX)|\ge 3$. In the case where there is an $x_i$ such that $C(x_i)\neq C'(x_i)$ and $C(x_i), C'(x_i)\in(0,1)$, the distinguishing $G$ can be constructed following Section~\ref{subsec:ProofSeparation}. 

We now address the complementary case. In this case, there must exists an $x_i$ satisfying either $C(x_i)\in\set{0,1}$, or $C'(x_i)\in\set{0,1}$, but not both; otherwise we must have $|C(\bX)|=|C'(\bX)|=2$. This is further divided into two (potentially overlapping) subcases for analysis:

\paragraph{Case R1.} Suppose there is an $x_i$ such that $C(x_i)\in(0,1)$ and $C'(x_i)=0$. We then consider a restricted state space $\set{x_0,x_i,x_d}$ and let 
\begin{align*}
\check G_{p} = \bordermatrix{~ & \text{A} & \text{B} \cr x_0 & 1-p & p \cr x_i & p & 0 \cr x_d & 0 & 1-p\cr}.
\end{align*}
It is clear that, under $C'$, regardless of $\mu'$, action A is preferred for any $p\in[0,1)$. On the other hand, it is straightforward to verify that, under $C$, there is a $p$ sufficiently close to $1$ such that action B is preferred regardless of $\mu$ or $\mu'$. This constructs the distinguishing question.

\paragraph{Case R2.} Suppose there is an $x_i$ such that $C(x_i)\in(0,1)$ and $C'(x_i)=1$. Similarly as before, we consider a restricted state space $\set{x_0,x_i,x_d}$ and let 
\begin{align*}
G_{p,q} = \bordermatrix{~ & \text{A} & \text{B} \cr x_0 & 1-p & 1-q \cr x_i & p & 0 \cr x_d & 0 & q\cr}. 
\end{align*}
The rest of the construction is identical to that for Case R1. Specifically, in Case R2, we have $g_{C',\mu'}(p)=p$.
\end{proof}

\subsection{Proof of Theorem~\ref{thm:SuffFinite}}\label{subsec:Pf:thm:SuffFinite}

\begin{proof}[Proof of Theorem~\ref{thm:SuffFinite}]
In this proof, we fix $\lambda$ and $\varepsilon$ and proceed by contradiction. Suppose that, for any $N\in\bN$ and $G_1,\dots,G_N\in\cP(\bX)^\bA$, there exists $(c,\mu)\in B^{\mathsf{c}}_{\lambda,\varepsilon}$ such that for $n=1,\dots, N$,
\begin{align*}
a^*_n \in \argmin_{a\in\bA} \rho_{\mu}(C(X^a_{G_n}))
\end{align*}
for some $a^*_n\in\argmin_{a\in\bA} \rho_{\mu_0}(C_0(X^a_{G_n}))$. Let $\cG^k$ be a $\frac1k$-net of $\cP(\bX)^\bA$, that is, $\cG^k$ is a subset of $\cP(\bX)^{\bA}$ such that $\bigcup_{G\in\cG^\delta} B_{\frac1k}(G) \supseteq \cP(\bX)^{\bA}$, where $B_{\frac1k}(G)$ is an open ball centered at $G$ with radius ${\frac1k}$ under, e.g., $1$-norm. By the opposite hypothesis, there is $(c_k,\mu_k)\in B^{\mathsf{c}}_{\lambda,\varepsilon}$ such that 
\begin{align*}
\min_{G\in\cG^k}\Psi\big(G,(c_k,\mu_k),(c_0,\mu_0)\big) = 0.
\end{align*}
In view of Lemma~\ref{lem:Compact}, without loss of generality, we assume $\set{(c_k,\mu_k)}_{k\in\bN}$ converges to $(c_*,\mu_*)$. By the setting that $(c_k,\mu_k)\in B^{\mathsf{c}}_{\lambda,\varepsilon}$ and the continuity of metric, we must have  $(c_*,\mu_*)\neq(c_0,\mu_0)$. In view of Lemma~\ref{lem:Cont}, we let $\beta$ be the uniform modulus of continuity of $G\mapsto\Psi\big(G,(c,\mu),(c',\mu')\big)$, i.e.,
\begin{align*}
\beta(\delta) := \sup_{(c,\mu),(c',\mu')\in[0,1]^{\bX}\times\cP_\lambda([0,1])}\; \sup_{\|G-\hat G\|_1\le\delta}\left| \Psi\big(G,(c_0,\mu'),(c_0,\mu')\big) - \Psi\big(\hat G,(c,\mu),(c',\mu')\big)\right|.
\end{align*}
Consequently, 
$$\min_{G\in\cP(\bX)^{\bA}} \Psi\big(G,(c_k,\mu_k),(c_0,\mu_0)\big) \le \beta\left(\frac1k\right),\quad k\in\bN.$$
Moreover, by Lemma~\ref{lem:Cont} and \cite[Lemma 17.29 and Lemma 17.30]{Aliprantis2006book}, $(c,\mu)\mapsto\min_{G\in\cP(\bX)^{\bA}}\Psi\big(G,(c,\mu),(c_0,\mu_0)\big)$ is continuous. It follows that
\begin{align}
\min_{G\in\cP(\bX)^{\bA}} \Psi\big(G,(c_*,\mu_*),(c_0,\mu_0)\big) = \lim_{k\to\infty}\min_{G\in\cP(\bX)^{\bA}} \Psi\big(G,(c_k,\mu_k),(c_0,\mu_0)\big)   \le \lim_{k\to\infty}\beta\left(\frac1k\right) = 0,
\end{align}
which contradicts Theorem~\ref{thm:Separation}.
\end{proof}

\subsection{Proof of Corollary~\ref{cor:RandomDesignConstitent}}\label{subsec:Pf:cor:RandomDesignConstitent}
\begin{proof}
Consider $\varepsilon=j^{-1}$, where $j\in\bN$. Let $\Gamma^j_1=G_{p^j_1,q^j_1},\dots,\Gamma^j_{N^j}=G_{p^j_{N^j},q^j_{N^j}}$ be as listed in Theorem~\ref{thm:SuffFinite}. In view of Lemma~\ref{lem:Compact} and Lemma~\ref{lem:Cont}, there must be $\delta_j>0$ such that
\begin{align}
\min_{i=1,\dots,N^j}\inf_{(c,\mu)\in B_{\lambda,\frac1j}^\mathsf{c}} \Psi\left(\Gamma^j_i,(c,\mu),(c_0,\mu_0)\right) \ge \delta_j.
\end{align}
By Lemma~\ref{lem:Cont} again, there is a radius $r_j>0$ such that 
\begin{align}
\inf_{\Gamma\in B_{r_j}(\Gamma^j_i), i=1,\dots,N^j}\inf_{(c,\mu)\in B_{\lambda,\frac1j}^\mathsf{c}} \Psi\left(\Gamma,(c,\mu),(c_0,\mu_0)\right) \ge \frac{\delta_j}{2}.
\end{align}
Note that under the uniform distribution on $\cP(\bX)^\bA$, denoted by $m$, we have $m\big(B_{r_j}(\Gamma^j_i)\big)>0$ for $i=1,\dots,N^j$. By second Borel-Cantelli lemma, $B_{r_j}(\Gamma^j_n)$ will be visited infinitely often with probability 1. It follows that
\begin{align}
\bP\left(\limsup_{n\to\infty} \cE_n \le\frac1j\right) \ge \bP\left(B_{r_j}(\Gamma^j_i),\,i=1,\dots,N^j \text{ are visited at least once}\right) = 1.
\end{align}
Letting $j\to\infty$ and invoking the continuity of probability, we conclude the proof.
\end{proof}

\subsection{Proof of Theorem~\ref{thm:QuantDisEnv}}\label{subsec:Pf:thm:QuantDisEnv}
\begin{proof}
Recall the definition of $\tilde \mu$ in \eqref{eq:Defmutilde}. In this section, for the sake of neatness, we let
\begin{align*}
\wt H(z) := \tilde\mu([1-z,1]) = \int^1_{1-z}\sigma_{\mu}(\alpha)\dif\alpha,\quad \wt H'(z):=\tilde\mu'([1-z,1])=\int^1_{1-z}\sigma_{\mu'}(\alpha)\dif\alpha,\quad z\in[0,1].
\end{align*}
Clearly, $\wt H(0)=0$ and $\wt H(1)=1$. In addition, $\frac{d}{d z}\wt H(z)=\sigma_\mu(1-z)$ and $\wt H$ is concave. The analogue is true for $\wt H'$. 

\textbf{(a)} Let $c,c',\eta$ be as defined in Theorem~\ref{thm:QuantDisEnv} (a).
%Without loss of generality, suppose there is $x\in\bX$ such that $c:=C(x)>c':=C'(x)$. Let $p_0:=1$ and 
%\begin{align}
%\eta = \left\lceil\frac{\ln\left(\frac{4\lambda}{\varepsilon(c+c')}\right)}{\ln c - \ln c'}\right\rceil + 1.
%\end{align}
In view of the continuity of $\wt H$, for $k=1,\dots,\eta$, we let%\footnote{We adopt the convention that $\inf\emptyset=\infty$.}
\begin{align}
p_k := \inf\left\{z\in[0,1]: \wt H(z) = c^k \left(1- k \frac{\varepsilon}{\eta}\right) \right\}.
\end{align}
Note that $p_0=1$. To find $p,q$, we consider the following iterative procedure. We start with $k=1$. If  
\begin{align}\label{eq:CCont}
\wt H'(p_k) < c'^k \left(1 + k\frac{\varepsilon}{\eta}\right),
\end{align}
we continue to the next $k$. Otherwise, we terminate with $\wt H'(p_k) \ge c'^k \left(1 + k\frac{\varepsilon}{\eta}\right)$, and set $p=p_{k-1}$, $q=p_k$. Given that this iterative procedure stops at a $k<\eta$, we must have
\begin{align}
&\rho_{\mu}\big(C(X^{\text{A}}_{p,q})\big) - \rho_{\mu}\big(C(X^{\text{B}}_{p,q})\big) = c \wt H(p_{k-1}) - \wt H(p_k) = c^k\frac{\varepsilon}{\eta}
\end{align}
and
\begin{align}
&\rho_{\mu'}\big(C'(X^{\text{B}}_{p,q})\big) - \rho_{\mu'}\big(C'(X^{\text{A}}_{p,q})\big) = \wt H'(p_k) - c' \wt H'(p_{k-1}) > c'^k\frac{\varepsilon}{\eta}
\end{align}
Consequently,
\begin{align*}
\Phi\big(G_{p,q}, (C,\mu), (C',\mu')\big) > (cc')^{\frac{k}{2}}\frac{\varepsilon}{\eta}
\end{align*}
and 
\begin{align*}
\Xi\big(G_{p,q}, (C,\mu), (C',\mu')\big) > \left(\frac{c'}{c}\right)^{\frac{k}{2}}\frac{\varepsilon}{2\eta}.
\end{align*}

To complete the proof, we show that the iterative procedure above must terminate before the $\eta$-th iterations. Without loss of generality, we suppose the iterative procedure will continue after $k=1$. Therefore,
\begin{align}\label{eq:HypoUBHprime}
\wt H'(p_1) < c' + c'\frac{\varepsilon}{\eta} \le c'+\frac{\varepsilon}{2}=\frac{1}{2}(c+c').
\end{align}
It follows from the concavity of $\wt H'$ that
\begin{align}\label{eq:LBSlopeC}
\frac{\wt H(p_k)-\wt H(p_{k-1})}{p_k-p_{k-1}} &= \frac{\wt H(p_{k-1})-\wt H(p_k)}{\wt H'(p_{k-1})-\wt H'(p_k)} \frac{\wt H'(p_k)-\wt H'(p_{k-1})}{p_k-p_{k-1}} \ge \frac{\wt H(p_{k-1})-\wt H(p_k)}{\wt H'(p_{k-1})-\wt H'(p_k)} \frac{\wt H'(p_k)-p_0}{p_k-p_0}\nonumber\\
&\ge \frac{\wt H(p_{k-1})-\wt H(p_k)}{\wt H'(p_{k-1})-\wt H'(p_k)} \frac{\wt H'(p_1)-1}{p_1-1} \ge \frac{\wt H(p_{k-1})-\wt H(p_k)}{\wt H'(p_{k-1})-\wt H'(p_k)} (1-c) .
\end{align}
Recall the continuation condition \eqref{eq:CCont}. If the procedure is not terminated at $k\le \eta$, we have
\begin{align}\label{eq:LBRatioC}
\frac{\wt H(p_{k-1})-\wt H(p_k)}{\wt H'(p_{k-1})-\wt H'(p_k)} &\ge \frac{c^{k-1}\left(1 - (k-1)\frac{\varepsilon}{\eta}\right) - c^k \left(1 - k\frac{\varepsilon}{\eta}\right)}{c'^{k-1}\left(1+(k-1)\frac{\varepsilon}{\eta}\right)  -  c'^{k} \left(1 + k\frac{\varepsilon}{\eta}\right)} = \frac{(1-c)c^{k-1}\left(1-(k-1)\frac{\varepsilon}{\eta}\right) + c^k\frac{\varepsilon}{\eta}}{(1-c')c'^{k-1}\left(1+(k-1)\frac\varepsilon\eta\right)-c'^k\frac{\varepsilon}{\eta}} \nonumber\\
&> \frac{1-c}{1-c'}\frac{1-\varepsilon}{1+\varepsilon} \left(\frac{c}{c'}\right)^{k-1}.
\end{align}
In view of \eqref{eq:HypoUBHprime}, \eqref{eq:LBSlopeC} and \eqref{eq:LBRatioC}, since the iterative procedure is not terminated at $\eta$, we must have
\begin{align}
\frac{\wt H(p_k)-\wt H(p_{k-1})}{p_k-p_{k-1}} > \frac{(1-c)^2}{1-c'}\frac{1-\varepsilon}{1+\varepsilon} \left(\frac{c}{c'}\right)^{\eta-1}  \ge \lambda,
\end{align}
which contradict the hypothesis that $\lambda_\mu,\lambda_{\mu'}\le\lambda$. The proof for $C\neq C'$ is readily complete.

\textbf{(b)} Consider the setting of Theorem~\ref{thm:QuantDisEnv} (b). The structure of the proof is similar to part (a) but requires more delicate analysis. We conduct the proof in three steps. In Step 1, we present a procedure for constructing the distinguishing binary-choice question. In Step 2, we show the validity of this construction procedure. In Step 3, we combine the previous results to eventually obtain the claimed lower bounds.

\textbf{Step 1.} We consider the following iterative procedure.  
\begin{itemize}
\item Let $\tilde\eta\ge 2$. The specific value of $\tilde\eta$ will be specified in \eqref{eq:Defetatilde1} and \eqref{eq:Defetatilde2}, depending on specific cases. 
\item Let $p^\diamond_0=1$. In view of the continuity of $\wt H$, for $k\in\bN$, we let 
\begin{align}\label{eq:DefpDiamond}
%\hat\beta^\diamond_k := c^k \left(1 + k\frac{\varepsilon}{\tilde\eta} \right),\quad
p^\diamond_k := \inf\left\{z\in[0,1]: \wt H(z) = c^k \left(1 + k\frac{\varepsilon}{\tilde\eta} \right) \right\}.
\end{align}
Additionally, if 
\begin{align}\label{eq:DiamondCont}
\wt H'(p^\diamond_k) > c^k \left(1 - k\frac{\varepsilon}{\tilde\eta} \right),
\end{align}
we proceed to $k+1$. Otherwise, we stop with $\wt H'(p^\diamond_k) \le c^k \left(1 - k\frac{\varepsilon}{\tilde\eta} \right)$, 
and set $p=p^\diamond_{k-1}$ and $q=p^\diamond_{k}$ as in $G_{p,q}$. In view of \eqref{eq:Exprrho}, at stopping, under $(C,\mu)$, we have 
\begin{align*}
\rho_{\mu}\big( C(X^{\text{B}}_{p,q}) \big) - \rho_{\mu}\big( C(X^{\text{A}}_{p,q}) \big) = \wt H(p^\diamond_k) - c \wt H(p^\diamond_{k-1}) = c^k \frac{\varepsilon}{\tilde\eta}.
\end{align*}
In contrast, under $(C,\mu')$
\begin{align*}
\rho_{\mu'}\big( C(X^{\text{A}}_{p,q}) \big) - \rho_{\mu'}\big( C(X^{\text{B}}_{p,q}) \big) = c\wt H'(p^\diamond_{k-1}) - \wt H(p^\diamond_k) > c^k\frac{\varepsilon}{\tilde\eta}.
\end{align*}
Consequently, 
\begin{align}\label{eq:LBDiamondTer}
\Psi\big(G_{p,q},(C,\mu),(C,\mu')\big) > c^k\frac{\varepsilon}{\tilde\eta}.
\end{align}
\item We introduce one more stopping rule in addition to the previous one. In view of the continuity of $\wt H,\wt H'$, we let 
\begin{align}\label{eq:DefpStar0}
p^*_0 := \sup\left\{z\in[0,1]: |\wt H(z)-\wt H'(z)|=\varepsilon  \right\}.
\end{align}
Without loss of generality, we suppose 
\begin{align}\label{eq:DefbetaStar}
\wt H(p^*_0) > \wt H'(p^*_0).
\end{align}
Clearly, $\hat\beta^*_0=\check\beta^*_0+\varepsilon$. For $k\in\bN$, we define 
\begin{align}\label{eq:DefpStar}
%\hat\beta^*_k := c^k \left( \hat\beta^*_0 - k\frac{\varepsilon}{\eta} \right),\quad
p^*_k:=\inf\left\{z\in[0,1]: \wt H(z)=c^k \left( \wt H(p^*_0)  - k\frac{\varepsilon}{\tilde\eta} \right) \right\}.
\end{align}
Additionally, if 
\begin{align}\label{eq:StarCont}
\wt H'(p^*_k) < c^k\left(\wt H'(p^*_0) +k\frac{\varepsilon}{\tilde\eta}\right)
\end{align}
we continue to $k+1$. Otherwise, we stop with $\wt H'(p^*_k) \ge c^k\left(\wt H'(p^*_0)+k\frac{\varepsilon}{\tilde\eta}\right)$, and set $p=p^*_{k-1}$ and $q=p^*_k$ as in $G_{p,q}$. In view of \eqref{eq:Exprrho}, at stopping, under $(C,\mu)$, we have 
\begin{align*}
\rho_{\mu}\big( C(X^{\text{A}}_{p,q}) \big) - \rho_{\mu}\big( C(X^{\text{B}}_{p,q}) \big) = c\wt H(p^*_{k-1}) - \wt H(p^*_k) = c^k \frac{\varepsilon}{\tilde\eta}.
\end{align*}
In contrast, under $(C,\mu')$,
\begin{align*}
\rho_{\mu'}\big( C(X^\text{B}_{p,q}) \big) - \rho_{\mu'}\big( C(X^\text{A}_{p,q}) \big) = \wt H'(p^*_{k}) - c\wt H'(p^*_{k-1}) > c^k\frac{\varepsilon}{\tilde\eta}.
\end{align*}
Consequently, 
\begin{align}\label{eq:LBDStarTer}
\Psi\big(G_{p,q},(C,\mu),(C,\mu')\big) > c^k\frac{\varepsilon}{\tilde\eta}.
\end{align}
\end{itemize}

\textbf{Step 2.} We will show that the iterative procedure in Step $1$ must terminate within finitely many iterations, and obtain the lower bound in accordance to \eqref{eq:LBDiamondTer} or \eqref{eq:LBDStarTer}. Recall $p^\diamond_0=1$. Let $n_0\in\bN\cup\set{0}$ be the integer such that 
\begin{align}\label{eq:Defn0}
p^\diamond_{n_0+1} < p^*_0 < p^\diamond_{n_0}.
\end{align}
Note that if the inequality is not strict, by \eqref{eq:DefpDiamond} and \eqref{eq:DiamondCont}, we have immediate termination. Below we conduct the proof for two cases.

\emph{Step 2, Case (i).} We first investigate the case where $\varepsilon\ge 2(1-c)$. In this case, we set 
\begin{align}\label{eq:Defetatilde1}
\tilde\eta := 4(n_0+1).
\end{align}
Note that $\varepsilon\ge 2c^{n_0}(1-c)$ and $\frac{c^{n_0}(1+c)}{4}<\frac12$ as $c\in[0,1]$. Consequently,
$$\left(1-\frac{c^{n_0}(1+c)}{4}\right)\varepsilon>\frac12\varepsilon\ge c^{n_0}(1-c),$$ 
and thus
\begin{align}\label{eq:UBdiffcepsilon}
c^{n_0}\left(1+\frac{\varepsilon}{4}\right) - c^{n_0+1} \left(1-\frac{\varepsilon}{4}\right) = c^{n_0}\left(1-c\right) + c^{n_0} (1+c)\frac{\varepsilon}{4} < \varepsilon.
\end{align} 
Given that the iterative procedure is not terminated at $p^\diamond_{n_0}$, by applying the monotonicity of $\wt H'$ together with \eqref{eq:Defn0}, \eqref{eq:DefpStar0}, the monotonicity of $\wt H$ together with \eqref{eq:Defn0}, \eqref{eq:DefpDiamond}, \eqref{eq:UBdiffcepsilon}, and \eqref{eq:Defetatilde1}, we yield
\begin{align}\label{eq:BUwtHprime}
\wt H'(p^\diamond_{n_0+1}) &\le \wt H'(p^*_0) = \wt H(p^*_0) - \varepsilon \le \wt H(p^\diamond_{n_0}) - \varepsilon \le c^{n_0}\left(1+\frac{\varepsilon}{4}\right) - \varepsilon < c^{n_0+1} \left(1-\frac{\varepsilon}{4}\right) < c^{n_0+1}\left(1-(n_0+1)\frac{\varepsilon}{\tilde\eta}\right).
\end{align} 
In view of the continuation condition \eqref{eq:DiamondCont}, the construction procedure must terminate at iteration $n_0+1$. We thus set $p=p^\diamond_{n_0}$ and $q=p^{\diamond}_{n_0+1}$. Since $\wt H'(p^\diamond_{n_0+1})\ge 0$, following \eqref{eq:BUwtHprime}, we also have 
\begin{align}
\frac{5}{4}c^{n_0}-\varepsilon\ge c^{n_0}\left(1+\frac{\varepsilon}{4}\right) - \varepsilon\ge 0\quad\text{and}\quad n_0 \le \frac{\ln\varepsilon-\ln(1+\frac{\varepsilon}{4})}{\ln c}
\end{align} 
In view of \eqref{eq:LBDiamondTer}, we yield
\begin{align}
\Psi\big(G_{p,q},(C,\mu),(C,\mu')\big) > c^{n_0+1} \frac{\varepsilon}{4(n_0+1)} \ge \frac{c\varepsilon^2 \ln c}{5 \ln c + 5\left(\ln\varepsilon-\ln(1+\frac{\varepsilon}4)\right)}.
\end{align}

\emph{Step 2, Case (ii).} Next, we suppose $\varepsilon< 2(1-c)$. In fact, we suppose $\varepsilon< 2c^{n_0}(1-c)$, otherwise the exact same reasoning as in Case 1 shows that termination must occur at iteration $n_0+1$. In this case, we define 
\begin{align}\label{eq:Defetatilde2}
\tilde\eta := 4(n_0+1) + 8 \frac{\ln\lambda-\ln\varepsilon}{\ln\left(1+\frac{\varepsilon}{c^{n_0}(1-c)}\right)}. %4(n_0+1) + 8\frac{\ln\left(\lambda\frac{p^\diamond_{n_0}-p^*_0}{c^{n_0}-\check\beta^*_0}\right)}{\ln\left(1+\frac{\varepsilon}{c^{n_0}(1-c)}\right)},
\end{align}
We will show that the iterative procedure will terminate after $n_0+1+k^\dagger$ iterations, where $k^\dagger$ satisfies
\begin{align}\label{eq:UBkDagger}
k^\dagger \le  \frac{\ln\lambda-\ln\varepsilon}{\ln\left(1+\frac{\varepsilon}{c^{n_0}(1-c)}\right)}.
\end{align}
Clearly, for $k\le k^\dagger$, we have 
\begin{align}\label{eq:UBn0plus2k}
\frac{n_0+1+2k}{\tilde\eta} \le \frac14.
\end{align}
In addition, due to the construction above and the monotonicity of $\wt H$, we have 
\begin{align}\label{eq:Bddp}
p^*_{k} < p^\diamond_{n_0+k} < p^*_{k-1} < p^\diamond_{n_0+k-1}, \quad 1\le k\le k^\dagger.
\end{align}

Suppose at some $k\in[1,\frac{\eta}{4}]$, neither termination condition is met. Note that 
\begin{align}\label{eq:IdenSlope}
\frac{\wt H'(p^*_{k}) - \wt H'(p^\diamond_{n_0+k})}{p^*_{k}-p^\diamond_{n_0+k}} = \frac{\wt H'(p^*_{k}) - \wt H'(p^\diamond_{n_0+k})}{\wt H(p^*_{k}) - \wt H(p^\diamond_{n_0+k})} \frac{\wt H(p^*_{k}) - \wt H(p^\diamond_{n_0+k})}{p^*_{k}-p^\diamond_{n_0+k}}.
\end{align}
For the first factor on the right hand side of \eqref{eq:IdenSlope}, by \eqref{eq:DefpDiamond}, \eqref{eq:DiamondCont}, \eqref{eq:DefpStar}, and \eqref{eq:StarCont}, we have
\begin{align}\label{eq:LBCase2Ratio}
&\frac{\wt H'(p^*_{k}) - \wt H'(p^\diamond_{n_0+k})}{\wt H(p^*_{k}) - \wt H(p^\diamond_{n_0+k})} 
>  \frac{c^{k} \left(c^{n_0} - (n_0+k)\frac{\varepsilon}{\tilde\eta} \right) - c^{k} \left( \wt H'(p^*_0) + k\frac{\varepsilon}{\tilde\eta} \right)}{c^{k} \left(c^{n_0} + (n_0+k)\frac{\varepsilon}{\tilde\eta} \right) - c^{k} \left( \wt H(p^*_0) - k\frac{\varepsilon}{\tilde\eta} \right)} = \frac{c^{n_0}-\wt H'(p^*_0)-\frac{n_0+2k}{\tilde\eta}\varepsilon}{c^{n_0}-\wt H'(p^*_0)-\varepsilon+\frac{n_0+2k}{\tilde\eta}\varepsilon}.
\end{align}
For the second factor on the right hand side of \eqref{eq:IdenSlope}, in view of \eqref{eq:Bddp}, by the concavity of $\wt H$, we have 
\begin{align}\label{eq:LBCase2Concave}
\frac{\wt H(p^*_{k})-\wt H(p^\diamond_{n_0+k})}{p^*_{k}-p^\diamond_{n_0+k}} \ge \frac{\wt H(p^\diamond_{n_0+k}) - \wt H(p^*_{k-1})}{p^\diamond_{n_0+k}-p^*_{n_0+k-1}} = \frac{\wt H(p^\diamond_{n_0+k}) - \wt H(p^*_{k-1})}{\wt H'(p^\diamond_{n_0+k}) - \wt H'(p^*_{k-1})} \frac{\wt H'(p^\diamond_{n_0+k}) - \wt H'(p^*_{k-1})}{p^\diamond_{n_0+k}-p^*_{k-1}}.
\end{align}
For the first factor on the right hand side of \eqref{eq:LBCase2Concave}, it follows from \eqref{eq:DefpDiamond}, \eqref{eq:DiamondCont}, \eqref{eq:DefpStar}, and \eqref{eq:StarCont} that
\begin{align}\label{eq:LBCase2Ratio2}
\frac{\wt H(p^\diamond_{n_0+k}) - \wt H(p^*_{k-1})}{\wt H'(p^\diamond_{n_0+k}) - \wt H'(p^*_{k-1})} 
&> \frac{c^{k-1}\left(\wt H(p^*_0)-(k-1)\frac{\varepsilon}{\tilde\eta}\right) - c^k \left(c^{n_0} + (n_0+k)\frac{\varepsilon}{\tilde\eta}\right)}{c^{k-1}\left(\wt H'(p^*_0)+(k-1)\frac{\varepsilon}{\tilde\eta}\right) - c^k \left(c^{n_0} - (n_0+k)\frac{\varepsilon}{\tilde\eta} \right)} \nonumber\\
&= \frac{\wt H'(p^*_0) - c^{n_0+1} + \left(1-\frac{(k-1)+c(n_0+k)}{\tilde\eta}\right) \varepsilon}{\wt H'(p^*_0) - c^{n_0+1} + \frac{(k-1)+c(n_0+k)}{\tilde\eta}\varepsilon}.
\end{align}
For the second factor on the right hand side of \eqref{eq:LBCase2Concave}, by the monotonicity and concavity of $\wt H'$, we have
\begin{align}\label{eq:LBCase2Concave2}
\frac{\wt H'(p^\diamond_{n_0+k}) - \wt H'(p^*_{k-1})}{p^\diamond_{n_0+k}-p^*_{k-1}} \ge \frac{\wt H'(p^*_{k-1}) - \wt H'(p^\diamond_{n_0+k-1})}{p^*_{k-1}-p^\diamond_{n_0+k-1}}.
\end{align}
Putting together \eqref{eq:IdenSlope}, \eqref{eq:LBCase2Ratio}, \eqref{eq:LBCase2Concave}, \eqref{eq:LBCase2Ratio2}, and \eqref{eq:LBCase2Concave2}, we yield
\begin{align*}
&\frac{\wt H'(p^*_{k}) - \wt H'(p^\diamond_{n_0+k})}{\wt H(p^*_{k}) - \wt H(p^\diamond_{n_0+k})} \\
&\quad \ge \frac{c^{n_0}-\wt H'(p^*_0)-\frac{n_0+2k}{\tilde\eta}\varepsilon}{c^{n_0}-\wt H'(p^*_0)-\varepsilon+\frac{n_0+2k}{\tilde\eta}\varepsilon} \frac{\wt H'(p^*_0) - c^{n_0+1} + \left(1-\frac{(k-1)+c(n_0+k)}{\tilde\eta}\right) \varepsilon}{\wt H'(p^*_0) - c^{n_0+1} + \frac{(k-1)+c(n_0+k)}{\tilde\eta}\varepsilon} \frac{\wt H'(p^*_{k-1}) - \wt H'(p^\diamond_{n_0+k-1})}{p^*_{k-1}-p^\diamond_{n_0+k-1}}\\
&\quad \ge
\left(1+\frac{\frac12\varepsilon}{c^{n_0}-\wt H'(p^*_0)-\frac{3}{4}\varepsilon} \right) \left(1+\frac{\frac12\varepsilon}{\wt H'(p^*_0)-c^{n_0+1}+\frac1{4}\varepsilon}\right) \frac{\wt H'(p^*_{k-1}) - \wt H'(p^\diamond_{k-1})}{p^*_{k-1}-p^\diamond_{k-1}},
\end{align*}
where we have used \eqref{eq:UBn0plus2k} in the last inequality. Recall that $c^{n_0+1}\le \wt H(p^*_0) \le c^{n_0}$. By taking the maximum of the factors on the left hand side below, we obtain
$$\left(1+\frac{\frac12\varepsilon}{c^{n_0}-\wt H'(p^*_0)-\frac{3}{4}\varepsilon} \right) \left(1+\frac{\frac12\varepsilon}{\wt H'(p^*_0)-c^{n_0+1}+\frac1{4}\varepsilon}\right) \ge 1 + \frac{\varepsilon}{c^{n_0}-c^{n_0+1}-\frac12\varepsilon} \ge 1+\frac{\varepsilon}{c^{n_0}(1-c)},$$
where we note the first equality is attained when the two factors coincides. With induction, we obtain
\begin{align}
\frac{\wt H'(p^*_{k}) - \wt H'(p^\diamond_{n_0+k})}{\wt H(p^*_{k}) - \wt H(p^\diamond_{n_0+k})} \ge \left(1+\frac{\varepsilon}{c^{n_0}(1-c)}\right)^{k}\frac{\wt H'(p^*_0)-c^{n_0}}{p^*_0-p^\diamond_{n_0}} \ge \left(1+\frac{\varepsilon}{c^{n_0}(1-c)}\right)^{k}\varepsilon,
\end{align}
where, in the last inequality, we have used the fact below due to the concavity of $\wt H$ and \eqref{eq:Defn0},
\begin{align*}
\frac{c^{n_0}-\wt H'(p^*_{k})}{p^\diamond_{n_0}-p^*_0} \ge \frac{1-\wt H'(p^*_{k})}{1-p^*_0} \ge \varepsilon.
\end{align*}  
Since $\sigma_{\mu'}\le\lambda$, a termination must occur after a finite number of iterations. By solving for the maximum of such $k$, we validate \eqref{eq:UBkDagger}. 

\textbf{Step 3.} Lastly, we simplify the involved terms to obtain the claimed lower bounds. By combining \eqref{eq:LBDStarTer} and \eqref{eq:UBkDagger}, there exists a distinguishing question $G$ such that
\begin{align}\label{eq:UBPsi}
\Psi\big(G,(C,\mu),(C,\mu')\big) > c^{n^0+1+k^\dagger} \frac{\varepsilon}{\tilde\eta} > c^{n_0+1+\frac{\ln\lambda-\ln\varepsilon}{\ln\left(1+\frac{\varepsilon}{c^{n_0}(1-c)}\right)}} \varepsilon \Bigg/\left(4(n_0+1)+8\frac{\ln\lambda-\ln\varepsilon}{\ln\left(1+\frac{\varepsilon}{c^{n_0}(1-c)}\right)}\right). 
\end{align}
We proceed to lower bound the right hand side with simpler terms. First note that $\varepsilon<2c^{n^0}(1-c)$, thus $c^{n_0} > \frac{\varepsilon}{2(1-c)}$ and $n_0<\frac{\ln\varepsilon-\ln(1-c)-\ln 2}{\ln c}$. In addition, since $\ln(1+x)>\frac{x}{1+x}$ for $x>0$, we yield $\ln\left(1+\frac{\varepsilon}{c^{n_0}(1-c)}\right) > \frac{\varepsilon}{c^{n_0}(1-c)+\varepsilon} > \frac{\varepsilon}{1-c+\varepsilon}$.
% \begin{align}
% \ln\left(1+\frac{\varepsilon}{c^{n_0}(1-c)}\right) > \frac12\varepsilon %> \frac{\varepsilon}{c^{n_0}(1-c)+\varepsilon} \ge \frac{\varepsilon}{1-c+\varepsilon}
% \end{align}
By combining the above, we obtain \eqref{eq:LBPhiSameCDiffmu}. Regarding the relative distinguishing power, in view of \eqref{eq:DefpDiamond}, \eqref{eq:DiamondCont}, \eqref{eq:DefpStar}, \eqref{eq:DiamondCont}, \eqref{eq:UBn0plus2k} and \eqref{eq:Bddp}, at termination, the value functions involves in $G$ is upper bounded by $\frac{5}{4}c^{n_0+k^\dagger}$, and \eqref{eq:LBXiSameCDiffmu} follows with similar calculations as before.
\end{proof}

\subsection{Proof of Proposition~\ref{prop:g}}\label{subsec:Pf:prop:g}
\begin{proof}[Proof of \eqref{eq:UBAbsDiffc}]
We claim that
\begin{align}\label{eq:BBgoverp}
\left(1-\frac{\lambda' p}{2\lambda}\right)c \le \frac1p g_{c,\mu}(p) \le c, \quad p\in[0,1).
\end{align} 
Indeed, for the first inequality, by hypothesis and monotonicity of $\sigma_\mu$ in Lemma~\ref{lem:Spectral}, we have 
\begin{align*}
c\left(\sigma_\mu(1) p - \frac12\lambda' p^2\right) \le c\int_{1-p}^1\sigma_\mu(\kappa)\dif\kappa = \int_{1-g_{c,\mu}(p)}^1\sigma_\mu(\kappa)\dif\kappa \le \sigma_\mu(1) g_{c,\mu}(p).
\end{align*}
Note that we must have $\sigma_\mu(1)\ge 1$, otherwise $\sigma_\mu$ would not be a density function, as alluded by Lemma~\ref{lem:Spectral}. We obtain the first inequality by dividing both hand sides with $\sigma_\mu(1)p$. The second inequality is an immediate consequence of Lemma~\ref{lem:g} (e).

To finish the proof, we observe that, by triangle inequality,
\begin{align}
|c_0-\hat c| &\le \left| c_0-\frac{1}{\sqrt{\varepsilon}}g_{c_0,\mu_0}(\sqrt{\varepsilon}) \right| + \frac{1}{\sqrt{\varepsilon}} \left| g_{c_0,\mu_0}(\sqrt{\varepsilon}) - g_{\hat c,\hat\mu}(\sqrt{\varepsilon}) \right| + \left| \frac{1}{\sqrt{\varepsilon}}g_{\hat c,\hat\mu}(\sqrt{\varepsilon}) - \hat c \right|.
\end{align}
Combining this with \eqref{eq:BBgoverp} and the setting that $c,c'\in[0,1]$, we conclude the proof.
\end{proof}

\begin{proof}[Proof of \eqref{eq:UBAbsDiffmu}]
For rotational convenience, we introduce $g^{\circ 0}_{c,\mu}$ as the identity function, and $g^{\circ j}_{c,\mu}:=g^{\circ j-1}_{c,\mu}\circ g_{c,\mu}$ for $j\in\bN$, i.e., $g^{\circ j}_{c,\mu}$ is the $j$-fold composition of $g_{c,\mu}$. 

We first establish a useful identity, namely \eqref{eq:IDIntsigma} below. By Lemma~\ref{lem:g} (f), we have
\begin{align}
\int_{1-p}^1 \sigma_{\mu}(\kappa)\dif\kappa = \int_{1-p}^1 \sigma_{\mu}(1-(1-\kappa))\dif\kappa = c^{-1} \int_{1-p}^1 \sigma_{\mu}\big(1-g_{c,\mu}(1-\kappa)\big)\dot g_{c,\mu}(1-\kappa)\dif\kappa.
\end{align}
Next, recall the definition of $\underline p_\mu$ in \eqref{eq:Defrmu}. Observe that $\dot g_{c,\mu}(p)\ge c\frac{\sigma_{\mu}(1-p)}{\sigma_\mu(1)}>0$ for $p\in[0,\underline p_{\mu})$, due to Lemma~\ref{lem:g} (f) and the monotonicity of $\sigma_\mu$ in Lemma~\ref{lem:Spectral}. An application of the change of variable formula shows that the set below has $0$ Lebesgue measure
\begin{align}
&\left\{\kappa\in[0,\underline p_{\mu}): \sigma_\mu\big(1-g_{c,\mu}(1-\kappa)\big) \neq \sigma_\mu\big(1-g_{c,\mu}\circ g_{c,\mu}(1-\kappa)\big) \dot g_{c,\mu}\big(g_{c,\mu}(1-\kappa)\big) \right\}.
\end{align}
Meanwhile, for $\kappa\in(\underline p_{\mu},1]$, by Lemma~\ref{lem:g} (c), we have $\dot g_{c,\mu}(\kappa)=0$. It follows from Lemma~\ref{lem:g} (f) again that 
\begin{align}
\int_{1-p}^1 \sigma_{\mu}(\kappa)\dif\kappa = c^{-2} \int_{1-p}^1 \sigma_{\mu}\big(1-g^{\circ 2}_{c,\mu}(1-\kappa)\big) \dot g_{c,\mu}\big(g^{\circ 1}(1-\kappa)\big) \dot g_{c,\mu}(1-\kappa)\dif\kappa
\end{align}
By induction, for $J\in\bN$, we have 
\begin{align}
\int_{1-p}^1 \sigma_{\mu}(\kappa)\dif\kappa &= c^{-J} \int_{1-p}^1 \sigma_{\mu}\big(1-g^{\circ J}_{c,\mu}(1-\kappa)\big) \prod_{j=0}^{J-1} \dot g_{c,\mu}\big(g^{\circ j}_{c,\mu}(1-\kappa)\big) \dif\kappa \\
&= c^{-J} \int_{0}^p \sigma_{\mu}\big(1-g^{\circ J}_{c,\mu}(\kappa)\big) \prod_{j=0}^{J-1} \dot g_{c,\mu}\big(g^{\circ j}_{c,\mu}(\kappa)\big) \dif\kappa. \label{eq:IDIntsigma}
\end{align}
The analogue is also true for $(c',\mu')$. 

Substituting in \eqref{eq:IDIntsigma}, expanding with telescoping sum, then invoking the fact that $|\dot g| \le c$ and $\sigma_\mu(1)\le\lambda$ due to Lemma~\ref{lem:g} and Assumption~\ref{assump:sigmaBound}, respectively, we yield
\begin{align}
&\left|\int_{1-p}^1 \big(\sigma_{\mu'}(\kappa)-\sigma_{\mu}(\kappa)\big) \dif\kappa\right|\\
%&\quad\le \left|\left(c'^{-J}-c^{-J}\right)\int_{0}^p \sigma_{\mu'}\big(1-g^{\circ J}_{c',\mu'}(\kappa)\big) \prod_{j=0}^{J-1} \dot g_{c',\mu'}\big(g^{\circ j}_{c',\mu'}(\kappa)\big) \dif\kappa\right|\\
&\quad\le \left| \int_{0}^p \Big( c'^{-J}\sigma_{\mu'}\big(1-g^{\circ J}_{c',\mu'}(\kappa)\big) - c^{-J}\sigma_{\mu}\big(1-g^{\circ J}_{c,\mu}(\kappa)\big) \Big) \prod_{j=0}^{J-1} \dot g_{c',\mu'}\big(g^{\circ j}_{c',\mu'}(\kappa)\big) \dif\kappa \right| \\
&\qquad \begin{multlined}
+ \sum_{j=0}^{J-1} \left| \int_{0}^p  c^{-J}\sigma_{\mu}\big(1-g^{\circ J}_{c,\mu}(\kappa)\big) \left(\prod_{i=0}^{j-1}\dot g_{c',\mu'}\big(g^{\circ i}_{c',\mu'}(\kappa)\big)\right) \Big( \dot g_{c',\mu'}\big(g^{\circ j}_{c',\mu'}(\kappa)\big) - \dot g_{c,\mu}\big(g^{\circ j}_{c,\mu}(\kappa)\big) \Big) \right.\\ \left. \left(\prod_{i=j+1}^{J-1}\dot g_{c,\mu}\big(g^{\circ i}_{c,\mu}(\kappa)\big)\right) \dif\kappa \right|\end{multlined}\\
&\quad\le c'^J \left| \int_{0}^p \Big( c'^{-J}\sigma_{\mu'}\big(1-g^{\circ J}_{c',\mu'}(\kappa)\big) - c^{-J}\sigma_{\mu}\big(1-g^{\circ J}_{c,\mu}(\kappa)\big) \Big) \big) \dif\kappa \right| \\
&\qquad \begin{multlined}
+ \lambda \sum_{j=0}^{J-1} c^{-J}\left| \int_{0}^p \left(\prod_{i=0}^{j-1}\dot g_{c',\mu'}\big(g^{\circ i}_{c',\mu'}(\kappa)\big)\right) \Big( \dot g_{c',\mu'}\big(g^{\circ j}_{c',\mu'}(\kappa)\big) - \dot g_{c,\mu}\big(g^{\circ j}_{c,\mu}(\kappa)\big) \Big) \right.\\ \left. \left(\prod_{i=j+1}^{J-1}\dot g_{c,\mu}\big(g^{\circ i}_{c,\mu}(\kappa)\big)\right) \dif\kappa \right|\end{multlined}\\
&\quad=: I_1(p) + \lambda \sum_{j=0}^{J-1}I^j_2(p),  \label{eq:UBAbsDiffIntsigma}
\end{align} 
where, for notation convenience, we have set $\prod_{i=0}^{-1}=1$ and $\prod_{i=J}^{J-1}=1$.  

Regarding $I_1$, we first observe that, by Lemma~\ref{lem:g} (e),
\begin{align}\label{eq:UBgjkappa}
g^{\circ j}_{c,\mu}(\kappa) \le c^{j}, \quad \kappa\in[0,1],\, j\in\bN. 
\end{align}
This together with  Assumption~\ref{assump:sigmaLip} implies
\begin{align}
I_1(p) %&\le  c'^{J} \int_{0}^p\Big| c'^{-J}\sigma_{\mu'}\big(1-g^{\circ J}_{c',\mu'}(\kappa)\big) - c^{-J}\sigma_{\mu}\big(1-g^{\circ J}_{c,\mu}(\kappa)\big) \Big| \dif\kappa \\
& \le c'^{J} \bigg( \big| c'^{-J}\sigma_{\mu'}(1) - c^{-J}\sigma_{\mu}(1) \big|p + \frac12 \lambda' p^2 \bigg), \label{eq:UBI2temp}
\end{align}
where the presence of $\frac12\lambda'p^2$ rather than $\lambda'p^2$ arises from the monotonicity of $\sigma_\mu$ in Lemma~\ref{lem:Spectral}. In addition, by \eqref{eq:IDIntsigma} with $p=1$ (note $\int_0^1\sigma_\mu=1$ due to Lemma~\ref{lem:Spectral}), \eqref{eq:UBgjkappa}, and Assumption~\ref{assump:sigmaLip}, we have 
\begin{align}\label{eq:UBAbsErrsigma1}
\left| \frac{\sigma_{\mu}(1)}{c^J} \!-\! \frac{1}{\int_0^1 \prod_{j=0}^{J-1} \dot g_{c,\mu}\big(g^{\circ j}_{c,\mu}(\kappa)\big)\dif\kappa} \right| = \left|  \frac{\int_0^1 \left(\sigma_{\mu}(1)-\sigma_{\mu}\big(1-g^{\circ J}_{c,\mu}(\kappa)\big)\right) \prod_{j=0}^{J-1} \dot g_{c,\mu}\big(g^{\circ j}_{c,\mu}(\kappa)\big) \dif\kappa}{c^J\int_0^1\prod_{j=0}^{J-1} \dot g_{c,\mu}\big(g^{\circ j}_{c,\mu}(\kappa)\big)\dif\kappa} \right| \le \lambda'.
\end{align}
Note that \eqref{eq:UBAbsErrsigma1} together with $|\sigma_\mu(1)|\le\lambda$ also implies $\left(\int_0^1 \prod_{j=0}^{J-1} \dot g_{c,\mu}\big(g^{\circ j}_{c,\mu}(\kappa)\big)\dif\kappa\right)^{-1}\le \lambda c^{-J}+\lambda'$. It follows that 
\begin{align}
&\left| \frac{\sigma_{\mu'}(1)}{c'^J} - \frac{\sigma_{\mu}(1)}{c^J} \right|\\
&\quad\le 2\lambda' +  \frac{\left| \int_0^1\prod_{j=0}^{J-1} \dot g_{c,\mu}\big(g^{\circ j}_{c,\mu}(\kappa)\big)\dif\kappa - \int_0^1\prod_{j=0}^{J-1} \dot g_{c',\mu'}\big(g^{\circ j}_{c',\mu'}(\kappa)\big)\dif\kappa \right|}{\left(\int_0^1\prod_{j=0}^{J-1} \dot g_{c,\mu}\big(g^{\circ j}_{c,\mu}(\kappa)\big)\dif\kappa\right)\left(\int_0^1\prod_{j=0}^{J-1} \dot g_{c',\mu'}\big(g^{\circ j}_{c',\mu'}(\kappa)\big)\dif\kappa\right)}\\
&\quad\le 2\lambda' + (\lambda c^{-J}\!\!+\lambda')(\lambda c'^{-J}\!\!+\lambda') \left| \int_0^1\! \left(\prod_{j=0}^{J-1} \dot g_{c,\mu}\big(g^{\circ j}_{c,\mu}(\kappa)\big) \!-\! \prod_{j=0}^{J-1} \dot g_{c',\mu'}\big(g^{\circ j}_{c',\mu'}(\kappa)\big)\!\right)\!\dif\kappa \right|. \label{eq:UBAbsDiffsigmacJ}
\end{align}
By combining \eqref{eq:UBI2temp} and \eqref{eq:UBAbsDiffsigmacJ}, with the same telescoping sum argument leading to \eqref{eq:UBAbsDiffIntsigma}, we yield
\begin{align}
I_1(p) \le \left( 2\lambda'c^J + (\lambda +\lambda'c^{J})^2  \sum_{j=0}^{J-1} I^j_2(1) \right) p + \frac12\lambda'c'^J p^2.
\end{align}
Furthermore, by combining the above with inequality \eqref{eq:UBAbsDiffIntsigma} with the hypothesis that $c\ge c'$, we obtain
\begin{align}
\sup_{p\in[0,1]}\left|\int_{1-p}^1 \big(\sigma_{\mu'}(\kappa)-\sigma_{\mu}(\kappa)\big) \dif\kappa\right| &\le \sup_{p\in[0,1]} \left\{ \left( 2\lambda'c^J + (\lambda +\lambda'c^{J})^2  \sum_{j=0}^{J-1} I^j_2(1) \right) p + \frac12\lambda'c^J p^2 + \lambda \sum_{j=0}^{J-1}I^j_2(p) \right\} \\
&\le \frac52\lambda'c^J + \Big(\lambda+(\lambda +\lambda'c^{J})^2\Big) \sup_{p\in[0,1]} \sum_{j=0}^{J-1}I^j_2(p). \label{eq:UBAbsDiffIntsigma2}
\end{align}

It remains to bound $\sum_{j=0}^{J-1}I^j_2(p)$. Observe that by Lemma~\ref{lem:g} (f) and triangle inequality,
\begin{align}
I^j_2(p) &\le c^{-j-2} \left| \int_{0}^p \Big( \dot g_{c',\mu'}\big(g^{\circ j}_{c',\mu'}(\kappa)\big) - \dot g_{c,\mu}\big(g^{\circ j}_{c,\mu}(\kappa)\big) \Big)  \left(\prod_{i=0}^{j-1}\dot g_{c',\mu'}\big(g^{\circ i}_{c',\mu'}(\kappa)\big)\right) \dif\kappa \right|\\
&\le c^{-j-2} \left| \int_{0}^p \Big( \dot g_{c',\mu'}\big(g^{\circ j}_{c',\mu'}(\kappa)\big) - \dot g_{c,\mu}\big(g^{\circ j}_{c',\mu'}(\kappa)\big) \Big)  \left(\prod_{i=0}^{j-1}\dot g_{c',\mu'}\big(g^{\circ i}_{c',\mu'}(\kappa)\big)\right) \dif\kappa \right| \\
&\quad + c^{-1} \left| \int_{0}^p \Big( \dot g_{c,\mu}\big(g^{\circ j}_{c',\mu'}(\kappa)\big) - \dot g_{c,\mu}\big(g^{\circ j}_{c,\mu}(\kappa)\big) \Big)  \dif\kappa \right|\\
&=: \hat I^j_2(p) + \check I^j_2(p).\label{eq:UBI2jtemp}
\end{align}

%We are ready to derive the upper bounds for $\hat I^j_2$ and $\check I^J_2$. 
%For the first one, by Lemma~\ref{lem:g} (c) (d), \eqref{eq:UBgjkappa}, and \eqref{eq:UBAbsDiffdg},
%\begin{align}
%\hat I^j_2(p) &\le c^{-1} \sup_{\kappa\in[0,1]}\left| \dot g_{c',\mu'}\big(g^{\circ j}_{c',\mu'}(\kappa)\big) - \dot g_{c,\mu}\big(g^{\circ j}_{c',\mu'}(\kappa)\big) \right| \le c^{-1} \left| \dot g_{c',\mu'}(0) - \dot g_{c,\mu}(0) \right| + \frac{2\lambda'c^j}{1-c}\\
%& = c^{-1}|c-c'| + \frac{2\lambda'c^j}{1-c}
%\end{align}
%For the second one, 
Regarding $\hat I^j_2$, with multiple applications of change of variables, we obtain 
\begin{align}\label{eq:UBIhat2j}
\hat I^j_2(p) &= c^{-j-2} \left| \int_{0}^{g^{\circ j}(p)} \Big( \dot g_{c',\mu'}\big(\kappa\big) - \dot g_{c,\mu}\big(\kappa\big) \Big) \dif\kappa \right| \le c^{-j-2}\|g_{c',\mu'}-g_{c,\mu}\|_\infty.
\end{align}

As for $\check I_2^j$, we first observe the following, due to Lemma~\ref{lem:g} and Assumption~\ref{assump:sigmaLip}. That is, for any $p,p'\in(0,1)$, we have 
\begin{align}
&\left|\dot g_{c,\mu}(p) - \dot g_{c,\mu}(p')\right| =  c\left| \frac{\sigma_\mu(1-p)}{\sigma_\mu\big(1-g_{c,\mu}(p)\big)} - \frac{\sigma_\mu(1-p')}{\sigma_\mu\big(1-g_{c,\mu}(p')\big)} \right|\\
&\quad= c\left| \frac{\Big(\sigma_\mu(1-p)-\sigma_\mu(1-p')\Big)\sigma_\mu\big(1-g_{c,\mu}(p')\big) + \sigma_\mu(1-p')\Big(\sigma_\mu\big(1-g_{c,\mu}(p')\big)-\sigma_\mu\big(1-g_{c,\mu}(p)\big)\Big)}{\sigma_\mu\big(1-g_{c,\mu}(p)\big)\sigma_\mu\big(1-g_{c,\mu}(p')\big)}  \right| \\
&\quad\le \frac{c\lambda'|p-p'|}{\sigma_\mu\big(1-g_{c,\mu}(p)\big)} \le \frac{c\lambda'|p-p'|}{\sigma_\mu\big(1-g_{c,\mu}(1)\big)}, \label{eq:UBAbsDiffdg}
\end{align} 
where in the last inequality we have used the monotonicity of $\sigma_\mu$ and $g_{c,\mu}$ in Lemma~\ref{lem:Spectral} and Lemma~\ref{lem:g} (c), respectively. In addition, note that for $j>1$,
\begin{align}
\left\| g^{\circ j}_{c',\mu'} - g^{\circ j}_{c,\mu} \right\|_\infty &\le \left\| g_{c',\mu'}\circ g^{\circ j-1}_{c',\mu'} - g_{c',\mu'}\circ g^{\circ j-1}_{c,\mu} \right\|_\infty + \left\| g_{c',\mu'}\circ g^{\circ j-1}_{c,\mu} - g_{c,\mu}\circ g^{\circ j-1}_{c,\mu} \right\|_\infty\\
&\le c'\left\| g^{\circ j-1}_{c',\mu'} - g^{\circ j-1}_{c,\mu} \right\|_\infty + \left\| g_{c',\mu'} - g_{c,\mu} \right\|_\infty\\
&\le \sum_{i=0}^j c'^i \left\| g_{c',\mu'} - g_{c,\mu} \right\|_\infty  \le \frac{1}{1-c}\left\| g_{c',\mu'} - g_{c,\mu} \right\|_\infty, \label{eq:UBIcheck2j}
\end{align}
where the third inequality is a consequence of induction. By \eqref{eq:UBIhat2j}, \eqref{eq:UBAbsDiffdg}, and \eqref{eq:UBIcheck2j}, we have
\begin{align}\label{eq:UBIhat2j2}
\hat I^j_2(p) \le \frac{\lambda' \left\| g_{c',\mu'} - g_{c,\mu} \right\|_\infty}{(1-c)\sigma_\mu\big(1-g_{c,\mu}(1)\big)} p.
\end{align}
Finally, let $J$ be the largest integer such that $J\le\frac{\ln\left\| g_{c',\mu'} - g_{c,\mu} \right\|_\infty}{2\ln c}$. Putting together \eqref{eq:UBI2jtemp}, \eqref{eq:UBIhat2j}, and \eqref{eq:UBIhat2j2}, we yield that for any $p\in[0,1]$,
\begin{align}
\sum_{j=0}^{J-1} I^j_2(p) &\le \left(c^{-2}\frac{c^{-J}-1}{c^{-1}-1}+\frac{\lambda'J}{(1-c)\sigma_\mu\big(1-g_{c,\mu}(1)\big)}\right) \left\| g_{c',\mu'} - g_{c,\mu} \right\|_\infty \\
&\le \frac{1}{(1-c)c}\sqrt{\left\| g_{c',\mu'} - g_{c,\mu} \right\|_\infty} + \frac{\lambda'}{2\sigma_\mu\big(1-g_{c,\mu}(1)\big) (1-c) \ln c} \left\| g_{c',\mu'} - g_{c,\mu} \right\|_\infty \ln \left\| g_{c',\mu'} - g_{c,\mu} \right\|_\infty.
\end{align}
In view of \eqref{eq:UBAbsDiffIntsigma2}, we conclude the proof.
\end{proof}

\subsection{Proof of Proposition~\ref{prop:MultiChoiceSeparation}}\label{subsec:Proofprop:MultiChoiceSeparation}
We start by establishing a useful technical lemma.
\begin{lemma}\label{lem:NegPosIntervals}
For $i=1,\dots,L$, let $\sigma_i=\sigma_{\mu_i}$, where $\sigma_{\mu_i}$ is defined in \eqref{eq:Defsigma}. If $\mu_i\neq\mu_j$, then there exist $0\le c<d<1$ and $0\le s<t<1$ such that $\sigma_i(r)-\sigma_j(r)<0$ for $r\in(c,d)$ and $\sigma_i(r)-\sigma_j(r)>0$ for $r\in(s,t)$, respectively. 
\end{lemma}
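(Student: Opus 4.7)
The plan is to combine the injectivity of $\mu\mapsto\sigma_\mu$ (from Lemma \ref{lem:Spectral}) with the shared normalization $\int_0^1\sigma_i(r)\,dr=\int_0^1\sigma_j(r)\,dr=1$ to force $f:=\sigma_i-\sigma_j$ to take strictly positive and strictly negative values on sets of positive Lebesgue measure, then to upgrade these pointwise facts to open intervals using right-continuity of $f$.

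First, since $\mu_i\neq\mu_j$, the explicit recovery formula in Lemma \ref{lem:Spectral} (expressing $\mu([0,r])$ in terms of $\sigma_\mu$) prevents $\sigma_i$ and $\sigma_j$ from agreeing on all of $[0,1)$. Second, I would argue that the set $A_-:=\{r\in[0,1):f(r)<0\}$ has positive Lebesgue measure: if not, then $f\ge 0$ a.e., and because $\int_0^1 f(r)\,dr=1-1=0$ we would get $f=0$ a.e., and right-continuity of $f$ (inherited from that of $\sigma_i$ and $\sigma_j$) would upgrade this to $f\equiv 0$ on $[0,1)$, contradicting the previous step. The symmetric argument shows $A_+:=\{r\in[0,1):f(r)>0\}$ also has positive measure, so both sets are nonempty.

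Third, pick any $c\in A_-$. Since $c\in[0,1)$ and $f$ is right-continuous at $c$ with $f(c)<0$, there exists $\varepsilon>0$ with $c+\varepsilon<1$ such that $f(r)<f(c)/2<0$ for every $r\in[c,c+\varepsilon)$; setting $d:=c+\varepsilon$ gives the desired interval $(c,d)$. An identical argument starting from a point $s\in A_+$ produces $t\in(s,1)$ with $f(r)>0$ for $r\in(s,t)$.

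There is no real obstacle here; the only care needed is (i) checking that $\sigma_i=\sigma_j$ a.e.\ together with right-continuity forces equality everywhere on $[0,1)$ (standard, since two right-continuous functions equal a.e.\ must coincide at every point of the domain), and (ii) ensuring the interval extracted from right-continuity stays inside $[0,1)$, which is automatic once $c,s<1$.
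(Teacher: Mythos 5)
Your proof is correct and uses essentially the same ingredients as the paper's: the normalization $\int_0^1\sigma_i(r)\,\dif r=\int_0^1\sigma_j(r)\,\dif r=1$, right-continuity of $\sigma_i-\sigma_j$, and the fact from Lemma \ref{lem:Spectral} that $\sigma_\mu$ determines $\mu$, all combined in a proof by contradiction. The only cosmetic difference is that you first produce sign sets of positive measure and then spread a pointwise sign to an interval via right-continuity, whereas the paper runs the contradiction directly at the interval level (density of $\{\sigma_i\le\sigma_j\}$ plus right-continuity forces $\sigma_i\le\sigma_j$ everywhere, hence equality a.e.\ and then everywhere); both routes are sound.
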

\begin{proof}
We will proceed by contradiction. Suppose for any $(s,t)\subseteq[0,1)$, we have $\sigma_i(r)-\sigma_j(r)\le 0$ for some $r\in(c,d)$. Consequently, $\set{r\in[0,1): \sigma_i(r)-\sigma_j(r) \le 0}$ is dense in $[0,1)$. Combining this with the right continuity in Lemma~\ref{lem:Spectral}, we yield $\sigma_i(r)-\sigma_j(r) \le 0$ for $r\in[0,1)$. But since $\int_0^1\sigma_i(r)\dif r = \int_0^1\sigma_j(r)\dif r =1$, we must have $\sigma_i(r)=\sigma_j(r)$ for Lebesgue almost every $r\in[0,1)$, and thus every $r\in[0,1)$ due to right continuity again. This together with Lemma~\ref{lem:Spectral} and monotone class lemma \cite[Section 4.4, Lemma 4.13]{Aliprantis2006book} implies $\mu_i=\mu_j$, which contradicts the hypothesis that $\mu_i\neq\mu_j$. Analogously, there must exists  $0\le c<d<1$ such that $\sigma_i(r)-\sigma_j(r)<0$ for $r\in(c,d)$ as long as $\mu_i\neq\mu_j$.
\end{proof}

%\begin{lemma}\label{lem:HomoInterval}
%Under the setting of Lemma~\ref{lem:NegPosIntervals}, we define
%\begin{gather}
%U_{ij}^-:=\set{u\in(0,1):\sigma_i(r)<\sigma_j(r),\forall r\in(u-\varepsilon,u+\varepsilon),\exists\varepsilon>0},\\
%U_{ij}^0:=\set{u\in(0,1):\sigma_i(r)=\sigma_j(r),\forall r\in(u-\varepsilon,u+\varepsilon),\exists\varepsilon>0},\\
%U_{ij}^+:=\set{u\in(0,1):\sigma_i(r)>\sigma_j(r),\forall r\in(u-\varepsilon,u+\varepsilon),\exists\varepsilon>0}.
%\end{gather} 
%Then, $U^-_{ij}$, $U^0_{ij}$ and $U^+_{ij}$ are open. Moreover, for any nonempty open interval $I\subset[0,1]$, $I\cap(U^-_{ij}\cup U^0_{ij}\cup U^+_{ij})$ is not empty. 
%\end{lemma}
%\begin{proof}
%It follows from definition that $U^-_{ij}$, $U^0_{ij}$ and $U^+_{ij}$ are open. As for the second statement, we proceed by contradiction. Suppose there exists an nonempty open interval $I\subset[0,1]$ such that $r\notin U^-_{ij}\cup U^0_{ij}\cup U^+_{ij}$ for any $r\in I$. On other hand, note that if for any $r_0\in I$ such that $\sigma^i(r_0)<\sigma_j(r_0)$ or $\sigma_i(r_0)>\sigma_j(r_0)$, then by the right continuity of $\sigma_i$ and $\sigma_j$ from Lemma~\ref{lem:Spectral}, there is $\delta>0$ such that for all $r\in(r_0,r_0+\delta)$ we have $\sigma_i(r)<\sigma_j(r)$ or $\sigma_i(r)>\sigma_j(r)$. Consequently, $r\in U^-_{ij}\cup U^+_{ij}$ for all $r\in(r_0,r_0+\delta)$. To avoid contradicting the hypothesis we just supposed, we must have $\sigma_i(r)=\sigma_j(r)$ for all $r\in I$. But this immediately lead to $r\in U^0_{ij}$ for all $r\in I$, which is a contradiction. This completes the proof.
%\end{proof}

We are in position of proving Proposition~\ref{prop:MultiChoiceSeparation}.\\
\textbf{$\mathbf{L=2}$.} Although the statement for $L=2$ can be considered as a special case of Theorem~\ref{thm:Separation}, we present an alternative proof here in order to better illustrate a key mechanism used in the proof for $L\ge 3$. %Additionally, the method utilized here can be viewed as an alternative for Step 2 in the proof of Lemma~\ref{lem:gIneq}. 
We select arbitrarily $x_0<x_1<x_2$. In view of Lemma~\ref{lem:NegPosIntervals}, we let $(c,d)$ and $(s,t)$ be non-empty interval on $[0,1]$ such that
\begin{align}\label{eq:sigma1sigma2}
\sigma_1(r)<\sigma_2(r),\,\, r\in(c,d) \quad\text{and}\quad \sigma_1(r)>\sigma_2(r),\,\, r\in(s,t).
\end{align} 
Here, we only present the argument for the case of $d\le s$ as the case of $c\ge t$ can be done analogously. By \eqref{eq:sigma1sigma2}, because $\sigma_i,\,i=1,2$ are non-negative and nondecreasing, we have $\sigma_2(r)>0$ for $r>c$. For $\varepsilon\in(0,d-c)$ sufficiently small, there is a $\delta_0\in(0, t-s)$ such that 
\begin{align*}
(x_1-x_0)\int_{c}^{c + \varepsilon} \sigma_2(r)\dif r = (x_2-x_1)\int_{t-\delta_0}^{t}\sigma_2(r)\dif r.
\end{align*} 
By \eqref{eq:sigma1sigma2} again, for the $\varepsilon$ and $\delta$ introduced above, we have 
\begin{multline}\label{eq:ComparexIntsigma}
(x_1-x_0)\int_{c}^{c+\varepsilon}\!\sigma_1(r)\dif r < (x_1-x_0)\int_{c}^{c+\varepsilon}\sigma_2(r)\dif r \\ 
= (x_2-x_1)\int_{t-\delta_0}^{t}\sigma_2(r)\dif r < (x_2-x_1)\int_{t-\delta_0}^{t}\sigma_1(r)\dif r.
\end{multline}
With similar reasoning as before, we have $\sigma_1(r)>0$ for $r>s$. Recall that we also have $\sigma_2(r)>0$ for $r>s$. Note $\sigma_1$ must be locally bounded due to Lemma~\ref{lem:Spectral}. The above together with \eqref{eq:ComparexIntsigma} implies that there exists $\delta\in(\delta_0, t-s)$ such that 
\begin{multline}\label{eq:ConstrEpsilontilde}
(x_1-x_0)\int_{c}^{c+\varepsilon}\sigma_1(r)\dif r < (x_2-x_1)\int_{t-\delta}^{t}\sigma_1(r)\dif r \\ \quad{\text{and}} \quad (x_1-x_0)\int_{c}^{c+\varepsilon}\sigma_2(r)\dif r > (x_2-x_1)\int_{t-\delta}^{t}\sigma_2(r)\dif r.
\end{multline}
Finally, we let $Y$ be a real-valued random variable such that 
\begin{align*}
\bP(Y=x_i) = \begin{cases}
c, & i = 0,\\
t-c, & i = 1, \\
1-t, & i = 2,
\end{cases}
\quad\text{thus}\quad F_{Y}^{-1}(u) = \begin{cases}
x_0, & u\in[0,c),\\
x_1, & u\in[c, t),\\
x_2, & u\in[t, 1],
\end{cases}
\end{align*}
and $Z$ be another real-valued random variable such that 
\begin{align*}
\bP(Z=x_\ell) = \begin{cases}
c+\varepsilon, & \ell = 0,\\
t-\delta-(c+\varepsilon), & \ell = 1, \\
1-(t-\delta), & \ell = 2,
\end{cases}
\quad\text{thus}\quad  F_{Z}^{-1}(u) = \begin{cases}
x_0, & u\in[0,c+\varepsilon),\\
x_1, & u\in[c+\varepsilon, t-\delta),\\
x_2, & u\in[t-\delta, 1].
\end{cases}
\end{align*}
In view of Lemma~\ref{lem:Spectral}, for $i=1,2$,
\begin{align*}
\rho_{\mu_i}(Y) - \rho_{\mu_i}(Z) = (x_1-x_0)\int_{\tilde c}^{c+\varepsilon}\sigma_i(r)\dif r - (x_2-x_1)\int_{t-\delta}^{t}\sigma_i(r)\dif r.
\end{align*}
This together with \eqref{eq:ConstrEpsilontilde} implies that $Y$ is preferred under $\rho_{\mu_1}$ while $Z$ is preferred under $\rho_{\mu_2}$. Constructing $\bX$ and $G$ according to $Y$ and $Z$ finishes the proof for $L=2$.

\noindent
\textbf{$\mathbf{L\ge 3}$.} Let $\bL=\set{1,\dots,L}$ and consider $i\neq j$. In view of the monotonicity in Lemma~\ref{lem:Spectral}, the support of $\sigma_\ell$ is of the form $[b_\ell,1]$ with $b_\ell\in[0,1)$. Without loss of generality, we suppose $\sigma_i$ has the smallest support among all $\sigma_\ell$'s. By Lemma~\ref{lem:NegPosIntervals}, we let  $I^{<}_{ij}$ and $I^{>}_{ij}$ be nonempty open intervals such that $\sigma_{i}(r)<\sigma_j(r)$ for all $r\in I^{<}_{ij}$ and $\sigma_{i}(r)>\sigma_j(r)$ for all $r\in I^{>}_{ij}$, respectively.  %Note $I^{>}_{ij}$ and $I^{<}_{ij}$ must be included by $\supp\sigma_i$. 
Thanks to the monotonicity in Lemma~\ref{lem:Spectral} and assumption that $\sigma_i$ has the smallest support, we must have
\begin{equation}\label{eq:SupportI}
\text{one of $I^{<}_{ij}$ and $I^{>}_{ij}$ is included by $\bigcap_{\ell\in\bL}\supp\sigma_{\ell}$.}
\end{equation}
%In view of Lemma~\ref{lem:HomoInterval}, for any nonempty open interval, there is a nonempty sub open interval $I'\subseteq I$ such that either $I'\in U^<_{i'j'}$, $I'\in U^=_{i'j'}$ or $I'\in U^>_{i'j'}$ for any distinct $i',j'\in\bL$. Thus, without loss of generality, we assume for any distinct $i',j'\in\bL$ the domination relationship between $\sigma_{i'}$ and $\sigma_{j'}$ remain unchanged within $I^{>}_{ij}$. We assume the analogous for $I^{<}_{ij}$. 
We construct $X^2_1$ and $X^2_2$ as in the case of $L=2$ such that
\begin{align}\label{eq:PreferentialOrder2}
\rho_{\mu_i}(X^2_1)<\rho_{\mu_i}(X^2_2)\quad\text{but}\quad\rho_{\mu_j}(X^2_1)>\rho_{\mu_j}(X^2_2)
\end{align}
Note that $X^2_1$ and $X^2_2$ have the same finite range with cardinality of at most $3$. Consequently, $F_{X^2_1}^{-1}$ and $F_{X^2_2}^{-1}$ are piecewise constant on $[0,1)$ with finitely many jumps. Moreover, in view of \eqref{eq:SupportI} and the construction procedure in the case of $L=2$, we can slightly perturb the probabilities associated with $X^2_1$ and $X^2_2$ such that $\rho_{\mu_{\ell}}(X^2_1)\neq\rho_{\mu_{\ell}}(X^2_2)$ for any $\ell\in\bL$ while the preference order in \eqref{eq:PreferentialOrder2} remains unchanged. Based on the discussion above, we divide $\bL$ into a partition $\bL^2_1$ and $\bL^2_2$ such that for any $k,k'\in\{1,2\}$ with $k\neq k'$ and $\ell\in\bL^2_k$ we have $\rho_{\mu_\ell}(X^2_k) < \rho_{\mu_{\ell}}(X^2_{k'})$. We also note that $\Delta^2:=\min_{k,k'\in\set{1,2},\,k'\neq k,\,\ell\in\bL^2_k }\big|\rho_{\mu_{\ell}}(X^2_k)-\rho_{\mu_{\ell}}(X^2_{k'})\big|>0.$

We will proceed by induction. Suppose for some $K\ge 2$, there are $(X^K_k)_{k=1}^K$ and $(\bL^K_k)_{k=1}^K$ such that
\begin{itemize}
\item for any $k\in\set{1,\dots,K}$, $\range(X^K_k)\subseteq\set{x_0,x_1,\dots,x_{2K}}$ with $x_0<x_1<\dots<x_{2K}$ and thus
\begin{align*}
\cJ^K_{k}:=\left\{u\in[0,1]:F^{-1}_{X^K_k}(u-)\neq F^{-1}_{X^K_{k}}(u+)\right\} \text{ is finite},
\end{align*}
where we set $F^{-1}(0-)=F^{-1}(0)$ and $F^{-1}(1+)=F^{-1}(1)$;
\item $\bL^K_k$ is nonempty for $k=1,\dots,K$ and $\bL=\bigcup_{k=1}^K \bL^K_k$,
\item for any $k,k'\in\set{1,\dots,K}$ with $k\neq k'$ and  $\ell\in\bL^K_k$, we have $\rho_{\mu_\ell}(X^K_k) < \rho_{\mu_{\ell} }(X^K_{k'})$,
\item $\Delta^K:=\min_{k,k'\in\set{1,\dots,K},\,k\neq k',\ell\in\bL^K_k }\big|\rho_{\mu_{\ell}}(X^K_k)-\rho_{\mu_{\ell}}(X^K_{k'})\big|>0.$
\end{itemize}
Without loss of generality, we assume $\bL^K_1$ has more than two elements. With similar reasoning leading to \eqref{eq:SupportI}, we pick $i,j\in \bL^K_1$ and non-empty open intervals $(c,d)$ and $(s,t)$ satisfying 
\begin{gather}
\text{$\sigma_{i}(r)<\sigma_{j}(r)$ for all $r\in I^{<}_{ij}$\quad and \quad $\sigma_{i}(r)>\sigma_{j}(r)$ for all $r\in I^{>}_{ij}$.} \label{eq:sigmaisigmaj}\\
\text{one of } I^{<}_{ij} \text{ and }  I^{>}_{ij} \text{ is included by } \bigcap_{\ell\in\bL^K_1}\supp\sigma_{\ell}.\label{eq:SupportIhat}
\end{gather}
Additionally, we require that $\overline{I^{<}_{ij}}$ and $\overline{I^{>}_{ij}}$ do not overlap with $\cJ^K_{1}$, which consists of finitely many points. This is viable as $\cJ^K_{1}$ is finite. In what follows, we denote $I^{<}_{ij}=(c,d)$ and $I^{>}_{ij}=(s,t)$.

Without loss of generality, we suppose $d<s$. Let $u_1,u_2, u_3,u_4 \in \cJ^K_{1}\cup\{0,1\}$ such that\footnote{$\subset$ means being strict subset of.}  
\begin{multline}\label{eq:uIntervals}
u_1<u_2,\quad u_3<u_4,\quad (c,d)\subset(u_1,u_2),\quad (s,t)\subset(u_3,u_4),\quad ((u_1, u_2)\cup(u_3,u_4))\cap\cJ^K_{1} = \emptyset,
\end{multline} 
where the last condition is viable because $[c,d]$ and $[s,t]$ do not overlap with $\cJ^K_{1}$, which is finite. Note that $u_1<u_4$ because $d<s$. We select arbitrarily 
\begin{align}\label{eq:Rangeh}
h^-, h^+\in\left(0,\frac12\left(\min_{k\in\set{1,\dots,K}}\left\{x_{k}-x_{k-1}\right\}\wedge \Delta^K\right)\right].
\end{align}

We proceed with a similar construction as in the case of $L=2$. To start with, by Lemma~\ref{lem:Spectral} and \eqref{eq:sigmaisigmaj}, we have $\sigma_{j}(r)>0$ for $r>c$. Thus, for $\varepsilon\in(0,d-c)$ sufficiently small, there is $\delta_0\in(0,t-s)$ such that
\begin{align*}
h^{-}\int_{c}^{c+\varepsilon} \sigma_{j}(r)\dif r = h^+\int_{t-\delta_0}^{t}\sigma_{j}(r)\dif r.
\end{align*} 
By \eqref{eq:sigmaisigmaj}, 
\begin{align}\label{eq:ComparehIntsigma}
h^{-}\int_{c}^{c+\varepsilon} \sigma_{i}(r)\dif r < h^{-}\int_{c}^{c+\varepsilon} \sigma_{j}(r)\dif r = h^+\int_{t-\delta_0}^{t}\sigma_{j}(r)\dif r < h^+\int_{t-\delta_0}^{t}\sigma_{i}(r)\dif r.
\end{align}
By Lemma~\ref{lem:Spectral} and \eqref{eq:sigmaisigmaj}, we also have $\sigma(r)>0$ for $r>s$. This together with \eqref{eq:ComparehIntsigma} implies that there exists a $\delta\in(\delta_0,t-s)$ such that 
\begin{align}\label{eq:ConstrEpsilonDeltaK1}
h^{-}\int_{c}^{c+\varepsilon} \sigma_{i}(r)\dif r < h^+\int_{t-\delta}^{t}\sigma_{i}(r)\dif r \quad{\text{and}} \quad h^{-}\int_{c}^{c+\varepsilon} \sigma_{j}(r)\dif r > h^+\int_{t-\delta}^{t}\sigma_{j}(r)\dif r.
\end{align}
Furthermore, in view of \eqref{eq:SupportIhat} and the assumptions that $d<s$, we can pick $\delta$ such that 
\begin{align}\label{eq:ConstrEpsilonDeltaK1ell}
h^{-}\int_{c}^{c+\varepsilon} \sigma_{\ell}(r)\dif r \neq h^+\int_{t-\delta}^{t}\sigma_{\ell}(r)\dif r,\quad \ell\in\bL^K_1.
\end{align}
We then define $Y$ with
\begin{align*}
F^{-1}_Y(u) := F^{-1}_{X^K_1}(u) - h^-\1_{[u_1,c)}(u) + h^+\1_{[t, u_4)}(u), \quad u\in[0,1],
\end{align*}
and define $Z$ with 
\begin{align*}
F^{-1}_Z(u) := F^{-1}_{X^K_1}(u) - h^-\1_{[u_1,c+\varepsilon)}(u) + h^+\1_{[t-\delta, u_4)}(u),\quad u\in[0,1),
\end{align*}
where we note that $F^{-1}_Y$ and $F^{-1}_{Z}$ are both valid inverse CDFs because of \eqref{eq:uIntervals} and \eqref{eq:Rangeh}. It follows from Lemma~\ref{lem:Spectral} that  
\begin{align*}
\rho_{\mu_{\ell}}(Y) - \rho_{\mu_{\ell}}(Z) = h^-\int_{c}^{c+\varepsilon}\sigma_{\ell}(r)\dif r - h^+\int_{t-\delta}^{t}\sigma_{\ell}(r)\dif r, \quad \ell\in\bL^K_1.
\end{align*}
This together with \eqref{eq:ConstrEpsilonDeltaK1} implies that $Y$ is preferred under $\rho_{\mu_i}$ and $Z$ is preferred under $\rho_{\mu_j}$. Moreover, in view of \eqref{eq:ConstrEpsilonDeltaK1ell}, $\rho_{\mu_\ell}(Y)\neq\rho_{\mu_\ell}(Z)$ for any $\ell\in\bL^K_1$. Furthermore, by \eqref{eq:Rangeh} and the fact that $\int_0^1\sigma_\mu(r)\dif r=1$ due to Lemma~\ref{lem:Spectral}, we have 
\begin{align*}
\max\big\{\big|\rho_{\mu_\ell}(X^K_1)-\rho_{\mu_\ell}(Y)|, \big|\rho_{\mu_\ell}(X^K_1)-\rho_{\mu_\ell}(Z)|\big\} < \frac12\Delta^K,
\end{align*}
and thus, for $k\in\set{2,\dots,K}$, we have
\begin{align}\label{eq:CompareYZX}
\min\big\{\rho_{\mu_\ell}(Y), \rho_{\mu_\ell}(Z)\big\} < \rho_{\mu_\ell}(X^K_k),\;\ell\in\bL^K_1 \quad\text{and}\quad \rho_{\mu_\ell}(X^K_k) < \min\big\{\rho_{\mu_\ell}(Y), \rho_{\mu_\ell}(Z)\big\},\;\ell\in\bL^K_k.
\end{align}
Finally, to finish the construction, we define $X^{K+1}_1=Y$, $X^{K+1}_2=Z$, $X^{K+1}_k=X^K_{k-1}$ for $k=3,\dots,K+1$. Furthermore, for $k=1,\dots,K+1$ we let 
\begin{align}\label{eq:DefbLK+1}
\bL^{K+1}_k:=\left\{\ell\in\bL: \rho_{\mu_\ell}(X^{K+1}_k) < \rho_{\mu_\ell}(X^{K+1}_{k'}), k'\neq k\right\}
\end{align}
It follows that 
\begin{itemize}
\item for any $k\in\set{1,\dots,K}$, $\range(X^{K+1}_k)\subseteq\set{x_0,x_1,\dots,x_{2k}} \cup \left\{F^{-1}_{X^K_1}(u_1)-h^-, F^{-1}_{X^K_1}(u_4)+h^+\right\}$;
\item $\bL^{K+1}_k$ is nonempty for $k=1,\dots,K+1$ due to the construction above, and $\bL=\bigcup_{k=1}^{K+1} \bL^{K+1}_k$;
\item for any $k,k'\in\set{1,\dots,K+1}$ with $k\neq k'$ and  $\ell\in\bL^{K+1}_k$, we have $\rho_{\mu_\ell}(X^{K+1}_k) < \rho_{\mu_{\ell} }(X^{K+1}_{k'})$ by \eqref{eq:DefbLK+1};
\item $\Delta^K:=\min_{k,k'\in\set{1,\dots,K+1},\,k\neq k',\ell\in\bL^K_k }\big|\rho_{\mu_{\ell}}(X^{K+1}_k)-\rho_{\mu_{\ell}}(X^{K+1}_{k'})\big|>0$ due to induction hypothesis, \eqref{eq:ConstrEpsilonDeltaK1ell} and \eqref{eq:CompareYZX}.
\end{itemize}
Note that the construction above introduces $2$ more elements to $\bigcup_{k=1}^{K}\range(X^{K}_k)$. After $L$ iterations, we obtain a partition of $\bL$ consists of singletons only. Constructing $\bX$ and $G$ accordingly, we conclude the proof.

\subsection{Supplementary settings and proof of Theorem~\ref{thm:Separation2}}\label{subsec:ProofSeparation2}

We first introduce the notation and key distinctions from the single-period case, particularly the incorporation of a discount factor. After this, we then presents the proof of Theorem \ref{thm:Separation2}, the identifiability for the infinite-horizon setting.
\paragraph{Risk aversion modeling.}
We consider a stationary infinite-horizon setting. Let $\cM(\bX)$ be the set of transition matrices on a Markov chain $\bX$. Below we introduce the client's evaluation process of the loss processes related to $M\in\cM(\bX)$ under the hypothetical risk aversion $(c,\mu,\gamma)\in(0,1)\times\cP([0,1])\times(0,1)$.
\begin{definition}\label{def:DRM}
We first introduce ${\bT}^{M}_{c,\mu,\gamma}:\ell^{\infty}(\bX)\to\ell^{\infty}(\bX)$ by how it acts on functions $u\in\ell^{\infty}(\bX)$ as follows\footnote{$\ell^\infty(\bX)$ stands for the set of bounded real valued functions on $\bX$.}   
\begin{align*}
\bT^{M}_{c,\mu,\gamma} u(x) := C(x) + \gamma\,\rho_{\mu}\big(u\left(X_{M^x}\right)\big), \quad X_{M^x}\sim M^x
\end{align*}
where $M^x$ represents the transition probability given the current state $x$. For $\tau\in\bN$, we then let\footnote{$\mathbf 0$ stands for a function that is constantly $0$.}
\begin{align*}
\varrho^{\tau}_{c,\mu,\gamma}\left(x,M\right) := \underbrace{{\bT}^{M}_{c,\mu,\gamma} \circ \dots \circ {\bT}^{M}_{c,\mu,\gamma}}_{\tau \text{ times}}\, \mathbf{0} (x),
\end{align*} 
and $\varrho_{c,\mu,\gamma}\left(x,M\right):=\displaystyle\lim_{\tau\to\infty}\varrho^{\tau}_{c,\mu,\gamma}(x,M)$. Finally, for an initial distribution $m\in\cP(\bX)$, with a slight abuse of notation, we define
\begin{align}
\varrho_{c,\mu,\gamma}(m,M) := \rho_\mu\big( \varrho_{c,\mu,\gamma}(X_m,M) \big), \quad X_m\sim m.
\end{align}
\end{definition}
It follows from the monotonicity and translation invariance of coherent risk measure \cite[Section 6.3]{Shapiro2021book} that the limit defining $\varrho_{c,\mu,\gamma}(x,{M})$ is valid and $\varrho_{c,\mu,\gamma}(x,{M}) \le (1-\gamma)^{-1}\|C\|_\infty$. This type of performance criteria was first proposed by \cite{Ruszczynski2010Risk} and is now widely used in many disciplines, such as finance and autonomous robotics \citep[cf.][and the reference therein]{coache2023conditionally, Wang2022Risk}. It is known \citep{Ruszczynski2010Risk} that $x\mapsto\varrho_{c,\mu,\gamma}(x,{M})$ is the unique solution of the following fixed-point equation below unknown $V \in\ell^\infty(\bX)$,
\begin{align}\label{eq:Evaluation}
V (x) = {\bT}^{{M}}_{c,\mu,\gamma} V(x) = C(x) + \gamma  \rho_\mu\big(V\left(X_{M^x}\right)\big), \quad x\in\bX.
\end{align}
\begin{remark}
Constant discount factors may be limited in modeling human risk aversion. It is believed that people typically adjust their discounting when facing different levels of uncertainty in the future. This belief is reflected in, for example, \cite{Zhao2020Ambiguity}.
\end{remark}

\paragraph{Interactive questioning.} We consider a similar iterative scheme as in the one period case.
% Let $(c_0,\mu_0,\gamma_0)$ represents the risk aversion of the client.
At the $n$-th round of interactions, the learner presents the client with a binary-choice question characterized by $H_n\in\big(\cP(\bX)\times\cM(\bX)\big)^\bA$. More precisely, by using $a\in\set{\text A, \text B}$ as a dummy variable for choices, we let $H^a_n=(m^a_n,M^a_n)\in\cP(\bX)\times\cM(\bX)$ represent the initial distribution and transition matrix of a stationary infinite-horizon cash flow. The client then returns the optimal choice based on his or her risk aversion:
\begin{align}
a^*_n \in \argmin_{a\in\bA} \varrho_{c_0,\mu_0,\gamma_0}(H^a_n).
\end{align}
As discussed in the one-period case, this formulation assumes that the agent is an expert and always chooses the best option, which may not be true in practical situations.

\paragraph{Proof.} We are now ready to prove Theorem~\ref{thm:Separation2}.

\begin{remark}\label{rem:ComplexForm}
The proof below indicates that distinguishing binary-choice questions may require multiple formats, suggesting increased complexity in question design. As a consequence, the indifference curve from the static case, introduced in \eqref{eq:Defg}, may need to be generalized in more complex settings; for instance, an indifference surface. The convergence rate with design is elusive.
\end{remark}

\begin{proof}[Proof of Theorem~\ref{thm:Separation2}]
The proof proceeds in two cases. First, we consider the scenario where $(c_0,\mu_0)\neq(c',\mu')$ and $\gamma_0, \gamma'$ may or may not differ. In this case, we construct a distinguishing binary-choice question based on a state-homogeneous process, leveraging the finite-horizon result from Theorem~\ref{thm:Separation}. Second, we address the case where $(c_0,\mu_0)=(c',\mu')$ but $\gamma_0\neq \gamma'$, employing a construction that features an absorption state.\\

\textbf{Case 1.} In the first case, we suppose $(c_0,\mu_0)\neq(c',\mu')$ whereas $\gamma_0$ and $\gamma'$ may or may not be the same. In this case, we set $H_{p,q}$ with the components below
\begin{align}
m^{\text{A}}_{p,q} = \!\!\!\!\bordermatrix{ ~ & x_0 & x_1 & x_2 \cr ~ & 1-p & p & 0 \cr}, \quad M^{\text{A}}_{p,q} = \bordermatrix{~ & x_0 & x_1 & x_2 \cr x_0 & 1-p & p & 0 \cr x_1 & 1-p & p & 0 \cr x_2 & \bullet & \bullet & \bullet \cr},\label{eq:InfHorStateHomo1}\\
m^{\text{B}}_{p,q} = \!\!\!\!\bordermatrix{ ~ & x_0 & x_1 & x_2 \cr ~ & 1-q & 0 & q \cr}, \quad M^{\text{B}}_{p,q} = \bordermatrix{~ & x_0 & x_1 & x_2 \cr x_0 & 1-q & 0 & q \cr x_1 & \bullet & \bullet & \bullet \cr x_2 & 1-q & 0 & q \cr}.\label{eq:InfHorStateHomo2}
\end{align}
Here, we use $\bullet$ to omit the transition distribution in $M^{\text{A}}_{p,q}$ given $x_2$ (resp. $M^{\text{B}}_{p,q}$ given $x_1$), as $x_2$ (resp. $x_1$) is never visited for choice $\text{A}$ (resp. $\text{B}$).
% Note that for choice $\text{A}$, $x_2$ will never be visited, thus the transition distribution in $M^{\text{A}}_{p,q}$ given $x_2$ need not be specified. Similarly, the transition distribution in $M^{\text{B}}_{p,q}$ given $x_1$ is omitted.
Note that the transition is stationary and space-homogeneous. It follows from Definition~\ref{def:DRM} and \eqref{eq:Exprrho} that 
\begin{gather}
\varrho_{c_0,\mu_0,\gamma_0}\big(m^{\text{A}}_{p,q}, M^{\text{A}}_{p,q}\big) = \frac{c}{1-\gamma_0} \int_{1-p}^1\sigma_{\mu_0}(\alpha)\dif\alpha,\quad \varrho_{c_0,\mu_0,\gamma_0}\big(m^{\text{B}}_{p,q}, M^{\text{B}}_{p,q}\big) = \frac{1}{1-\gamma_0} \int_{1-q}^1\sigma_{\mu_0}(\alpha)\dif\alpha,\\
\varrho_{c',\mu',\gamma'}\big(m^{\text{A}}_{p,q}, M^{\text{A}}_{p,q}\big) = \frac{c'}{1-\gamma'} \int_{1-p}^1\sigma_{\mu'}(\alpha)\dif\alpha,\quad \varrho_{c',\mu',\gamma'}\big(m^{\text{B}}_{p,q}, M^{\text{B}}_{p,q}\big) = \frac{1}{1-\gamma'} \int_{1-q}^1\sigma_{\mu'}(\alpha)\dif\alpha.
\end{gather}
Observe that preference under $(c_0,\mu_0,\gamma_0)$ only relies on the comparison between $c_0\int_{1-p}^1\sigma_{\mu_0}(\alpha)\dif\alpha$ and $\int_{1-q}^1\sigma_{\mu_0}(\alpha)\dif\alpha$, which is independent of $\gamma$. The same is true for $(c',\mu',\gamma')$. This allows us to directly invoke Theorem~\ref{thm:Separation} for a distinguishing $(p,q)$. 

\textbf{Case 2.} In the second case, we suppose $(c_0,\mu_0)=(c',\mu')$ but $\gamma_0\neq\gamma'$. In this case, we set up $H_{p,q}$ with the components below
\begin{gather}
m^{\text{A}}_{p,q} = \!\!\!\!\bordermatrix{ ~ & x_0 & x_1 & x_2 \cr ~ & 1-p & p & 0 \cr}, \quad M^{\text{A}}_{p,q} = \bordermatrix{~ & x_0 & x_1 & x_2 \cr x_0 & 1 & 0 & 0 \cr x_1 & 1-p & p & 0 \cr x_2 & \bullet & \bullet & \bullet \cr},\\
m^{\text{B}}_{p,q} = \!\!\!\!\bordermatrix{ ~ & x_0 & x_1 & x_2 \cr ~ & 1-q & 0 & q \cr}, \quad M^{\text{B}}_{p,q} = \bordermatrix{~ & x_0 & x_1 & x_2 \cr x_0 & 1 & 0 & 0 \cr x_1 & \bullet & \bullet & \bullet \cr x_2 & 1-q & 0 & q \cr}.   
\end{gather}
In view of \eqref{eq:Evaluation}, we calculate $\varrho_{c,\mu,\gamma}\left(x_1, M^{\text{A}}_{p,q}\right)=:u$ by solving the following equation,
\begin{align}\label{eq:u}
u = C_0(x_1) + \gamma \rho_{\mu_0}\left(\varrho_{c_0,\mu_0,\gamma_0}\left(X_{{M}^{\text{A},x_1}_{p,q}}, {M}^{\text{A}}_{p,q}\right)\right) = c_0 + \gamma_0 u \int_{1-p}^1 \sigma_{\mu_0}(\alpha)\dif\alpha,
\end{align}
where the second equality follows from \eqref{eq:DefSRMOG} along with the settings that $C_0(x_1)=c_0$, $\varrho_{c_0,\mu_0,\gamma_0}(x_0,{M}^{\text{A}}_{p,q})=0$, and state $x_2$ will not be visited. It follows that
\begin{align}\label{eq:Idenrhopi}
\varrho_{c_0,\mu_0,\gamma_0}\left(x_1, M^{\text{A}}_{p,q}\right) = \left(1-\gamma_0\int_{1-p}^1 \sigma_{\mu_0}(\alpha)\dif\alpha\right)^{-1} c_0.
\end{align}
Furthermore, by \eqref{eq:DefSRMOG} again,
\begin{align}
\varrho_{c_0,\mu_0,\gamma_0}\left(m^{\text{A}}_{p,q}, M^{\text{A}}_{p,q}\right) = \frac{c_0\int_{1-p}^1\sigma_{\mu_0}(\alpha)\dif\alpha}{1-\gamma_0\int_{1-p}^1 \sigma_{\mu_0}(\alpha)\dif\alpha}.
\end{align}
With a similar calculation, we yield
\begin{align}
\varrho_{c_0,\mu_0,\gamma_0}\left(m^{\text{B}}_{p,q}, M^{\text{B}}_{p,q}\right) = \frac{\int_{1-q}^1\sigma_{\mu_0}(\alpha)\dif\alpha}{1-\gamma_0\int_{1-q}^1 \sigma_{\mu_0}(\alpha)\dif\alpha}.
\end{align}
The analogue is true for $(c_0,\mu_0,\gamma')$. Since $(c_0,\mu_0)=(c',\mu')$, it is sufficient to consider the substitution
\begin{align}
\tilde p := \int_{1-p}^1\sigma_{\mu_0}(\alpha)\dif\alpha \quad\text{and}\quad \tilde q:=\int_{1-q}^1\sigma_{\mu_0}(\alpha)\dif\alpha,
\end{align}
and investigate the following indifference curves for $\tilde p\in[0,1]$:
\begin{gather}
\tilde g_0(\tilde p) := \inf\left\{ \tilde q\in[0,1]: \frac{\tilde q}{1-\gamma_0 \tilde q} > \frac{c_0\tilde p}{1-\gamma_0\tilde p}  \right\} =  \frac{c_0\tilde p}{1-(1-c_0)\gamma_0 \tilde p},\label{eq:InfHorg1}\\
\tilde g'(\tilde p) := \inf\left\{ \tilde q\in[0,1]: \frac{\tilde q}{1-\gamma' \tilde q} > \frac{c_0\tilde p}{1-\gamma'\tilde p}  \right\} =  \frac{c_0\tilde p}{1-(1-c_0)\gamma' \tilde p}.\label{eq:InfHorg2}
\end{gather}
It is clear that the two indifference curves do not intercept except when $\tilde p=0$. The distinguishing $(\tilde p,\tilde q)$, and thus $(p,q)$, can be constructed accordingly.
\end{proof}

\end{document}